\newcommand\algorithmicprocedure{\textbf{Procedure}}
\newcommand{\algorithmicendprocedure}{\algorithmicend\ \algorithmicprocedure}
\newcommand\PROCEDURE[3][default]{%
  \ALC@it
  \algorithmicprocedure\ \textsc{#2}(#3)%
  \ALC@com{#1}%
  \begin{ALC@prc}%
}
\newcommand\ENDPROCEDURE{%
  \end{ALC@prc}%
  \ifthenelse{\boolean{ALC@noend}}{}{%
    \ALC@it\algorithmicendprocedure
  }%
}
\newenvironment{ALC@prc}{\begin{ALC@g}}{\end{ALC@g}}
\newenvironment{theorem}[2][Theorem]{\begin{trivlist} \em
\item[\hskip \labelsep {\bfseries #1}\hskip \labelsep {\bfseries #2.}]}{\end{trivlist}}
\newenvironment{remark}[2][Remark]{\begin{trivlist} \em
\item[\hskip \labelsep {\bfseries #1}\hskip \labelsep {\bfseries #2.}]}{\end{trivlist}}
\newenvironment{lemma}[2][Lemma]{\begin{trivlist} \em
\item[\hskip \labelsep {\bfseries #1}\hskip \labelsep {\bfseries #2.}]}{\end{trivlist}}
\newenvironment{definition}[2][Definition]{\begin{trivlist} \em \item[\hskip \labelsep {\bfseries #1}\hskip \labelsep {\bfseries #2.}]}{\end{trivlist}}
\newenvironment{proposition}[2][Proposition]{\begin{trivlist} \em \item[\hskip \labelsep {\bfseries #1}\hskip \labelsep {\bfseries #2.}]}{\end{trivlist}}
\title{Best arm identification in rare events}
\author{
 Anirban Bhattacharjee \\
  School of Technology and Computer Science\\
  Tata Institute of Fundamental Research\\
  Mumbai, India \\
  \texttt{anirban.bhattacharjee\_182@tifr.res.in}
  %% examples of more authors
   \And
 Sushant Vijayan \\
  School of Technology and Computer Science\\
  Tata Institute of Fundamental Research\\
  Mumbai, India \\
  \texttt{sushant.vijayanq@tifr.res.in} \\
  \And
 Sandeep K Juneja \\
  School of Technology and Computer Science\\
  Tata Institute of Fundamental Research\\
  Mumbai, India \\
  \texttt{juneja@tifr.res.in} \\
  %% \AND
  %% Coauthor \\
  %% Affiliation \\
  %% Address \\
  %% \texttt{email} \\
  %% \And
  %% Coauthor \\
  %% Affiliation \\
  %% Address \\
  %% \texttt{email} \\
  %% \And
  %% Coauthor \\
  %% Affiliation \\
  %% Address \\
  %% \texttt{email} \\
}
\begin{document}
\maketitle
\begin{abstract}
We consider the best arm identification problem in the stochastic
multi-armed bandit framework where 
each arm has a tiny probability of realizing large rewards while
with overwhelming probability the reward is zero. 
A key application of this framework is in online advertising where
click rates of advertisements could be a fraction of a single 
percent
and final conversion to sales, while highly profitable, may again be a small fraction of the click rates. Lately, algorithms for BAI problems have been developed
that minimise sample complexity 
while providing statistical guarantees on the correct arm selection. As we observe, these algorithms can be computationally 
prohibitive. We exploit the fact that the reward process for each arm
is well approximated by a Compound Poisson process
to arrive at algorithms that are faster,
with a small increase in sample complexity.
We analyze the problem in an asymptotic regime as rarity of reward occurrence reduces
to zero, and reward amounts increase to infinity. This  helps illustrate the benefits of the proposed algorithm. 
It also sheds light on the underlying structure of the
optimal BAI algorithms in the rare event setting.
%In e-commerce, online advertisers face the issue of selecting a product-ad from a given collection of them, such that the probability of making a sale is maximised. In these settings, the click rates on ads are small ($\sim$ once per thousand views) and the conversion to sales is a tiny percentage ($\sim 2 \%$)  of the click rates. We model this situation as a Best Arm Identification (BAI) problem, where the arms return high positive rewards with a minuscule probability, and no reward with a very high probability. At a more general level, this problem has been extensively studied in bandit literature (as fixed-confidence BAI). To identify the best arm with high probability, algorithms that are asymptotically optimal (track-\&-stop and variants thereof) have been proposed. However, these algorithms iteratively solve a maxmin problem that is computationally expensive. We show in this paper that in the regime of rarely occuring rewards, it is possible to use Poisson-like approximations to drastically reduce computation time, at the cost of a negligibly higher expected sample complexity.
\end{abstract}

% keywords can be removed
%\keywords{First keyword \and Second keyword \and More}

\section{Introduction}

Online advertising is ubiquitous in present times, and is used by e-commerce platforms, mobile application developers, marketing professionals etc. Typically, an online advertiser has to decide amongst various product advertisements and choose the one with highest expected reward. Advertisers typically have a period of experimentation where they sequentially show  competing advertisements to the users to arrive at advertisements that elicit 
best response from each customer type (customers maybe clustered based on available information).  
\\
\\
A key feature of online advertising 
is that  while each advertisement maybe shown to a large number
of customers, the click rates on advertisements are usually small.
Typically, these maybe of order one  in a thousand \footnote{https://cxl.com/guides/click-through-rate/benchmarks/}, and a very small percentage \footnote{https://localiq.com/blog/search-advertising-benchmarks/.} of the users who click on an advertisement  end up buying the product (known as the conversion rate). The conversion and click rates can vary significantly depending on the product category. For example, high-end products often have higher click rates but much lower conversion rates compared to standard products. Thus, a key characteristic of the problem is that rarer conversion rates often have very high rewards. 

We study the problem of identifying the best advertisement to show to a customer type
as a best arm identification (BAI) problem in the multi-armed bandit framework.
The rarity of the reward probabilities, and the fact advertisements 
are  shown to a large number of customers, may make the computational effort 
of popular existing adaptive algorithms prohibitive. On the other hand, 
these properties call for sensible aggregation
based algorithms. In this paper, we observe that the rewards from large number 
of pulls from each arm
can be well modelled as a Compound Poisson process, significantly 
simplifying and speeding up the existing {\it optimal} algorithms. 
%In such a setting, the advertiser wants to quickly identify the product with the highest expected revenue.
\\
\\
To illustrate the proposed ideas clearly, we consider a simple stochastic 
BAI problem where
agent is given a set of $K$ unknown probability distributions (arms) 
that can be sampled sequentially.  The agent's objective is to declare the arm with the highest mean with a pre-specified confidence level $1- \delta$, while minimizing the expected number of samples (sampling complexity). In the literature, this is popularly 
known as the fixed-confidence setting, and the algorithms
that provide $1-\delta$ confidence guarantees
are referred to as $\delta$-correct.
\\
\\
Best arm identification problems
are also popular in simulation community
where these are better known as  ranking and selection problems (for example see \cite{goldsman1983ranking,chan2006sequential}).
Classical problem involves many complex simulation models
of practical systems such as supply chain design, traffic network and so
on, and the aim is to identify with high probability, the system
with the highest  expected reward, using minimum computational budget. In many systems, the performance measure of interest may correspond
to a rare event, e.g., a manufacturing plant shut down probability, 
or computer system unavailability fraction. The algorithms that we
propose here are also applicable in optimal computational resource allocation
in simulating such systems. 

\noindent {\bf Related literature:} 
In the learning theory literature, \cite{even2006action} were amongst the first to consider the 
fixed confidence BAI problem. They proposed a  successive elimination algorithm
 (see section F of supplementary material). Upper Confidence Bound (UCB) based algorithms were proposed in \cite{auer2002finite,jamieson2014lil}, wherein the arm with highest confidence index is sampled. These algorithms usually stop when the difference between arm indices breaches a certain threshold (see \cite{jamieson2014best} for more details).  Sample complexity of these algorithms was
 shown to match the lower bound within a constant. Motivated by Bayesian approaches in \cite{russo2016simple}, \cite{jourdan2022top} proposes top-two algorithms that propose a challenger to the current empirical best arm and sample between the challenger and the empirical best arm with a pre-defined probability $\beta$. Although these algorithms are $\beta$-optimal \footnote{see \cite{jourdan2022top} for definition} they are not known to be asymptotically optimal in the sense defined in \cite{kaufmann2016}.
 The sample complexity of these  algorithms is typically analyzed in an asymptotic
 regime where $\delta \rightarrow 0$.  
\cite{kaufmann2016} and \cite{kaufmann2016a} derived a more general lower bound (as a maxmin formulation) on the sample complexity. Based on this lower bound a Track-and-Stop algorithm (TS) was proposed for
arm distributions restricted to single parameter exponential families (SPEF), and was shown to match the lower bound even to a constant  (as $\delta \to 0$).\cite{shubhada2019,agrawal2020optimal} extended the TS algorithms to more general distributions. The optimal TS algorithms in the literature, 
proceed iteratively. At each iteration, the observed empirical parameters are plugged into the lower bound max-min problem to arrive at prescriptive optimal sample allocations to each arm, that then guide the sample allocations. 
As is known, and as we observe, these algorithms are computationally prohibitive, especially since in our rare advertising settings, the informative non-zero reward samples (those instances where users buy products) are rare. This motivates the paper's goal to arrive at computationally efficient algorithms that exploit the Compound Poisson structure of the arm reward process, with a small increase in sample complexity.
%\\
%\\ 
%Let us take a closer look at the online advertising example. For the sake of simplicity, let us assume that the advertiser is to place an advertisement for a particular kind of product, and wants to decide on the brand whose product should be advertised. Suppose that the advertiser has to choose between a commonly found brand and a ``luxury" brand. They must also keep in mind that the manufacturers make slight alterations to their product prices at different times. This is captured by a two-armed bandit problem where one arm is, say, a hundred times rarer than the other, but the latter arm also returns about a hundred times more revenue when a sale is made. The advertiser also wants to account for the slight price variations from the manufacturer's end, which would ultimately determine how much revenue is obtained.
\\
\\
\noindent
{\bf Contributions:} 
We develop a rarity framework where the reward success probabilities 
are modelled as a function of $\gamma^{\alpha}$ for arm dependent $\alpha >0$
and  $\gamma$ is $ >0$ and small. The rewards are modelled 
to be of order $\gamma^{-\alpha}$ so that the expected rewards across 
arms are comparable (otherwise, we a-priori know arms with small or large expected rewards). We assume that arm specific upper bounds on rewards are available to us.   In this framework, we propose a computationally efficient $\delta$-correct algorithm that is nearly asymptotically optimal for small $\gamma$. This algorithm (Approximate Track and Stop) is based on existing track and stop algorithms that are simplified through
a Compound Poisson approximation to the bandit reward process. 
The Poisson approximation can be seen to be  tight as $\gamma \rightarrow 0$
and we provide bounds on the deviations due to Poisson approximation.
 Further, we give an asymptotically valid upper bound on the sample complexity illustrating that the increase in sample complexity is marginal compared to the computational benefit.
The rarity structure helps us shed further light on the optimal 
sample allocations across arms in our BAI problem. 
 We identify five different regimes depending on the rarity differences between the arms.
Finally, we compare experimentally with the TS algorithm in \cite{agrawal2020optimal} for bounded random rewards. We find that for realistic rare event probabilities and reward structure, our algorithm is 6-12 times faster than the TS algorithm with a small increase (1-13 \%) in sample complexity.
\\
\\
The rest of the paper is organized as follows: 
 Section \ref{setup} formally introduces the problem, rare event setting and provides some background material. Section \ref{approx_prob} introduces the approximate problem, analyzes its deviations from the exact problem and gives the optimal weight asymptotics, Section \ref{approx_algo_section} outlines the details of the Approximate Track and Stop (TS(A)) algorithm, $\delta$-correctness, sample complexity guarantee and computational benefits of the algorithm. Section \ref{num_experimental_section} presents some experimental results and we conclude in Section 6. The proofs of various results and further technical details are furnished in the supplementary material.

\section{Modelling Framework}\label{setup}

%Talk about best arm problem, define delta correct algo, describe sampling rule and stopping rule here itself
%mention finite support
Consider a $K$-armed bandit with each arm's distribution denoted by $p_i$, $i \in [K]$. We denote such a bandit instance by $p$. 
For any distribution $\eta$, let $\mu(\eta)$ denote its mean and 
$\textrm{supp}(\eta)$ denote its support. 
Further,  
let $KL(\eta,\kappa)
=\mathbb{E}_{\eta} \log \left (\frac{d \eta}{d \kappa} \right ) $ denote the Kullback-Leibler divergence between two measures $\eta$ and $\kappa$,
where $E_{\eta}$ denotes the expectation operator under $\eta$.
We  assume that $\textrm{supp}(p_i)$ is finite for each $i$.
Further, this set may not be known to the agent. 
However, there is a lower bound 0 and an upper bound $B_i$ for $\textrm{supp}(p_i)$  and that is known to the agent. 
The agent's goal is to sequentially sample from these arms
using a policy that at any sequential step
$t$, may depend upon all the generated data before time $t$. 
The policy then stops at a random stopping time
and declares an arm that it considers to have the highest mean. 
A sampling strategy, a stopping rule and a recommendation rule are together called a best arm bandit algorithm. A best arm bandit algorithm that correctly recommends the arm with the highest mean with probability at least $1-\delta$ (for a pre-specified $\delta \in (0,1)$) is said to be $\delta$-correct.

%The reason for these assumptions will be made clear in section \ref{rare_event_setup}. Lastly, we shall use $\mu_i$ to represent the expected reward of the $i$th arm, i.e., $\mathbb{E}_{X \sim p_i}(X) = \mu_i$.
This BAI problem has been well studied, and lower bounds on sample complexity under $\delta$-correct algorithms 
have been developed along with algorithms
that match the lower bound 
asymptotically as $\delta \rightarrow 0$. Below, we first state the lower bound in 
Theorem~\ref{lower_bound_theorem},
and then briefly outline an algorithm
that asymptotically matches it. 
The lower bounds were developed by \cite{kaufmann2016}) for single parameter exponential family 
of distributions and were  generalized to bounded and 
heavy-tailed distributions by \cite{agrawal2020optimal}.
Let 
\begin{equation}
\mathcal{K}^{L, B}_{inf}(\eta,x) \coloneqq \min_{\substack{\textrm{supp}(\kappa) \subseteq [0, B] \\ \mu_\kappa \leq x}} KL(\eta,\kappa)
\end{equation}

\begin{equation}
\mathcal{K}^{U, B}_{inf}(\eta,x) \coloneqq \min_{\substack{\textrm{supp}(\kappa) \subseteq [0, B] \\ \mu_\kappa \geq x}} KL(\eta,\kappa).
\end{equation}

Henceforth, we suppress the dependence on $B$ above 
to ease the presentation. This should not cause confusion 
in the following discussion. For brevity, we'll denote $\mu_{p_i}$ by $\mu_i$ for each $i \in [K]$. As is customary
in the BAI literature, we assume that best arm is unique
and without loss of generality, $\mu_1 > \mu_i$ for
$i \in [K] \backslash \{1\}$.

\begin{theorem}{5 in \cite{agrawal2020optimal}}\label{lower_bound_theorem}
For our bandit problem, any $\delta$-correct algorithm with stopping rule $\tau_\delta$, satisfies 
\begin{equation*}
\mathbb{E}[\tau_\delta] \geq \frac{1}{V^{*}(p)} \log\Big(\frac{1}{2.4\delta}\Big),
\end{equation*}
where $V^{*}(p)$ equals
\begin{equation}\label{sample_lower_bound}
%V^*(p) \coloneqq 
\max_{w \in \Sigma_K} \min_{i\neq 1} \inf_{x \in [\mu_i,\mu_1]}w_1\mathcal{K}^{L}_{inf}(p_1,x)+w_i \mathcal{K}^{U}_{inf}(p_i,x),
\end{equation}
$\Sigma_K$ being the $K$-dimensional probability simplex.
\end{theorem}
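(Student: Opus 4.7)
The plan is to follow the standard change-of-measure lower bound argument pioneered by \cite{kaufmann2016} and extended to bounded distributions by \cite{agrawal2020optimal}. The starting point is a transportation inequality for $\delta$-correct algorithms: for any alternative bandit instance $q=(q_1,\dots,q_K)$ whose best arm differs from that of $p$, and any $\delta$-correct stopping rule $\tau_\delta$,
\begin{equation*}
\sum_{i=1}^K \mathbb{E}_p[N_i(\tau_\delta)] \, KL(p_i,q_i) \;\geq\; kl(\delta,1-\delta) \;\geq\; \log\!\Big(\tfrac{1}{2.4\delta}\Big),
\end{equation*}
where $N_i(\tau_\delta)$ is the number of times arm $i$ is pulled by stopping time, and $kl(\cdot,\cdot)$ is binary relative entropy. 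This follows from the standard data-processing inequality applied to the event $\{\hat{\imath}=1\}$ (recommended arm equals arm~$1$) combined with Wald's lemma for the log-likelihood ratio.

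Next I would parametrize the alternatives. For each suboptimal arm $i\neq 1$, consider alternatives $q$ in which arm $i$ becomes optimal, leaving $q_j=p_j$ for $j\notin\{1,i\}$. A confusing alternative then only needs to satisfy $\mu_{q_1}\leq x$ and $\mu_{q_i}\geq x$ for some common $x\in[\mu_i,\mu_1]$, while keeping the supports inside $[0,B_1]$ and $[0,B_i]$ respectively. The transportation inequality becomes
\begin{equation*}
\mathbb{E}_p[N_1(\tau_\delta)]\,KL(p_1,q_1)+\mathbb{E}_p[N_i(\tau_\delta)]\,KL(p_i,q_i)\;\geq\;\log\!\Big(\tfrac{1}{2.4\delta}\Big).
\end{equation*}
Infimizing over admissible $(q_1,q_i)$ for fixed $x$ produces exactly $\mathcal{K}^L_{inf}(p_1,x)$ and $\mathcal{K}^U_{inf}(p_i,x)$ by the definitions given in the excerpt, and then infimizing over $x\in[\mu_i,\mu_1]$ gives the tightest such confusing alternative.

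Finally, I would divide through by $\mathbb{E}_p[\tau_\delta]$ and use Wald's identity $\sum_i\mathbb{E}_p[N_i(\tau_\delta)]=\mathbb{E}_p[\tau_\delta]$, so the normalized allocation $w_i\coloneqq\mathbb{E}_p[N_i(\tau_\delta)]/\mathbb{E}_p[\tau_\delta]$ lies in $\Sigma_K$. Combining the previous two displays yields, for every $i\neq 1$,
\begin{equation*}
\frac{1}{\mathbb{E}_p[\tau_\delta]}\log\!\Big(\tfrac{1}{2.4\delta}\Big)\;\leq\;\inf_{x\in[\mu_i,\mu_1]}\!\big(w_1\mathcal{K}^L_{inf}(p_1,x)+w_i\mathcal{K}^U_{inf}(p_i,x)\big).
\end{equation*}
Taking the minimum over $i\neq 1$ and upper-bounding the specific allocation $w$ induced by the algorithm by the maximizing $w\in\Sigma_K$ yields $V^*(p)$, and rearranging gives the desired inequality.

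The main obstacle is the reduction to the $\mathcal{K}_{inf}$ form: one must verify that restricting attention to alternatives that change only $q_1$ and $q_i$ (rather than all $K$ coordinates) is without loss of generality in the lower bound, and that the support constraint $[0,B_j]$ is the correct feasibility set inherited from the agent's prior knowledge of $B_j$. This is a convex-analysis argument — the inner infimum in $x$ makes the joint minimization over $(q_1,q_i)$ separable, and one uses that the confusing set is exactly $\bigcup_{i\neq 1}\{q:\mu_{q_i}\geq\mu_{q_1}\}$, which admits the parametric description above. The remaining steps are routine given the transportation lemma and Wald's identity.
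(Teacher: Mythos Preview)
The paper does not itself prove this statement: it is quoted verbatim as ``Theorem 5 in \cite{agrawal2020optimal}'' and used as background for the lower bound problem, with no accompanying proof in the body or the supplementary material. Your sketch is the standard change-of-measure argument (transportation lemma of \cite{kaufmann2016}, reduction to two-arm alternatives, infimization over the mean constraint to obtain the $\mathcal{K}_{inf}$ quantities, normalization of pull counts to a point in $\Sigma_K$, then max over $w$), which is indeed the route taken in \cite{agrawal2020optimal}; so there is nothing in the present paper to compare it against beyond the citation.

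One small correction to your outline: you do not need to ``verify that restricting to alternatives changing only $q_1$ and $q_i$ is without loss of generality.'' The inequality $\sum_j \mathbb{E}_p[N_j(\tau_\delta)]\,KL(p_j,q_j)\geq \log(1/2.4\delta)$ holds for \emph{every} confusing $q$, so choosing $q_j=p_j$ for $j\notin\{1,i\}$ is simply a particular (and convenient) choice that already yields the displayed two-term bound; no separate argument is needed to justify it as a reduction.
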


%add comments on batch size etc.
Optimal track and stop (TS) algorithms in the literature that match the lower
bound asymptotically as $\delta \rightarrow 0$
briefly involve the following features
(see, \cite{kaufmann2016}, \cite {agrawal2020optimal}, \cite{agrawal2021optimal}
    for details and justification of such track and stop algorithms. We also discuss
    existing algorithms further in Section F of supplementary material.)
\begin{enumerate}
    \item 
    Arms are sampled sequentially in batches. 
    At stage
    $t$, each arm is sampled at least
    order $\sqrt{t}$ times (this sub linear 
    exploration ensures that no arm is starved).
    \item
    Empirical distributions $\hat{p}_t$
    are plugged into the lower bound 
    that is solved to determine 
    the prescriptive proportions $\hat{w}_t$.
    \item
    The algorithm then samples to closely track these proportions.
    \item 
    The algorithm stops when the log-likelihood ratio
    at stage $m$ exceeds 
    a threshold $\beta(m,\delta)$ (set close to $\log(1/\delta)$).
At stage $m$, the log likelihood ratio equals
    \[
    \begin{aligned}
     \underset{b \neq k^*}
     {\min}\underset{x \leq y}{\inf} &N_{k^*}(m)\mathcal{K}_{inf}^{L}(\hat{p}_{k^*}(m),x) + N_{b}(m)\mathcal{K}_{inf}^{U}(\hat{p}_b(m),y),
    \end{aligned}
    \]
    where $k^*$ denotes the arm with the largest sample mean, 
    each $N_a(m)$ denotes the samples of arm $a$ amongst $m$ samples.  
\end{enumerate}

As is apparent, the above algorithm involves repeatedly solving 
 the lower bound problem, and this is computationally demanding, particularly when nonzero rewards are rare and occur with very low probabilities. 

\subsection{The Rare Event Setting}\label{rare_event_setup}

We now specialize the BAI setting  to  illustrate our rare event framework where the rewards
from each arm take positive values with small probabilities. Further, while
the expected rewards across arms are of the same order, the realized rewards and the associated
probabilities may be substantially different.
\\
\\
Concretely, suppose that $\gamma$ is a small positive value (say of order $10^{-2}$ or lower) and
corresponding to each arm distribution $p_i$, we have a rarity
index $\alpha_i >0$. The support of arm $i$ takes
values $a_{ij} \gamma^{-\alpha_i}$,  each with probability
$p_{ij} \gamma^{\alpha_i} >0$ for $j \leq n_i < \infty$. Under
each $p_i$, the realized reward takes value zero with probability close 
to 1. To summarize,
\begin{equation*}
\begin{aligned}
&\mathbb{P}_{X \sim p_i}(X=a_{ij}\gamma^{-\alpha_i}) = p_{ij}\gamma^{\alpha_i}, \ j \in [n_i] \\
&\mathbb{P}_{X \sim p_i}(X=0) = 1 - \sum_j p_{ij}\gamma^{\alpha_i}.
\end{aligned}
\end{equation*}
The arm means are given by $\mu_i=\sum_j a_{ij}p_{ij}$ and are independent of $\gamma$.
We further assume 
that an upper bound $B_i\gamma^{-\alpha_i}$ for each arm $i$ 
is known to the agent.

The above rarity framework brings out the benefits  of the proposed approximations cleanly for small $\gamma$ in our theoretical analysis. 
However, in executing the associated algorithm, we don't need to separately
know the values of $\gamma$ and each $\alpha_i$.

%In our analysis, there will be no loss of generality to assume that $\mu_1>\mu_2 \geq \dots \mu_K$. %The assumptions imply that support of each arm is a finite subset of points selected from $[0,B_i\gamma^{-\alpha_i}]$.

\subsection{The Poisson Approximation of KL Divergence}\label{poisson_punchline}
We motivate in this section the approximate form of KL divergence that we shall use. The following well-known result, shown in section A.5 of the supplementary material for completeness, is used to motivate our approximation.
\begin{proposition}{1}\label{poisson_approx_result}
Let $\tau^{(1)}_{ij}$ denote the minimum number of samples of arm $i$ needed to see the reward $a_{ij}\gamma^{-\alpha_i}$, i.e. the first arrival time of the support point $j$. Similarly, let $\tau^{(k)}_{ij}$ be the $k$-th arrival time of support point $j$, 
\\

Let $N_{ij}(t)$ be the number of times the reward $a_{ij}\gamma^{-\alpha_i}$ is returned by arm $i$ in $\lceil t\gamma^{-\alpha_i}\rceil$ trials ($t \in \mathbb{R}$). Then as $\gamma \to 0$,

\begin{enumerate}[label=(\alph*)]
    \item $\mathbb{P}(\tau^{(k)}_{ij} > t\gamma^{-\alpha_i}) \to e^{-p_{ij}t}$,
    %\item $\mathbb{P}(N_{ij}(t) = k) = \mathbb{P}(\sum_{l=1}^{k-1}\tau^{(l)}_{ij} \leq t < \sum_{l=1}^{k}\tau^{(l)}_{ij})$
    \item $N_{ij}(t)$ $\xrightarrow{D}$ $\mathrm{Poisson}$($p_{ij}t$). 
\end{enumerate}
Further for all support points, $\{\mathrm{Poisson}(p_{ij}t)\}_j$ is a collection of mutually independent random variables.
\end{proposition}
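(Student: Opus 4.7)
The plan is to recognize this as the discrete-time, multi-class version of the classical Poisson limit theorem (``law of rare events''), which follows by direct computation of the relevant pmfs and a short asymptotic analysis.

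For part (a), I first observe that under $p_i$ each trial independently returns support point $j$ with probability $p := p_{ij}\gamma^{\alpha_i}$, so $\tau^{(1)}_{ij}$ is Geometric$(p)$ with tail $\mathbb{P}(\tau^{(1)}_{ij} > n) = (1-p)^n$. Substituting $n = \lceil t\gamma^{-\alpha_i}\rceil$ and using $np \to p_{ij}t$ together with $(1-p)^{1/p}\to e^{-1}$ gives the stated limit $e^{-p_{ij}t}$ for $k=1$. For general $k$, $\tau^{(k)}_{ij}$ is a sum of $k$ iid Geometric$(p)$ inter-arrival times and hence Negative Binomial; scaling by $\gamma^{\alpha_i}$ yields the Erlang$(k,1)$ limit with tail $\sum_{m=0}^{k-1}\frac{(p_{ij}t)^m}{m!}e^{-p_{ij}t}$, which matches the stated $e^{-p_{ij}t}$ precisely when $k=1$, so I would flag the superscript as likely a typo for $(1)$ (or alternatively fix the right-hand side to the Erlang survival function).

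For part (b), note $N_{ij}(t)\sim\text{Binomial}\bigl(\lceil t\gamma^{-\alpha_i}\rceil,\,p_{ij}\gamma^{\alpha_i}\bigr)$, since each of the $\lceil t\gamma^{-\alpha_i}\rceil$ iid trials is Bernoulli with success probability $p_{ij}\gamma^{\alpha_i}$ for producing the value $a_{ij}\gamma^{-\alpha_i}$. As the number of trials tends to infinity, the success probability tends to zero, and their product converges to the finite constant $p_{ij}t$, the classical Poisson limit theorem for Binomials yields $N_{ij}(t)\xrightarrow{D}\text{Poisson}(p_{ij}t)$. For the joint independence claim, observe that $(N_{i1}(t),\ldots,N_{in_i}(t))$ together with the count of zero-reward trials is Multinomial with parameters $n=\lceil t\gamma^{-\alpha_i}\rceil$ and cell probabilities $(p_{i1}\gamma^{\alpha_i},\ldots,p_{in_i}\gamma^{\alpha_i},\,1-\sum_j p_{ij}\gamma^{\alpha_i})$. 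At any fixed tuple $(k_1,\ldots,k_{n_i})$ of nonnegative integers, the multinomial coefficient is asymptotic to $n^{\sum_j k_j}/\prod_j k_j!$; combining this with $n^{k_j}(p_{ij}\gamma^{\alpha_i})^{k_j}\to(p_{ij}t)^{k_j}$ and $\bigl(1-\sum_j p_{ij}\gamma^{\alpha_i}\bigr)^{n-\sum_j k_j}\to e^{-\sum_j p_{ij}t}$, the joint pmf converges to $\prod_j \frac{(p_{ij}t)^{k_j}}{k_j!}e^{-p_{ij}t}$, the product of independent Poisson pmfs; pointwise convergence of pmfs on $\mathbb{Z}_{\geq 0}^{n_i}$ gives convergence in distribution to the independent product.

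The only mildly delicate step is handling the ceiling $\lceil t\gamma^{-\alpha_i}\rceil$, which is absorbed by sandwiching between $\lfloor t\gamma^{-\alpha_i}\rfloor$ and $\lfloor t\gamma^{-\alpha_i}\rfloor+1$. Beyond this, the argument is essentially the textbook proof of Poisson convergence of the Binomial, lifted to the multinomial setting, and I do not anticipate any substantive obstacle.
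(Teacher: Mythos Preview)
Your proof is correct and, if anything, more careful than the paper's. The paper's own argument proceeds slightly differently: it observes that each inter-arrival time is geometric with parameter $p_{ij}\gamma^{\alpha_i}$ and therefore, after scaling by $\gamma^{\alpha_i}$, converges to an exponential with rate $p_{ij}$; it then invokes the standard ``iid exponential inter-arrivals $\Rightarrow$ Poisson counts'' fact to conclude (b). For the joint independence, the paper argues that the count of any \emph{union} of support points also satisfies a Poisson limit with rate equal to the sum of the individual rates, and then appeals to the moment generating function to deduce asymptotic independence. Your route---direct Binomial-to-Poisson for (b) and Multinomial-to-product-of-Poissons for the joint statement via pointwise pmf convergence---is more elementary and self-contained, avoiding both the renewal-process step and the mgf argument.

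On the $\tau^{(k)}_{ij}$ issue you flagged: you are right that if $\tau^{(k)}_{ij}$ denotes the $k$-th \emph{arrival} time (as stated in the proposition), the limiting survival function is the Erlang tail, not $e^{-p_{ij}t}$. The paper's proof quietly reinterprets $\tau^{(k)}_{ij}$ as the $k$-th \emph{inter-arrival} time, for which the geometric tail computation and the limit $e^{-p_{ij}t}$ are correct. So your diagnosis of the discrepancy is exactly on point.
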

This implies that in rare event setting, the distribution of the counting process $N_{ij}(t)$ for each support point $a_{ij}\gamma^{-\alpha_i}$ is well-approximated by a Poisson process. We now argue that when $\gamma$ is small enough, the KL divergence between arm distributions $p_i$ and $\tilde{p}_i$ of same rarity can be approximated by a sum of KL divergences between independent $\mathrm{Poisson}$ variables.\\
\\
 Let $X_{1:m}$ and $\tilde{X}_{1:m}$ be two sets of i.i.d samples of size $m$ from $p_i$ and $\tilde{p}_i$ respectively. The corresponding measures are the product measures $p_i^{\otimes m}$ and $\tilde{p}_i^{\otimes m}$ respectively. By the tensorization property of KL-divergence, we have that
\begin{equation}\label{tensor_KL}
KL\big(p_i^{\otimes m},\tilde{p}_i^{\otimes m}\big)=mKL(p_i,\tilde{p}_i)
\end{equation}
In the following discussion we set $m=\lceil t\gamma^{-\alpha_i}\rceil$. Consider the vector-valued random variable $(N_{ij}(t))_{j \in [n_i]}$ and its counterpart $(\tilde{N}_{ij}(t))_{j \in [n_i]}$ under $\tilde{p}_i$. Note that they are functions of the samples $X_{1:\lceil t\gamma^{-\alpha_i}\rceil},\tilde{X}_{1:\lceil t\gamma^{-\alpha_i}\rceil}$. Since we can also reconstruct a permutation of these samples from  \textbf{$({N}_{ij}(t))_j$},\textbf{$(\tilde{N}_{ij}(t))_j$}, we have that 
\begin{equation*}
\begin{aligned}
KL\big(p_i^{\otimes \lceil t\gamma^{-\alpha_i}\rceil},\tilde{p}_i^{\otimes \lceil t\gamma^{-\alpha_i}\rceil}\big) = KL\big(\nu((N_{ij}(t))_j),\nu((\tilde{N}_{ij}(t))_j)\big)
\end{aligned}
\end{equation*}
where $\nu(A)$ is the measure of a random variable $A$. Now, it can easily be shown from \hyperref[poisson_approx_result]{Proposition 1} that
\begin{equation*}
\begin{aligned}
&KL\big(p_i^{\otimes \lceil t\gamma^{-\alpha_i}\rceil},\tilde{p}_i^{\otimes \lceil t\gamma^{-\alpha_i}\rceil}\big)\\
\approx  &\sum_j KL(\textrm{Poisson}(p_{ij}t),\textrm{Poisson}(\tilde{p}_{ij}t))\\
= &t \bigg[ \sum_j p_{ij} \log \Big(\frac{p_{ij}}{\tilde{p}_{ij}}\Big) + (\tilde{p}_{ij} - p_{ij})\bigg].
\end{aligned}
\end{equation*}
for $\gamma$ small enough. Then, combining the approximation above with the relation (\ref{tensor_KL}) gives
\begin{equation}\label{approx_KL}
KL(p_i,\tilde{p}_i)\approx \gamma^{\alpha_i}\bigg[\sum_j p_{ij} \log \Big(\frac{p_{ij}}{\tilde{p}_{ij}}\Big) + (\tilde{p}_{ij} - p_{ij})\bigg].
\end{equation}
 This approximation is used to motivate the approximate lower bound problem in the next section.

\section{Approximate Lower Bound Problem}\label{approx_prob}

For each $i$, if $B_i \notin \textrm{supp}(p_i)$, let $\tilde{n}_i = n_i+1$ and set $a_{i \tilde{n}_i} = B_i$, else $\tilde{n}_i = n_i$. 
The Poisson approximation of the KL divergence (see section \ref{poisson_punchline}) suggests that in lieu of equation \hyperref[sample_lower_bound]{(3)}, which is computationally expensive to solve, one could consider the following approximate problem when the rarity $\gamma$ is small (the summations over $j$ below correspond to $j \in [\tilde{n}_i]$).
\begin{equation}\label{approx_lower_bound_problem}
\begin{aligned}
V^*_{a}(p) \coloneqq \max_{w \in \Sigma_K} \min_{i\neq 1} \underset{\underset{\sum_ju a_{1j}\tilde{p}_{1j}}{\sum_j a_{ij}\tilde{p}_{ij} \geq}}{\inf}\bigg\{w_1\gamma^{\alpha_1}\bigg[ \sum_j p_{1j} \log \Big(\frac{p_{1j}}{\tilde{p}_{1j}}\Big) + (\tilde{p}_{1j} - p_{1j})\bigg] + w_i\gamma^{\alpha_i}\bigg[ \sum_j p_{ij} \log \Big(\frac{p_{ij}}{\tilde{p}_{ij}}\Big)
 + (\tilde{p}_{ij} - p_{ij})\bigg]\bigg\}.
\end{aligned}
\end{equation}
The minimization in \ref{sample_lower_bound} will now be replaced with the approximation in  \ref{approx_KL}. 
Above, instead of allowing $\tilde{p}_i$ to have the support
$[0, B_i \gamma^{-\alpha_i}]$, we limited its support 
to that of $p_i$ extended 
to allow point  $B_i \gamma^{-\alpha_i}$. 
This is justified in Sections A.1-A.2 of the supplementary material. 

Let 
\begin{equation}\label{P_i_def}
\mathcal{P}_i \coloneqq \underset{x \in [\mu_i,\mu_1]}{\inf}w_1\mathcal{K}^{L}_{inf}(p_1,x)+w_i \mathcal{K}^{U}_{inf}(p_i,x)
\end{equation}
denote the inner minimisation problem
in \ref{sample_lower_bound}
and let
\begin{equation} \label{P_i_def2}
\begin{aligned}
\mathcal{P}_{i,a} \coloneqq \underset{\underset{\sum_j a_{1j}\tilde{p}_{1j}}{\sum_j a_{ij}\tilde{p}_{ij} \geq} }{\inf}w_1\gamma^{\alpha_1}\bigg[ \sum_j p_{1j} \log \Big(\frac{p_{1j}}{\tilde{p}_{1j}}\Big) + (\tilde{p}_{1j} - p_{1j})\bigg] + w_i\gamma^{\alpha_i}\bigg[ \sum_j p_{ij} \log \Big(\frac{p_{ij}}{\tilde{p}_{ij}}\Big) + (\tilde{p}_{ij} - p_{ij})\bigg]
\end{aligned}
\end{equation}
denote its approximation (above, we suppress the  dependence on 
$w_1$ and $w_i$ of $\mathcal{P}_i$ and 
$\mathcal{P}_{i,a}$).

By approximating a reformulated version of $\mathcal{P}_i$ that  uses the dual representations of  $\mathcal{K}^{L}_{inf}$ and $\mathcal{K}^{U}_{inf}$ (following the approach used in \cite{honda2010asymptotically,agrawal2020optimal}),
%The quantities $\mathcal{P}_{i,a}$ above have been reformulated in much the same manner as $\mathcal{P}_i$.
we can show that 
\begin{equation}\label{approx_min_rform}
\begin{aligned}
\mathcal{P}_{i,a}= w_1\gamma^{\alpha_1} \big[\sum_j p_{1j}\log(1+C^{a}_{1i}a_{1j})-C^{a}_{1i}x^{*}_{i,a}\big] + w_i \gamma^{\alpha_i}\big[\sum_j p_{ij}\log(1-C^{a}_{i}a_{ij})+C^{a}_{i}x^{*}_{i,a}\big].
\end{aligned}
\end{equation}
where the quantities $x^{*}_{i,a},C^a_{1i},C^a_{i}$ (the qualifier 'a' reminds us these are for the approximate problem) are defined by the relations:
\begin{equation}\label{approx_prob_cond}
\begin{aligned}
&C^a_{1i}w_1\gamma^{\alpha_1} =C^a_iw_i\gamma^{\alpha_i},\\
&x^{*}_{i,a}=\sum_j \frac{a_{1j}p_{1j}}{1+a_{1j}C^a_{1i}}, \mbox{ and }\\
& x^{*}_{i,a}=\sum_j \frac{a_{ij}p_{ij}}{1-a_{ij}C^a_{i}}.
\end{aligned}
\end{equation}
Section A.4 of the supplementary material provides the step-by-step reformulation, as well as the results that have been used for it (Sections A.1-A.3 and A.5).
   The advantage of our reformulation is that the quantities $C^a_{1i}$ and $C^a_{i}$ have bounded well-defined limits and using (\ref{approx_prob_cond}), we can eliminate the dependence on $x^*_i$ (whose behaviour is not as easy to analyze when $\gamma \to 0$).\\
\\
%mention order of \mathcal{P}_i
The discussion in Section  \ref{poisson_punchline} also suggests that $\mathcal{P}_{i,a} \approx \mathcal{P}_i$ and hence, $V^{*}(p)\approx V^{*}_a(p)$. This is shown in the following theorem:

\begin{theorem}{1}\label{approxMaxMinTheorem}
For each $i \in [K]$ and $w \in \Sigma_K$,
$\mathcal{P}_i$, $\mathcal{P}_{i,a}$ are $\mathcal{O}(\gamma^{\max(\alpha_1, \alpha_i)})$.
Furthermore, 
$\underset{\gamma \to 0}{\lim}\frac{\mathcal{P}_i}{\mathcal{P}_{i,a}}=1.$
In addition,  there exist constants $L_{1i}$ and $L_i$, independent of $w$, such that 
\begin{equation*}    
|\mathcal{P}_i - \mathcal{P}_{i,a}| \leq L_{1i}w_1\gamma^{\min(2\alpha_1,\alpha_1 + \alpha_i)}+L_{i}w_i\gamma^{\min(2\alpha_i,\alpha_i + \alpha_1)}.
\end{equation*}
Furthermore,
\begin{equation*}
\begin{aligned}
|V^{*}(p)-V^{*}_a(p)| \leq \underset{i \neq 1}{\max}\max\big(&L_{1i}\gamma^{\min(2\alpha_1,\alpha_1 + \alpha_i)}, L_{i}\gamma^{\min(2\alpha_i,\alpha_i + \alpha_1)}\big).
\end{aligned}
\end{equation*}
\end{theorem}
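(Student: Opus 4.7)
The plan is to lift the pointwise Poisson approximation of $KL$ from Section \ref{poisson_punchline} through the two nested minimizations that define $\mathcal{P}_i$ and $\mathcal{P}_{i,a}$ via an envelope-type argument, and then transfer the resulting uniform-in-$w$ bound to the outer $\max_w\min_{i\neq 1}$ that produces $V^*(p)$ and $V^*_a(p)$. The first step is a uniform pointwise bound: for any $\tilde p_i$ with support in $\{0\}\cup\{a_{ij}\gamma^{-\alpha_i}\}_j$ and weights $\tilde p_{ij}$ bounded above and below,
\[
\bigl|KL(p_i,\tilde p_i) - \gamma^{\alpha_i}\textstyle\sum_j[p_{ij}\log(p_{ij}/\tilde p_{ij})+\tilde p_{ij}-p_{ij}]\bigr|=\mathcal O(\gamma^{2\alpha_i}).
\]
Writing the $0$-atom contribution as $(1-s)\log\tfrac{1-s}{1-\tilde s}$ with $s=\gamma^{\alpha_i}\sum_j p_{ij}$, $\tilde s=\gamma^{\alpha_i}\sum_j\tilde p_{ij}$, and Taylor-expanding in $s,\tilde s$, the first-order term $\tilde s-s$ exactly cancels the Poisson correction $\gamma^{\alpha_i}\sum_j(\tilde p_{ij}-p_{ij})$ already appearing in \ref{approx_KL}, leaving residuals of order $(\tilde s-s)^2=\mathcal O(\gamma^{2\alpha_i})$ and higher.

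Next I would run an envelope argument at the inner level: letting $\tilde p_1^*$ and $\tilde p_1^{*,a}$ be the minimizers of $\mathcal{K}^L_{inf}(p_1,x)$ and $\mathcal{K}^{L,a}_{inf}(p_1,x)$ and evaluating each objective at the other's minimizer shows $|\mathcal{K}^L_{inf}(p_1,x)-\mathcal{K}^{L,a}_{inf}(p_1,x)|=\mathcal O(\gamma^{2\alpha_1})$ uniformly in $x\in[\mu_i,\mu_1]$, with the analogous $\mathcal O(\gamma^{2\alpha_i})$ statement for $\mathcal{K}^U_{inf}$. Repeating the envelope trick at the outer $x$-minimum yields
\[
|\mathcal P_i-\mathcal P_{i,a}|\le w_1\tilde L_{1i}\gamma^{2\alpha_1}+w_i\tilde L_i\gamma^{2\alpha_i}.
\]
Since $\gamma\in(0,1)$ and $\min(2\alpha_1,\alpha_1+\alpha_i)\le 2\alpha_1$ gives $\gamma^{2\alpha_1}\le\gamma^{\min(2\alpha_1,\alpha_1+\alpha_i)}$ (and symmetrically for the $w_i$ term), the stated bound on $|\mathcal P_i-\mathcal P_{i,a}|$ follows after absorbing constants into $L_{1i},L_i$. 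The order estimate $\mathcal P_i,\mathcal P_{i,a}=\mathcal O(\gamma^{\max(\alpha_1,\alpha_i)})$ is then immediate by plugging $x=\mu_1$ into the outer objective (giving $\mathcal{K}^L_{inf}(p_1,\mu_1)=0$ and $\mathcal{K}^U_{inf}(p_i,\mu_1)=\mathcal O(\gamma^{\alpha_i})$) or $x=\mu_i$ (symmetric), and keeping the tighter bound.

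The ratio $\mathcal P_i/\mathcal P_{i,a}\to 1$ requires a matching $\Theta(\gamma^{\max(\alpha_1,\alpha_i)})$ lower bound on $\mathcal P_{i,a}$ so that the error is of strictly smaller order; I would extract this from the dual form \ref{approx_min_rform}--\ref{approx_prob_cond} by verifying that $C^a_{1i}$ and $C^a_i$ admit positive finite limits under appropriate rescaling, which pins down the leading asymptotic of $\mathcal P_{i,a}$. Finally, the simplex-uniform bound on $|\mathcal P_i-\mathcal P_{i,a}|$ lifts to $V^*$ via the elementary non-expansiveness $|\max_w\min_i f_i(w)-\max_w\min_i g_i(w)|\le\sup_{w,i}|f_i(w)-g_i(w)|$, and bounding $w_1,w_i\le 1$ on $\Sigma_K$ drops the weights and yields the stated $\max_{i\ne 1}\max(\cdot,\cdot)$ form. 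I expect the main obstacle to be the matching $\Theta$ lower bound needed for the ratio limit: the upper bounds are essentially free from one-point evaluations, but ruling out a collapse of $\mathcal P_{i,a}$ to strictly smaller order demands a regime-dependent analysis of the dual multipliers in \ref{approx_prob_cond}, with qualitatively different scalings of $C^a_{1i},C^a_i$ depending on whether $w_1\gamma^{\alpha_1}$ dominates, is dominated by, or is comparable to $w_i\gamma^{\alpha_i}$.
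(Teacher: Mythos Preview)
Your plan is correct and delivers the stated bound, but it follows a genuinely different route from the paper. The paper works entirely in the dual: it reformulates both $\mathcal P_i$ and $\mathcal P_{i,a}$ through the variables $K_{1i},K_i,C_{1i},C_i$ from Section~\ref{reformulation_LB_proof}, then (Step~1) shows the exact dual optimizers $(K_{1i},C_{1i}^e,K_i,C_i^e)$ approximately satisfy the approximate problem's dual equations, (Step~2) proves $|C_{1i}^e-C_{1i}^a|$, $|C_i^e-C_i^a|$, $|x^*_{i,e}-x^*_{i,a}|=\mathcal O(\gamma^{\min(\alpha_1,\alpha_i)})$ via a four-case analysis splitting on whether the $\mathcal K^U_{inf}$ optimizer places mass at $B_i\gamma^{-\alpha_i}$ (cf.\ Remark~\ref{extra_point_condition}), and (Step~3) combines these via Lipschitz estimates for the reformulated objectives. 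Your primal envelope argument sidesteps Step~2 entirely for the difference bound: you only need the pointwise $\mathcal O(\gamma^{2\alpha_i})$ Poisson error and feasibility of each problem's minimizer for the other, which requires just the crude boundedness $C_{1i}^a\le \sum_j p_{1j}/\mu_i$, $C_i^a\le 1/B_i$ rather than the fine closeness of dual variables. This is cleaner and in fact yields the nominally tighter $w_1\mathcal O(\gamma^{2\alpha_1})+w_i\mathcal O(\gamma^{2\alpha_i})$ before you relax to the theorem's exponents.

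What the paper's approach buys is that the dual-variable machinery is exactly what drives the ratio limit $\mathcal P_i/\mathcal P_{i,a}\to 1$ and, more importantly, the asymptotic-weight analysis of Theorem~\hyperref[asymptotic_w]{2}: once $C_{1i},C_i$ are in hand with their limits, both the $\Theta$ lower bound and the five-regime classification fall out of the same computations. Your envelope gives the $|\mathcal P_i-\mathcal P_{i,a}|$ bound for free but, as you correctly anticipate, still forces you back into the dual for the ratio statement, so the net savings over the paper's structure is the avoidance of the four-case Step~2 analysis for the difference bound alone.
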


The proof involves simplifying $\mathcal{P}_i$, $\mathcal{P}_{i,a}$
through Taylor expansions for small $\gamma$.
It is given in the Sections A.4 and B of the supplementary material.

%\end{proof}
%talk about order of  P_i

%\textit{"Why is this a good approximation" and "Closer Examination" move to appendix as proof}\\

\subsection{Solving the approximate lower bound}\label{approx_lb_section} 
By definition we have that 
\begin{equation*}
V^{*}_a(p)=\max_{w \in \Sigma_K} \min_{i\neq 1}\mathcal{P}_{i,a}.
\end{equation*}
Further, we note that $\mathcal{P}_{i,a}$ is a concave function of $w$ (infimum of linear function of $w$). Maxmin problems with this specific structure were studied in \cite{glynn2004large} (the caveat being that in our 
$\mathcal{K}_{inf}$ definitions in the underlying KL term, the first argument is fixed while we optimize over the second argument, while in 
\cite{glynn2004large}, these orders are reversed. However, all the steps
carry out identically).
The optimal weights $w^*$ are characterized in the following theorem:
\begin{theorem}{1 in \cite{glynn2004large}}
The optimal $w^{*}$ of the maxmin problem \ref{approx_lower_bound_problem} satisfies:
\begin{equation}\label{KL_ratio_sum_eq}
\sum_{i=2}^{K}\frac{\partial \mathcal{P}_{i,a}(w^*)}{\partial w_1}\bigg/ \frac{\partial \mathcal{P}_{i,a}(w^*)}{\partial w_i}=1,
\end{equation}
and $\forall i \neq j$, $i,j \neq 1$,
\begin{equation}\label{wt_kl_eq}
\mathcal{P}_{i,a}(w^*)=\mathcal{P}_{j,a}(w^*).
\end{equation}
These conditions are also sufficient.
\end{theorem}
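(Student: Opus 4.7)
The plan is to recognize this as a standard concave maxmin problem and apply Karush-Kuhn-Tucker (KKT) machinery, essentially mirroring the argument of \cite{glynn2004large}. First I would reformulate the maxmin problem in epigraph form: introduce a slack variable $z$ and write the problem as
\begin{equation*}
\max_{w \in \Sigma_K,\, z \in \mathbb{R}} z \quad \text{subject to} \quad \mathcal{P}_{i,a}(w) \geq z \text{ for all } i \neq 1, \quad \sum_{i=1}^K w_i = 1,\quad w \geq 0.
\end{equation*}
Since each $\mathcal{P}_{i,a}$ is concave in $w$ (as an infimum of functions linear in $w$, as noted in the text) and the constraint set is convex, this is a concave program, so KKT conditions will be both necessary and sufficient.

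Next I would establish that the optimum is interior, i.e., $w^*_i > 0$ for every $i \in [K]$. If $w^*_i = 0$ for some $i \neq 1$ then $\mathcal{P}_{i,a}(w^*) = 0$ (the infimum is attained at $\tilde{p}_i = p_i$, $\tilde{p}_1 = p_1$ with no penalty on that arm), making the objective zero; similarly $w^*_1 > 0$ is required for any $\mathcal{P}_{i,a}$ to be positive. Hence all multipliers for the nonnegativity constraints vanish and the only active multipliers are those for the $K-1$ constraints $\mathcal{P}_{i,a}(w) \geq z$ (call them $\lambda_i$, $i \neq 1$) and for $\sum_i w_i = 1$ (call it $\nu$). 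Writing the Lagrangian and taking derivatives with respect to $z$ gives $\sum_{i \neq 1} \lambda_i = 1$, while differentiating with respect to $w_j$ (using that $\mathcal{P}_{i,a}$ depends only on $w_1$ and $w_i$) yields
\begin{equation*}
\sum_{i \neq 1}\lambda_i\,\frac{\partial \mathcal{P}_{i,a}(w^*)}{\partial w_1} = \nu \quad \text{and} \quad \lambda_j\,\frac{\partial \mathcal{P}_{j,a}(w^*)}{\partial w_j} = \nu \;\text{ for each } j \neq 1.
\end{equation*}
Since $\partial \mathcal{P}_{j,a}/\partial w_j > 0$ (more samples on arm $j$ strictly increase the KL term) we have $\nu > 0$ and hence each $\lambda_j > 0$. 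Solving the second identity for $\lambda_j$ and substituting into the first gives exactly the ratio-sum condition \eqref{KL_ratio_sum_eq}.

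For the equality condition \eqref{wt_kl_eq}, I would invoke complementary slackness: since every $\lambda_j > 0$, every constraint $\mathcal{P}_{j,a}(w^*) \geq z$ is active, so $\mathcal{P}_{j,a}(w^*) = z$ for all $j \neq 1$, which is precisely \eqref{wt_kl_eq}. Alternatively, a short direct perturbation argument works: if $\mathcal{P}_{j,a}(w^*) > \min_{i \neq 1} \mathcal{P}_{i,a}(w^*)$ for some $j$, shifting a small mass from $w^*_j$ to $w^*_1$ strictly raises every other $\mathcal{P}_{i,a}$ (they are increasing in $w_1$) while keeping $\mathcal{P}_{j,a}$ above the old minimum by continuity, contradicting optimality. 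Sufficiency of \eqref{KL_ratio_sum_eq} and \eqref{wt_kl_eq} follows from concavity: the two conditions, together with the interior location and $\sum w_i = 1$, produce a feasible $(w^*,z^*)$ satisfying the KKT system with valid multipliers $\lambda_j = \nu/(\partial \mathcal{P}_{j,a}/\partial w_j)$, and for a concave program such a point is globally optimal.

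The only real subtlety I expect is verifying the interiority and positivity claims cleanly (which justify dropping the nonnegativity multipliers and dividing by $\partial \mathcal{P}_{j,a}/\partial w_j$); once these are in hand the rest is mechanical Lagrangian bookkeeping. Uniqueness is not asserted, and no additional work is needed for it.
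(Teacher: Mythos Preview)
Your proposal is correct and follows essentially the same route as the paper relies on: the paper does not give its own proof but cites \cite{glynn2004large}, whose argument is precisely the KKT/Lagrangian analysis of the epigraph reformulation that you outline. The only mild omission is that you implicitly use differentiability of $\mathcal{P}_{i,a}$ (via the envelope/Danskin theorem, which requires uniqueness of the inner minimizer), but the paper takes this for granted as well.
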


We can use the above theorem to find closed form expressions (in terms of $w^*$) for $\mathcal{P}_{i,a}$ and $\frac{\partial \mathcal{P}_{i,a}(w^*)}{\partial w_j}$ using (\ref{approx_min_rform}). As a starting point, we identify certain monotonicities present in (\ref{approx_prob_cond}), (\ref{KL_ratio_sum_eq}) and (\ref{wt_kl_eq}) to ease up the process of root-finding via bisection methods.
\\
\\
The equations defining $C^a_{1i}$ and $C^a_i$ imply that $C^a_{i}$ is a decreasing function of $C^a_{1i}$. Mathematically, the implicit functions $g_i(r)$, defined for all $i \neq 1$ as
 \begin{equation*}
 \sum_j \frac{a_{1j}p_{1j}}{1+g_i(r)a_{1j}}=\sum_j \frac{a_{ij}p_{ij}}{1-ra_{ij}}
 \end{equation*}
 are decreasing in $r$. The domain of $g_i$ is chosen such that the RHS in the above equation is positive and finite.
 \\ The optimality equation (\ref{wt_kl_eq}) implies at the optimal weight $w^*$, each $C^a_{1i}$, $i>2$, is an increasing function of $C^a_{12}$. More formally, the functions $\xi_{i}(s)$, $\forall i>2$, implicitly defined through the equation:
 \begin{equation*}
 \begin{aligned}
    &\sum_j p_{1j} \log(1+g_i(\xi_i)a_{1j}) + 
    \frac{g_i(\xi_i)}{\xi_i}\sum_jp_{ij}\log(1-\xi_ia_{ij})
    = \sum_j p_{1j} \log(1+g_2(s)a_{1j}) + 
    \frac{g_2(s)}{s}\sum_j p_{2j}\log(1-sa_{2j})
 \end{aligned}
 \end{equation*}
 are increasing in $s$. The domain of $\xi_i$ is such that the RHS is well-defined. Finally, as  a function of $C^a_{12}$, the LHS in the optimality equation \ref{KL_ratio_sum_eq} is also increasing. Mathematically this means that the functions , $\forall i \neq 1$, 
 \begin{equation*}
 \begin{aligned}
 h_i(s) \coloneqq \bigg(\sum_j p_{1j}\log(1+\xi_ia_{1j}) - \xi_i \Big[\sum_j \frac{a_{1j}p_{1j}}{1+a_{1j}\xi_i}\Big]\bigg)\bigg(\sum_j p_{ij}\log(1-g_i(\xi_i)a_{ij}) + g_i(\xi_i)\sum_j \Big[\frac{a_{ij}p_{ij}}{1-a_{ij}g_i(\xi_i)}\Big]\bigg)^{-1}
 \end{aligned}
 \end{equation*}
 are increasing in $s$. These monotonicities enable one to solve for optimal weights in (\ref{approx_lower_bound_problem}) through simple bisection methods. This is the source of computational benefit of solving (\ref{approx_lower_bound_problem}) vis-a-vis (\ref{sample_lower_bound}). In (\ref{sample_lower_bound}), one has to solve either convex programs ($\mathcal{P}_i$) or a nonlinear system of four equations %((\ref{C_1iC_iRelation}),(\ref{twisted_mean_eq}),(\ref{prob_eq}),(\ref{w_K_iEq}))  
 to arrive at the solution (see Section C of supplementary material).\\
\\
 This enables us to study the behaviour of $w^*$ as $\gamma \to 0.$ We set up some notation first.
\begin{definition}{1} {\em
Two positive valued functions of $\gamma$, $A(\gamma)$ and $B(\gamma)$, are said to be \textit{asymptotically equivalent} if $0<\underset{\gamma \to 0}{\liminf}\frac{A(\gamma)}{B(\gamma)} \leq \underset{\gamma \to 0}{\limsup}\frac{A(\gamma)}{B(\gamma)}<\infty$. We denote this by $A(\gamma) =\Theta(B(\gamma))$.  }
\end{definition}
Let $\alpha_{\max}={\max}_i\alpha_i$. The quantity $\zeta \coloneqq \sum_{\underset{\alpha_i = \alpha_{max}}{i \neq 1,}} h_i(\xi_{i}(0))$
also plays a role in governing the asymptotic behaviour of $w^*$. 

Theorem \hyperref[asymptotic_w]{(2)} provides insight into the optimal weights
in the lower bound problem 
as $\gamma \rightarrow 0$. We discuss its conclusions further in the nex subsection.

\begin{theorem}{2}\label{asymptotic_w}
 The behaviour of $w^*$ as $\gamma \to 0$ is described by the following five cases:\\
 \\
{Case 1: The best arm is not the rarest, $\alpha_{max} \neq \alpha_1.$}
\begin{equation*}
\begin{aligned}
& w^*_1= \Theta(\gamma^{\frac{\alpha_{max}-\alpha_1}{2}}),\\
& w^*_i= \Theta(\gamma^{\alpha_{max}-\alpha_i}) &&\text{for all $i \neq 1$}.
\end{aligned}
\end{equation*}
{Case 2: The best arm is uniquely the rarest, $\alpha_1 = \alpha_{max} > \alpha_i,  i \neq 1.$}
\begin{equation*}
\begin{aligned}
& w^*_2= \Theta( \gamma^{\frac{\alpha_{max}-\alpha_2}{2}}),\\
& w^*_i= \Theta(\gamma^{\alpha_{max}-\alpha_i}) &&\text{for all $i \neq 2.$}
\end{aligned}
\end{equation*}
{Case 3: The best and second best arm only are the rarest, $\alpha_1 = \alpha_2 = \alpha_{max} > \alpha_i, \ \forall i \neq 1,2.$}
\begin{equation*}
 w^*_i = \Theta(\gamma^{\alpha_{max}-\alpha_i}),\text{ for all $i.$}
\end{equation*}
{Case 4: The best arm is the rarest but not uniquely, $\alpha_1 = \alpha_k = \alpha_{max} \geq \alpha_i, \ i \notin \{1,2,k\}$, $\alpha_{max}>\alpha_2$ and $\zeta > 1$.}
\begin{equation*}
\begin{aligned}
& w^*_2= \Theta(\gamma^{\frac{\alpha_{max}-\alpha_2}{2}}),\\
& w^*_i = \Theta( \gamma^{\alpha_{max}-\alpha_i}) &&\text{for all $i \neq 2.$}
\end{aligned}
\end{equation*}
{Case 5: The best arm is the rarest but not uniquely, $\alpha_1 = \alpha_k = \alpha_{max} \geq \alpha_i, \ i \notin \{1,2,k\}$, $\alpha_{max}>\alpha_2$ and $\zeta \leq 1$.}
\begin{equation*}
\begin{aligned}
& w^*_1= \Theta(\gamma^{\alpha_{max}-\alpha_1}),\\
& w^*_i= \Theta( \gamma^{\alpha_{max}-\alpha_i}) &&\text{for all $i \neq 1.$}
\end{aligned}
\end{equation*}
Further, the asymptotic equivalence can be expressed by limits that are functions of parameters of the bandit problem.
\end{theorem}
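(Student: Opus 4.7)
My plan is to combine the first-order optimality conditions (\ref{KL_ratio_sum_eq})--(\ref{wt_kl_eq}) with the primal relation (\ref{approx_prob_cond}) so as to reduce the question to determining how a single scalar $s^*:=C^a_2$ scales with $\gamma$, and then to perform a case-by-case asymptotic analysis. The duality constraint $C^a_{1i}\,w^*_1\,\gamma^{\alpha_1}=C^a_i\,w^*_i\,\gamma^{\alpha_i}$ from (\ref{approx_prob_cond}) immediately yields
$$w^*_i \;=\; \tfrac{C^a_{1i}}{C^a_i}\,w^*_1\,\gamma^{\alpha_1-\alpha_i}\qquad(i\neq 1),$$
and the simplex constraint $\sum_i w^*_i=1$ then determines $w^*_1$. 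Since the equalities $\mathcal{P}_{i,a}=\mathcal{P}_{2,a}$ imposed at $w^*$ encode $C^a_i=\xi_i(s^*)$ and $C^a_{1i}=g_i(\xi_i(s^*))$ as functions of the single unknown $s^*$, the whole problem collapses to finding the $\gamma\to 0$ behaviour of $s^*$.

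Differentiating $\mathcal{P}_{i,a}$ as given in (\ref{approx_min_rform}) and invoking the envelope theorem shows that $\partial_{w_1}\mathcal{P}_{i,a}/\partial_{w_i}\mathcal{P}_{i,a}=\gamma^{\alpha_1-\alpha_i}\,h_i(s^*)$, so (\ref{KL_ratio_sum_eq}) takes the explicit rescaled form
$$\sum_{i\neq 1}\gamma^{\alpha_1-\alpha_i}\,h_i(s^*)\;=\;1. \qquad(\star)$$
By the monotonicities of $g_i,\xi_i,h_i$ recorded just above the theorem statement, $(\star)$ has a unique positive solution $s^*(\gamma)$. A Taylor expansion based on $\log(1+x)-x/(1+x)\sim x^2/2$ gives $h_i(s)=\Theta(s^2)$ as $s\downarrow 0$, whereas $h_i(s)\to\infty$ as $s$ approaches the upper endpoint of its feasible interval. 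The five cases then correspond to which arms realize $\alpha_{\max}$ and hence which boundary regime of $(\star)$ is triggered.

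In Case 1 some prefactor $\gamma^{\alpha_1-\alpha_k}$ with $k\neq 1$ blows up, so the matching $h_k(s^*)$ must vanish quadratically, forcing $s^*=\Theta(\gamma^{(\alpha_{\max}-\alpha_1)/2})$; substituting back into the first display recovers the announced exponents. Case 2 is the dual situation in which all prefactors vanish, $h_2$ blows up at its upper endpoint, and arm 2 dominates. Case 3 is the balanced situation where arm 2 has prefactor $1$ in $(\star)$: $s^*$ stays bounded away from both endpoints, and every $w^*_i=\Theta(\gamma^{\alpha_{\max}-\alpha_i})$. Cases 4 and 5 refine the setting in which additional arms share $\alpha_{\max}$ with arm 1 but arm 2 does not: the ``tied'' arms contribute with prefactor $1$ in $(\star)$ while arm 2's prefactor $\gamma^{\alpha_{\max}-\alpha_2}$ vanishes, and the constant $\zeta$ measures whether the tied terms alone already reach $1$ as $s^*\to 0$. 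The main obstacle I foresee is precisely this Case 4 vs.\ Case 5 dichotomy: both sub-cases share the same prefactor structure in $(\star)$ and the same candidate limit $s^*\to 0$, yet the correct rate of $s^*$ (and hence of $w^*_1$ versus $w^*_2$) is selected by whether $\zeta>1$ or $\zeta\leq 1$. When $\zeta>1$ the tied arms over-satisfy $(\star)$ at $s^*=0$ and a small arm-$2$ correction is needed, giving the square-root scaling $w^*_2=\Theta(\gamma^{(\alpha_{\max}-\alpha_2)/2})$; when $\zeta\leq 1$ the tied sum falls short and arm 1 must be sampled more, producing the Case 5 exponents. Making this rigorous requires a joint expansion of $\sum_{i\neq 1,\alpha_i=\alpha_{\max}}h_i(\xi_i(s))$ about $s=0$ matched against the polynomially small arm-$2$ contribution. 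A final step is to verify that the $\Theta$-equivalences are genuinely two-sided, which follows from strict monotonicity of $g_i,\xi_i,h_i$ and the uniqueness of the root $s^*$ of $(\star)$.
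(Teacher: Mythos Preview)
Your overall strategy---collapse the optimality system (\ref{KL_ratio_sum_eq})--(\ref{wt_kl_eq}) together with (\ref{approx_prob_cond}) to the single scalar equation $(\star)$ and then perform a case-by-case asymptotic analysis---is precisely the route the paper takes. The gap is that you have the endpoint behaviour of $h_i$ reversed. With your convention $s:=C^a_2$, the numerator of $h_2$ is built from $C^a_{12}=g_2(s)$ while the denominator is built from $C^a_2=s$ itself. As $s\downarrow 0$ one has $g_2(s)\to g_2(0)>0$ (determined by $\sum_j a_{1j}p_{1j}/(1+g_2(0)a_{1j})=\mu_2$), so the numerator tends to a positive constant while the denominator, being the arm-$2$ $\mathcal K_{inf}^U$ part, is $\Theta(s^2)$; hence $h_2(s)\to\infty$, not $\Theta(s^2)$. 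The quadratic vanishing you invoke from $\log(1+x)-x/(1+x)\sim x^2/2$ occurs instead at the \emph{upper} endpoint of $s$, where $g_2(s)\to 0$ and the numerator is $\Theta(g_2(s)^2)$. For $i>2$ the maps $\xi_i$ satisfy $\xi_i(0)>0$, so $h_i(s)\to h_i(\xi_i(0))\in(0,\infty)$ as $s\to 0$; this finite limit is exactly what makes $\zeta=\sum_{i\neq 1,\ \alpha_i=\alpha_{\max}} h_i(\xi_i(0))$ a meaningful threshold.

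This reversal propagates through every case. In Case~1 the diverging prefactors $\gamma^{\alpha_1-\alpha_k}$ force $h_k(s^*)\to 0$, which pushes $s^*$ toward the upper endpoint: $C^a_2\to A_2=\Theta(1)$ (with $A_2$ solving $\sum_j a_{2j}p_{2j}/(1-A_2 a_{2j})=\mu_1$), while it is $C^a_{1i}$ that carries the $\gamma^{(\alpha_{\max}-\alpha_1)/2}$ scaling---your claim $s^*=\Theta(\gamma^{(\alpha_{\max}-\alpha_1)/2})$ would be correct only if $s^*$ denoted $C^a_{12}$. In Case~2 all prefactors vanish, so the arm-$2$ term must blow up, which happens at $s\to 0$ (not the upper endpoint), giving $s^*=C^a_2=\Theta(\gamma^{(\alpha_{\max}-\alpha_2)/2})$. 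In Cases~4/5, since $\sum_{i\neq 1}\gamma^{\alpha_1-\alpha_i}h_i(s)$ is \emph{decreasing} in $s$ with limit $+\infty$ at $s=0$, the dichotomy is whether the tied-arm partial sum $\zeta$ already exceeds $1$ (forcing $s^*$ bounded away from $0$, so arm~2's contribution vanishes) or not (forcing $s^*\to 0$ at rate $\gamma^{(\alpha_{\max}-\alpha_2)/2}$ so that $\gamma^{\alpha_{\max}-\alpha_2}h_2(s^*)$ makes up the shortfall). Once the endpoint behaviours are swapped---equivalently, once you parametrise by $C^a_{12}$ rather than $C^a_2$---your outline is correct and coincides with the paper's proof.
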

\begin{proof}
See section C of supplementary material.
\end{proof}
%\textit{Remaining matter goes to appendix as proof}\\
The theorem gives us insight into the behavior of the optimal weights $w^*$ in equation (\ref{approx_lower_bound_problem}). By the fact that $V^*(p)\approx V^*_a(p)$ (Theorem \hyperref[approxMaxMinTheorem]{1}) the optimal weights of actual maxmin problem also will show the same asymptotic behaviour. It is easy to see that substituting these optimal weights in $V^*(p)$ gives us an overall lower bound on the sample complexity as a scalar multiple of $\gamma^{\alpha_{max}}$.% The tree in Figure 1 summarizes our observations in the various subcases, in terms of powers of $\gamma$.

\subsection{Discussion on Theorem \hyperref[asymptotic_w]{2}}

The following lemma will be useful in the subsequent discussion of Theorem \hyperref[asymptotic_w]{2}. Without loss of generality let arm 2 be the one with the second highest mean. We further assume that $\mu_2> \mu_i$ for $i \geq 3$.
\begin{lemma}{1}\label{tweaked_means_meet}
In the maxmin problem (\ref{sample_lower_bound}), let $x^{*}_{i,e}(w^*)$ denote the minimizer of each $\mathcal{P}_i$ for the optimal weights $w^*$. Then, we have $x^{*}_i(w^*) \in [\mu_2,\mu_1] \,\,\,\, \forall i$.
\end{lemma}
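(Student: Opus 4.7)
The plan is to argue by contradiction using the fact that at the maxmin-optimal weights $w^*$, the values $\mathcal{P}_i(w^*)$ are equal for all $i \neq 1$. This equalization is the exact-problem analogue of the Glynn--Juneja result stated in the excerpt for the approximate problem; the proof transfers essentially verbatim since each $\mathcal{P}_i$ is a concave function of $(w_1,w_i)$, being an infimum of linear functionals of $(w_1,w_i)$. I would first record the standard monotonicity facts I will use: $\mathcal{K}^{L}_{inf}(p_1,\cdot)$ is continuous and strictly decreasing on $[0,\mu_1]$ with $\mathcal{K}^{L}_{inf}(p_1,\mu_1)=0$, and $\mathcal{K}^{U}_{inf}(p_i,\cdot)$ vanishes on $(-\infty,\mu_i]$ and is strictly increasing beyond that. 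I would also note that $w_1^*>0$, since otherwise each $\mathcal{P}_i(w^*)$ would vanish at $x=\mu_i$, forcing $V^*(p)=0$, which is easily ruled out by exhibiting any uniform allocation with a strictly positive objective.

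The case $i=2$ is immediate since the infimum defining $\mathcal{P}_2$ ranges over $x \in [\mu_2, \mu_1]$ by construction. For $i \geq 3$, I would suppose toward a contradiction that $x^*_i(w^*) < \mu_2$ and compare $\mathcal{P}_i(w^*)$ with $\mathcal{P}_2(w^*)$. Evaluating the infimum in $\mathcal{P}_2$ at the boundary point $x=\mu_2$ and using $\mathcal{K}^{U}_{inf}(p_2,\mu_2)=0$ gives the upper bound
\[
\mathcal{P}_2(w^*)\;\leq\; w_1^*\,\mathcal{K}^{L}_{inf}(p_1,\mu_2).
\]
On the other hand, discarding the nonnegative $\mathcal{K}^U$-term and invoking strict monotonicity of $\mathcal{K}^{L}_{inf}(p_1,\cdot)$ with $w_1^*>0$ yields
\[
\mathcal{P}_i(w^*)\;\geq\; w_1^*\,\mathcal{K}^{L}_{inf}(p_1,x^*_i(w^*))\;>\; w_1^*\,\mathcal{K}^{L}_{inf}(p_1,\mu_2).
\]
Chaining these inequalities gives $\mathcal{P}_i(w^*)>\mathcal{P}_2(w^*)$, contradicting the equalization. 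Hence $x^*_i(w^*)\geq \mu_2$, and combined with the trivial upper bound $x^*_i(w^*)\leq \mu_1$ from the feasible set in \ref{P_i_def}, the claim follows.

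The main step I expect to need justification is the equalization $\mathcal{P}_i(w^*)=\mathcal{P}_j(w^*)$ for the exact problem, since the excerpt only cites this optimality form for the approximate $\mathcal{P}_{i,a}$. The argument adapts because each exact $\mathcal{P}_i$ is concave and continuous in $w$ and strictly increasing in both $w_1$ and $w_i$ near any optimum (both marginal derivatives are positive as long as the minimizing $x^*_i$ lies strictly inside $(\mu_i,\mu_1)$), so any strict slack in some $\mathcal{P}_j$ relative to $\min_{k\neq 1}\mathcal{P}_k$ could be absorbed by shifting a small amount of weight from $w_j$ onto a binding coordinate, strictly increasing the minimum. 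Everything else reduces to the short deterministic comparison above.
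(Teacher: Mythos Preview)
Your proof is correct and follows essentially the same approach as the paper: both argue by contradiction, invoke the equalization condition $\mathcal{P}_i(w^*)=\mathcal{P}_2(w^*)$ at the optimal weights, and compare the value $\mathcal{P}_i(w^*)$ against $\mathcal{P}_2(w^*)$ to derive a strict inequality when $x^*_i(w^*)<\mu_2$. The paper phrases the comparison via the alternative bandit instance $p'$ (noting it lies in arm~2's alternative set), whereas you work directly with the monotonicity of $\mathcal{K}^{L}_{inf}(p_1,\cdot)$ and the nonnegativity of the $\mathcal{K}^{U}_{inf}$ term; these are the same inequality seen from two angles, and your version is if anything a bit more explicit.
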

\begin{proof}
We shall show this by contradiction. Suppose $x^{*}_{i,e}(w^*) < \mu_2$. Then, from the optimality conditions of $w^*$ (similar to (\ref{KL_ratio_sum_eq}), (\ref{wt_kl_eq})) we have, $\forall i \neq j$, $i,j \neq 1$:
\begin{equation*}
\begin{aligned}
\underset{\mu'_i \geq \mu'_1}{\inf} w^*_1 KL(\mu_1,\mu'_1)+w^*_i KL(\mu_i,\mu'_i) = \underset{\mu'_j \geq \mu'_1 }{\inf} w^*_1 KL(\mu_1,\mu'_1)+w^*_j KL(\mu_j,\mu'_j).
\end{aligned}
\end{equation*}

But we know that this minimization, for each $i \neq 1$, is attained uniquely  by a bandit instance $p'$ where the rest of the arms, except 1 and $i$, are the same as the original bandit instance in consideration, namely, $p$. Both the arms $i$ and $1$ have means $x^{*}_{i,e}(w^*)$ under $p'$. But the assumed hypothesis then implies that $x^{*}_{i,e}(w^*)=\mu'_1<\mu'_2=\mu_2$. That means $p'$ is also in the set $\{\mu'_2 \geq \mu'_1\}$ and hence
\begin{equation*}
\begin{aligned}
\underset{\mu'_i \geq \mu'_1}{\inf} w^*_1 KL(\mu_1,\mu'_1)+w^*_i KL(\mu_i,\mu'_i) > \underset{\mu'_2 \geq \mu'_1 }{\inf} w^*_1 KL(\mu_1,\mu'_1)+w^*_2 KL(\mu_2,\mu'_2).
\end{aligned}
\end{equation*}
However, this contradicts the necessary optimality conditions for $w^*$. Thus, $x^{*}_{i,e}(w^*) \geq  \mu_2$.
\end{proof}
A similar result can also be shown for the approximate  problem (\ref{approx_lower_bound_problem}) (see Section D of supplementary material).\\
\\
In the rare event setting, the non-zero samples from an arm are the informative samples, but they are quite rare. Any algorithm needs to see non-zero (informative) samples from at least some arms before it decides to stop. By Lemma \hyperref[tweaked_means_meet]{1} we know that all arms, except possibly the best and second best ($i =1,2$), will show deviations in their sample mean under max-min optimality. As the TS algorithm and our algorithm track these weights, it is to be expected that the number of samples for arm $i (\neq 1,2)$ is only as high as it takes to see an $\mathcal{O}(1)$ sample mean, but also sufficiently low as to ensure that the probability of sample mean deviation is high. The optimal weights $w^*_i \simeq \gamma^{\alpha_{max}-\alpha_i}$, $\forall i \neq 1,2$, have this feature. This gives the sample complexity for arm $i (\neq 1,2)$ as $\mathcal{O}(\gamma^{-\alpha_i})$ (since the overall sample complexity is $\mathcal{O}(\gamma^{-\alpha_{\max}}$)). On average, each arm thus sees only $\mathcal{O}(1)$ non-zero samples, with a deviation probability $1-\mathcal{O}(\gamma^{\alpha_i}(\mu_1-\mu_i)^2)$ and $\mathcal{O}(1)$ sample mean.

\section{Track and Stop Algorithm}\label{approx_algo_section}

Our algorithm builds upon the Track and Stop (TS) algorithm proposed in \cite{shubhada2019,kaufmann2016a}. We call it Track and Stop (A), to emphasize
thatwe are solving an approximate problem. The algorithm solves the approximate maxmin problem \ref{approx_lower_bound_problem}, and samples according to the weights obtained. The calculation of the sampling weights happen in batches of size $m$. Let $l$ denote the batch index. Within each batch we ensure that each arm gets at least $\sqrt{lm}$ samples. This is done in the same manner as \cite{shubhada2019}. At the end of $l$-th batch, TS(A) evaluates the maximum likelihood ratio $Z_{k^*}(l)$ for the empirical best arm $k^*(l)$ and decides whether to stop or not. The likelihood ratio is given by:
\begin{equation*}
\begin{aligned}
    Z_{k^*}(l) \coloneqq &\underset{b \neq k^*}{\min}\underset{x \leq y}{\inf} N_{k^*}(lm)\mathcal{K}_{inf}^{L}(\hat{p}_{k^*}(lm),x) + N_{b}(lm)\mathcal{K}_{inf}^{U}(\hat{p}_b(lm),y).
\end{aligned}
\end{equation*}
$\hat{p}(t)$ refers to the empirical bandit instance after $t$ samples. $N_i(t)$ denotes to number of pulls of arm $i$ after $t$ samples. TS(A) stops when $Z_{k^*}(l)>\beta(lm,\delta)$, where $\beta(t,\delta)$ is a stopping threshold defined as
\begin{equation*}
    \beta(t,\delta) \coloneqq \log\bigg(\frac{K-1}{\delta}\bigg)+5\log(t+1)+2.
\end{equation*}

Note that we are computing the maximum likelihood ratio by solving the $\mathcal{K}_{inf}$ problems exactly, and not approximately. Although it is relatively expensive to compute these quantities exactly, such computations occur only once for each $l$. The number of samples $N_i(t)$ for each arm $i$ is influenced by the optimal weights that are obtained as solution to the approximate maxmin problem. The precise algorithmic details of TS(A) are given below. 

%\newpage
\begin{algorithm}[h]\label{main_algo}
   \caption{TS(A) algorithm}
   %\hspace*{\algorithmicindent} \textbf{Input: Batch size $m$, Arm sampler, Sampler for $\Sigma_K$.}
   \begin{algorithmic}
   \STATE Generate $\lfloor \frac{m}{K} \rfloor$ samples for each arm.\\
   \STATE $l\leftarrow1$.
   \STATE Compute the empirical bandit $\hat{p}=(\hat{p})_{i \in [K]}$.
   \STATE $\hat{w}(\hat{p})\leftarrow \text{Compute weights according to (\ref{approx_lower_bound_problem})}$.
   \STATE $k^{*}\leftarrow \underset{i \in [K]}{\arg \max }\hspace{0.2cm} \mathbb{E}[\hat{p}_i]$. 
   \STATE Compute $Z_{k^*}(l)$, $\beta(lm,\delta)$. %and $A(\hat{p},\gamma)$.
   \WHILE{ $Z_{k^*}(l) \geq \beta(lm,\delta) $}
        \STATE $s_i\leftarrow (\sqrt{(l+1)m}-N_i(lm))^+$.
       \IF {$m \geq \sum_{i}s_i$}
           \STATE Generate $s_i$ many samples for each arm $i$.
           \STATE Generate $(m-\sum_i s_i)^+$ i.i.d. samples from $\hat{w}(\hat{p})$. Let $Count(i)$ be occurrence of $i$ in these samples. 
           \STATE Generate $Count(i)$ samples from each arm $i$.
           \ELSE
           \STATE $\hat{s}^*\leftarrow \underset{\hat{s}, s_i \geq \hat{s}_i \geq 0}{\arg \min }\max_i(s_i-\hat{s}_i)$. 
           \STATE Generate $\hat{s}^*_i$ samples from each arm $i$.
       \ENDIF
           \STATE $l\leftarrow l+1$
           \STATE Update empirical bandit $\hat{p}.$
           \STATE $k^{*}\leftarrow \underset{i \in [K]}{\arg \max }\hspace{0.2cm} \mathbb{E}[\hat{p}_i]$.
           \STATE Update $Z_{k^*}(l)$, $\beta(lm,\delta)$. 
           \STATE $\hat{w}(\hat{p})\leftarrow \text{Compute weights according to (\ref{approx_lower_bound_problem})}$.
   \ENDWHILE
   \RETURN $k^*$.
   \end{algorithmic}
\end{algorithm}
%\FloatBarrier
\subsection{$\delta$-correctness and sample complexity of TS(A)}
The following theorem guarantees the $\delta$-correctness and gives asymtptotic sample complexity bound for TS(A):
\begin{theorem}{3.}
 The TS(A) is a $\delta$-correct algorithm with the following asymptotic sample complexity bound:
 \begin{equation}
     \limsup_{\delta \to 0} \frac{\mathbb{E}[\tau_{\delta}]}{\log(1/\delta)}\leq \frac{1}{V_{TS(A)}(p)}
 \end{equation}
 where $V_{TS(A)}(p):=\underset{i \neq 1}{\min} \mathcal{P}_{i}(\hat{w}^*(p))$. $\hat{w}^*(p))$ denotes the optimal weights for the approx lower bound problem $V^*_a(p).$
\end{theorem}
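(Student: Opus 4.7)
The proof splits into two parts following the standard Track-and-Stop template of \cite{kaufmann2016,agrawal2020optimal}. For $\delta$-correctness, the argument is entirely standard and relies only on the choice of stopping threshold: one bounds the probability that, under the true instance $p$, the exact likelihood ratio $Z_b(l)$ exceeds $\beta(lm,\delta)$ for some sub-optimal arm $b$ being recommended. Crucially, the stopping statistic is built from the exact $\mathcal{K}_{inf}^{L}$ and $\mathcal{K}_{inf}^{U}$ and not from the approximate KL used only to compute sampling weights, so the self-normalized deviation inequalities of \cite{agrawal2020optimal} apply verbatim. The choice $\beta(t,\delta) = \log((K-1)/\delta) + 5\log(t+1) + 2$ then yields the error bound $\delta$ via a union bound over time and the $K-1$ competing arms.

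For the sample complexity, I would follow the TS recipe adapted to our setting in three steps. First, the forced exploration step ensuring $N_i(lm) \geq \sqrt{lm}$ for every arm implies that the empirical bandit $\hat{p}(lm)$ converges almost surely to $p$. Second, I would establish continuity at $p$ of the map $\hat{q} \mapsto \hat{w}^*(\hat{q})$ returning the optimizer of the approximate problem (\ref{approx_lower_bound_problem}); this uses concavity of each $\mathcal{P}_{i,a}$ in $w$ together with the monotonicities identified in Section \ref{approx_lb_section}, which yield a unique solution continuously dependent on the underlying instance. Third, the tracking mechanism (mirroring \cite{shubhada2019}) ensures $N_i(lm)/(lm) \to \hat{w}^*_i(p)$ almost surely. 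Combining these with continuity of $w \mapsto \min_{i \neq 1} \mathcal{P}_i(w)$ in a neighborhood of $\hat{w}^*(p)$, for any $\epsilon > 0$ and all large $l$,
\[
Z_{k^*}(l) \;\geq\; (lm)\bigl(V_{TS(A)}(p) - \epsilon\bigr),
\]
so stopping triggers once $(lm)(V_{TS(A)}(p) - \epsilon) \geq \beta(lm,\delta)$. Taking expectations and letting $\epsilon \to 0$ and then $\delta \to 0$ yields the claimed $\limsup$.

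The main obstacle is proving the continuity and the uniform-on-neighborhoods convergence needed in steps two and three, since $\hat{w}^*(\hat{q})$ is defined only implicitly through a maxmin over an extended support. Here the closed-form characterization (\ref{approx_min_rform})--(\ref{approx_prob_cond}) together with the monotonicities of Section \ref{approx_lb_section} are what enable the argument: they provide an explicit bisection route from $\hat{q}$ to $\hat{w}^*(\hat{q})$ whose continuity can be verified directly, and they also give uniform bounds that let one transfer the convergence of $\hat{p}(lm) \to p$ into convergence of $Z_{k^*}(l)/(lm)$ to $V_{TS(A)}(p)$. A secondary subtlety is the mismatch that we \emph{track} approximate weights but \emph{measure progress} with the exact $\mathcal{P}_i$; this is precisely why the bound features $V_{TS(A)}(p)$ rather than $V^*(p)$, and Theorem \ref{approxMaxMinTheorem} then upgrades the conclusion to $V_{TS(A)}(p) = V^*(p)(1 + o(1))$ as $\gamma \to 0$, yielding the asymptotic near-optimality advertised in the contributions.
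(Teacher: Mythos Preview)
Your proposal is correct and follows essentially the same approach as the paper: both use the standard Track-and-Stop template, with $\delta$-correctness via the concentration inequality of \cite{agrawal2021optimal} (which applies because the stopping statistic uses the exact $\mathcal{K}_{inf}$), and the sample complexity via forced exploration, tracking convergence $N_i(lm)/(lm)\to\hat{w}^*_i(p)$, and continuity of the objective around $\hat{w}^*(p)$. The paper's argument is slightly more explicit in that it defines a good event $\mathcal{G}_T(\epsilon)$ (Wasserstein balls for $\hat{p}$ and $\epsilon$-closeness of the empirical proportions), bounds $\mathbb{E}[\tau_\delta]\le T_0(\delta,\epsilon)+m+\sum_T\mathbb{P}(\mathcal{G}_T^c(\epsilon))$, and invokes Lemmas~14 and~32 of \cite{agrawal2020optimal} to control the tracking convergence and the tail sum; your outline absorbs these steps into ``taking expectations and letting $\epsilon\to 0$,'' but the substance is the same.
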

See sections E and F in the supplementary material for a proof of Theorem 3. Note that by definition we have $V^*(p) \leq V_{TS(A)}$ and hence we do suffer some loss in sample complexity vis-a-vis the TS algorithm. However, when $\gamma$ is small, the difference is negligible as $w^*(p) \approx \hat{w}^*(p)$.
\subsection{Computational Benefit of Poisson Approximation}
The computational benefit of TS(A) vis-a-vis the exact algorithm, call it
TS (E),  is in how the approximate and exact lower bound problems are solved.\\
\\
Let us first examine the number of operations required in finding the exact lower bound. In our implementation, we used Brent's method for one-dimensional optimization and the bisection method for root finding. To get a relative error of $\epsilon$ in Brent's method (see Chapter 4 in \cite{brent2013algorithms}) we require $\mathcal{O}\big(\log^2\big(\frac{1}{\epsilon}\big)\big)$ operations. The bisection method takes $\mathcal{O}\big(\log\big(\frac{1}{\epsilon}\big)\big)$ for a relative accuracy of $\epsilon$. Lemma 2 (see Section A of the supplementary material) reduces the process of computing $\mathcal{K}_{inf}^{L}$ and $\mathcal{K}_{inf}^{U}$ to a root-finding procedure, causing said computations to take about $\mathcal{O}\big(\log\big(\frac{1}{\epsilon}\big)\big)$ operations. The inner optimization $\mathcal{P}_i$ is a convex optimization that requires $\mathcal{O}\big(\log^2\big(\frac{1}{\epsilon}\big)\big)$ operations. The outer optimization in (\ref{sample_lower_bound}) can be reduced to solving two sets of simultaneous root finding procedures and hence would take $\mathcal{O}\big(\log^2\big(\frac{1}{\epsilon}\big)\big)$. Thus, the total number of operations to solve the exact lower bound (\ref{sample_lower_bound}) is $\mathcal{O}\big(\log^5\big(\frac{1}{\epsilon}\big)\big)$.
\\
\\
In the approximate problem $C_i, C_{1i}$'s are the unknown variables, whose behaviour we analyze. Using $g_i$ (section \hyperref[approx_lb_section]{3.1}) to write $C_{i}$ as a function of $C_{1i}$ requires about $\mathcal{O}\big(\log\big(\frac{1}{\epsilon}\big)\big)$ operations for each such conversion using the bisection method. Then, each of the $C_{1i}$ $(i \neq 2)$, are written as function of $C_{12}$ through $\xi_i$. This again requires about $\mathcal{O}\big(\log\big(\frac{1}{\epsilon}\big)\big)$ operations for each such conversion. Finally the solution of $C_{12}$ through $h_i$ requires another factor of $\mathcal{O}\big(\log\big(\frac{1}{\epsilon}\big)\big)$. This gives the total required number of operations to be $\mathcal{O}\big(\log^3\big(\frac{1}{\epsilon}\big)\big)$. Thus, we are saving about $\mathcal{O}\big(\log^2\big(\frac{1}{\epsilon}\big)\big)$ by solving the approximate problem vis-a-vis the exact one.

%%%%%%%%%%%%%%%%%%%%%%%%%%%%%%%%%%%%%%%%%%%%%%%%%%%

\section{Numerical Experiments}\label{num_experimental_section}
We compare the sample complexity and computational time between TS(A) and  Track \& Stop TS(E) algorithm  proposed in \cite{agrawal2020optimal}. We make the comparison across different arms, $\gamma$ and $\alpha$ structures at a confidence level $\delta = 0.01$. We run each algorithm for $100$ sample paths and their average sample complexity and average computational time are reported in the Table 1 below. The algorithm for both TS(E) and TS(A) proceeds in batches
of size $\gamma^{-\alpha_{\max}}$.
\begin{table}[h]
\centering
  \begin{tabular}{|p{2.2cm}|c|c|c|c|}
    \hline
    \multirow{2}{2.2cm}{\textbf{Experiment: ($\gamma$,$\alpha$)}} & \multicolumn{2}{c|}{\textbf{Samples (m)}} & \multicolumn{2}{c|}{\textbf{Runtime (s)}}\\
    \cline{2-5}
    & \textbf{TS(E)} & \textbf{TS(A)} & \textbf{TS(E)} & \textbf{TS(A)}\\
    %\hhline{~--}
    \hline
    $\gamma=10^{-3}$, $\alpha=(1,1,1)$ & 0.93 & 0.98 & 619.7  & 51.91  \\ 
    \hline
    $\gamma=10^{-2}$, $\alpha=(1,1.5,2)$ & 1.21 & 1.23 & 97.33  & 6.59  \\ \hline
    $\gamma=10^{-3}$, $\alpha=(1,1,1,1,1)$ & 2.03 & 2.22 & 1860.71  &  290.47 \\ \hline
    $\gamma=10^{-2}$, $\alpha=(2,1.5,2,2.5,1)$ & 14.93 & 16.87 & 152.28  & 23.64 \\ \hline
  \end{tabular}
\caption{Comparison between the TS and TS(A) algorithms. Sample complexity is reported in million (m) samples. The computational runtime is reported in seconds (s).}
\end{table}

The table shows for all experiments TS(A) takes slightly more samples (1-13$\%$) to stop and recommend an arm compared to TS. The computational savings of TS(A) is about $6-12$ times the TS algorithm. These simple experiments underscore the trade-off between sample complexity and computational time.  
%The experiments were run on, we considered $\gamma = 10^{-2},10^{-3}$ with varying structures of $\alpha_i, \ i \in \{1,2,3\}$. The theoretical lower bond was first evaluated, and then the average sample complexities and $\delta$-correctness ($\delta = 0.01$) of the TS(A) algorithm were compared to the algorithm outlined in \cite{agrawal2020optimal} as $AL_1$. The average runtime of 100 sample paths in each model was also recorded.
%\\
%\\
%When $\alpha_1 = \alpha_2 = \alpha_3 = 1.5$, the three arms were assigned means of $,$ and $ $ and the lower bound on sample complexity was $234453$. TS had an average sample complexity of $936920$ and was $\delta$-correct. TS(A) too was $\delta$-correct with an average sample complexity of $981580$. However, TS took an average of 619.7 seconds per sample path, for which TS(A) only took an average of 51.91 seconds.
%\\
%\\
%When $\alpha_1 = 1, \alpha_2 = 1.5, \alpha_3 = 2$, the three arms were assigned means of $,$ and $ $  and the lower bound on sample complexity was $286496$. $AL_1$ had an average sample complexity of $1211612.24$ and was $\delta$-correct. TS(A) too was $\delta$-correct with an average sample complexity of $1233742$. However, $AL_1$ took an average of 97.33 seconds per sample path, for which TS(A) only took an average of 6.59 seconds.
\section{Conclusion}
The paper proposes a rarity framework to study the fixed confidence BAI problem relevant to online ad placement. In this framework the positive reward probabilities are tiny while the corresponding rewards are quite large. Consequently, the mean rewards are $\mathcal{O}(1)$.\\
We introduce a Poisson approximation to the standard lower bound problem and use it to motivate an algorithm that is computationally faster than the optimal TS algorithm at the cost of a small increase sample complexity. We also use this approximation to derive asymptotic optimal weights which give insight into the lower bound behaviour in the rare event setting. We observe this trade-off between sample complexity and computational time in our numerical experiments.
%%%%%%%%%%%%%%%%%%%%%%%%%%%%%%%%%%%%%%%%%%%%%%%%%%%

\bibliographystyle{apalike}
\bibliography{references}

\appendix
\section{The $\mathcal{K}_{inf}$ problem and related reformulations}
\subsection{Dual form of $\mathcal{K}_{inf}$}
The following well-known Lemma %, proved in sections \ref{k_inf_u} and \ref{k_inf_l} for completeness, 
gives the dual representations of $\mathcal{K}^{U}_{inf}(.,.)$ and $\mathcal{K}^{L}_{inf}(.,.)$. We follow the approach used in \cite{honda2010asymptotically,agrawal2020optimal}.

\begin{lemma}{2}\label{k_inf_dual}
Consider any discrete distribution $\eta$ with a finite support $\{y_j\}_{j \in[n]}$ and an upper bound $B$. We assume $y_j\geq 0,\forall j$ and $0 < x < B$.

a) The dual representation of $\mathcal{K}^{U}_{inf}(\eta,x)$ is
\begin{equation*}
  \mathcal{K}^{U}_{inf}(\eta,x) = \max_{\lambda_U \in \big[0,\frac{1}{B-x}\big]}\sum_{j=0}^{n}\eta_j\log(1+\lambda_U(x-y_j)).  
\end{equation*}
The optimal $\lambda^*_U$ in the dual maximization above is characterised by:
\begin{equation*}
\begin{cases}
\lambda^*_U=0, & \text{if $x<\mu_{\eta}$,}\\
\lambda^*_U=\frac{1}{B-x}, & \text{if $x > \mu_{\eta}$ and $\sum_{j=0}^{n_i}\eta_j\big(\frac{B-x}{B-y_j}\big) < 1$,}\\
\sum_j \frac{y_j\eta_j}{1+\lambda^{*}_U(x-y_j)} = x, & \text{If $x > \mu_{\eta}$, and $\sum_{j=0}^{n}\eta_j\big(\frac{B-x}{B-y_j}\big) \geq 1$.}
\end{cases}
\end{equation*}
The support of the primal optimizer $\kappa^*$ satisfies $supp(\eta) \subseteq supp(\kappa^*) \subseteq supp(\eta) \cup \{B\}$. The constraint is tight at optimality:
\begin{equation*}
\mu_{\kappa^{*}} = x.
\end{equation*}
Further for $y_j \in supp(\eta)$:
\begin{equation*}
\kappa^{*}(y_j)=\frac{n_j}{1+\lambda^{*}_U(x-y_j)}.
\end{equation*}
\\
\\
b) The dual representation of $\mathcal{K}^{L}_{inf}(\eta,x)$ is
\begin{equation*}
  \mathcal{K}^{L}_{inf}(\eta,x) = \max_{\lambda_L \in \big[0,\frac{1}{x}\big]}\sum_{j=0}^{n}\eta_j\log(1-\lambda_L(x-y_j)).  
\end{equation*}
The optimal $\lambda^*_L$ in the dual maximization above is characterised by:
\begin{equation*}
\begin{cases}
\lambda^*_L=0, & \text{\hspace{1cm}if $x\geq \mu_{\eta}$,}\\
\sum_j \frac{(y_j-x)\eta_j}{1-\lambda^{*}_L(x-y_j)} = 0, & \text{\hspace{1cm}If $x < \mu_{\eta}$.}
\end{cases}
\end{equation*}
The support of the primal optimizer $\kappa^{*}$ satisfies $supp(\eta) = supp(\kappa^{*})$. The constraint is tight at optimality:
\begin{equation*}
\mu_{\kappa^{*}} = x.
\end{equation*}
Further for $y_j \in supp(\eta)$:
\begin{equation*}
\kappa^{*}(y_j)=\frac{n_j}{1-\lambda^{*}_L(x-y_j)}.
\end{equation*}
\end{lemma}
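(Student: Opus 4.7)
The plan is to derive both dual representations by Lagrangian duality, since the primal is a convex minimization over probability measures on $[0,B]$ with a single linear (mean) constraint, and Slater's condition is easily verified (for (a), a point mass at $B$ gives $\mu_\kappa = B > x$ strictly except in the degenerate case $x=B$; for (b), a point mass at $0$ gives $\mu_\kappa = 0 < x$). Strong duality then ensures that the primal value equals the dual value, so it suffices to establish the stated dual objectives and characterize the maximizing $\lambda^*$ via KKT.

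For part (a), I would first argue that the support of any primal optimizer must satisfy $\mathrm{supp}(\eta)\subseteq\mathrm{supp}(\kappa^*)\subseteq\mathrm{supp}(\eta)\cup\{B\}$: the first inclusion follows from finiteness of $KL(\eta,\kappa)$, and the second because to raise the mean as cheaply as possible one only ever wants to deposit extra mass at the largest admissible point $B$ (any other added atom would reduce mass available on $\mathrm{supp}(\eta)$, strictly increasing the KL, without any offsetting advantage). Then I would form the Lagrangian
\begin{equation*}
L(\kappa,\lambda_U,\mu)=\sum_j \eta_j\log(\eta_j/\kappa_j)-\lambda_U\bigl(\textstyle\sum_j y_j\kappa_j-x\bigr)+\mu\bigl(\textstyle\sum_j\kappa_j-1\bigr),
\end{equation*}
with $\lambda_U\ge 0$ and $\mu\in\mathbb{R}$. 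Stationarity in $\kappa_j$ gives $\kappa_j=\eta_j/(\mu-\lambda_U y_j)$; choosing the normalization $\mu=1+\lambda_U x$ rewrites this as $\kappa^*(y_j)=\eta_j/(1+\lambda_U(x-y_j))$, and substituting back collapses the dual objective to $\sum_j\eta_j\log(1+\lambda_U(x-y_j))$, as claimed. Positivity at the potentially added atom $B$ forces $\lambda_U\in[0,1/(B-x)]$.

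Next, I would characterize $\lambda^*_U$ by case analysis. If $x<\mu_\eta$, the constraint is slack at $\kappa=\eta$ and $\lambda^*_U=0$ with $\mathcal{K}^{U}_{inf}=0$. If $x>\mu_\eta$, the constraint binds, and by evaluating the total $\eta$-mass $\sum_j\eta_j(B-x)/(B-y_j)$ at the boundary $\lambda_U=1/(B-x)$, one checks whether the supported mass can already equal one without an atom at $B$. If this quantity is $<1$, we must place the missing mass at $B$ (the boundary is hit, so $\lambda^*_U=1/(B-x)$); if it is $\ge 1$, the optimum is interior and determined by the first-order condition $\sum_j y_j\eta_j/(1+\lambda^*_U(x-y_j))=x$, which is exactly the stationarity in $\lambda_U$ of the dual objective. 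In either case tightness $\mu_{\kappa^*}=x$ holds, and $\kappa^*(y_j)$ is as claimed.

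Part (b) follows by a symmetric argument with the sign flip from $\mu_\kappa\le x$; the support question is easier since adding mass at $0$ only hurts (it strictly raises the KL without being needed—the mass-redistribution formula $\kappa_j=\eta_j/(1-\lambda_L(x-y_j))$ already shifts weight to smaller $y_j$), so $\mathrm{supp}(\kappa^*)=\mathrm{supp}(\eta)$, and the admissible range $\lambda_L\in[0,1/x]$ comes from positivity at the smallest possible support point $y_j=0$. The case $x\ge\mu_\eta$ gives $\lambda^*_L=0$, while for $x<\mu_\eta$ the tightness $\mu_{\kappa^*}=x$ combined with the probability constraint yields $\sum_j(y_j-x)\eta_j/(1-\lambda^*_L(x-y_j))=0$. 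The main obstacle I expect is the careful bookkeeping at the boundary $\lambda_U=1/(B-x)$ in part (a), namely showing that the switch between an interior optimizer and one with an atom at $B$ is exactly governed by the inequality $\sum_j\eta_j(B-x)/(B-y_j)\lessgtr 1$; this requires verifying that the dual objective is concave in $\lambda_U$ and monotone on the relevant interval so that the boundary-versus-interior dichotomy is clean.
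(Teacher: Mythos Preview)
Your Lagrangian-duality approach is essentially the same as the paper's, which likewise introduces multipliers for the mean and normalization constraints, reduces to the one-parameter dual via the relation you write as $\mu=1+\lambda_U x$, and then distinguishes interior versus boundary optima by the sign of the dual derivative at $\lambda_U=1/(B-x)$. The only notable difference is that the paper derives the support inclusion $\mathrm{supp}(\kappa^*)\subseteq\mathrm{supp}(\eta)\cup\{B\}$ rigorously via a variational argument over positive measures (showing the inner minimizer's support lies in $\mathrm{supp}(\eta)\cup Z(\lambda)$ with $Z(\lambda)=\{y:-\lambda_2-\lambda_1 y=0\}\subseteq\{B\}$), whereas your ``extra mass goes to $B$'' heuristic would need to be tightened along those lines.
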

\begin{proof}
See sections \ref{k_inf_u} and \ref{k_inf_l}.
\end{proof}
\subsection{Proof of Lemma \hyperref[k_inf_dual]{2a}}\label{k_inf_u}
Define the set $\mathcal{D}:=\{0\} \cup [b,B]$. Suppose a probability distribution $\eta$ has finite support (say $\{0,y_1,...,y_n\}$ for some $n$) from $\mathcal{D}$. Let $\mathcal{M}^+(\mathcal{D})$ denote the set of positive finite measures on $\mathcal{D}$.
We want to find $\mathcal{K}^{U}_{inf}(\eta,x)$, which is defined as 
\begin{equation*}
\mathcal{K}^{U}_{inf}(\eta,x)=\min_{\substack{\textrm{supp}(\kappa) \subseteq \mathcal{D} \\ \mathbb{E}[\kappa] \geq x}} KL(\eta,\kappa).
\end{equation*}
We shall develop a Lagrangian duality for the above quantity in the space $\mathcal{M}^+(\mathcal{D})$.
The Lagrangian with multiplier $\lambda= (\lambda_1,\lambda_2)$ and $\kappa \in \mathcal{M}^+(\mathcal{D})$ is:
\begin{equation*}
\mathcal{L}(\kappa,\lambda) := KL(\eta,\kappa) + \lambda_1(x - \int_{\mathcal{D}}yd\kappa(y)) + \lambda_2(1 - \int_{\mathcal{D}}d\kappa(y)).
\end{equation*}
Then the dual objective becomes
\begin{equation*}
\mathcal{L}(\lambda) := \inf_{\kappa \in \mathcal{M}^+(\mathcal{D})}\mathcal{L}(\kappa,\lambda).
\end{equation*}
Let us define two quantities useful in the analysis:
\begin{equation*}
h(y,\lambda):= -\lambda_2 - \lambda_1y,
\end{equation*}
\begin{equation*}
   Z(\lambda) := \{y \in \mathcal{D}:h(y,\lambda)=0\}. 
\end{equation*}
We define the set 
\begin{equation*}
 \begin{aligned}
 \mathcal{R}_2 &:= \{\lambda \in \mathbb{R}^2: \lambda_1 \geq 0, \lambda_2 \in \mathbb{R},\lambda\neq 0, \inf_{y \in \mathcal{D}}h(y,\lambda) \geq 0\}\\
 &=\{\lambda \in \mathbb{R}^2: \lambda_1 \geq 0, \lambda_2 \in \mathbb{R}, \lambda\neq 0, -\lambda_2 \geq \lambda_1B \geq 0\}.
 \end{aligned}   
\end{equation*}
The lemma below shows that in maximising the dual objective $\mathcal{L}(\lambda)$, it is enough to restrict ourselves to the set $\mathcal{R}_2$.
\begin{lemma}{A.1.a}

\[\max_{\substack{\lambda_1\geq 0, \\ \lambda_2 \in \mathbb{R}}}\mathcal{L}(\lambda) = \max_{\lambda \in \mathcal{R}_2}\mathcal{L}(\lambda)\]
\end{lemma}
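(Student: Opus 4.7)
The idea is the standard Lagrangian observation: any multiplier $\lambda$ for which the Lagrangian fails a nonnegativity condition on its integrand against $\kappa$ can be driven to $-\infty$ by placing large mass at the offending point, while a multiplier with $\inf_y h(y,\lambda)\ge 0$ gives a finite dual value. Hence, dropping $\lambda\notin\mathcal{R}_2$ does not change the supremum. The proof proceeds in three short steps.

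\textbf{Step 1 (rewrite the Lagrangian).} First I would collect $\kappa$-dependent terms to get
\[
\mathcal{L}(\kappa,\lambda) \;=\; KL(\eta,\kappa) \;+\; \lambda_1 x + \lambda_2 \;+\; \int_{\mathcal{D}} h(y,\lambda)\,d\kappa(y).
\]
This display makes transparent how $\kappa$ enters: through the $KL$ term (which under scaling of $\kappa$ grows only logarithmically) and through a linear functional with integrand $h(y,\lambda)$.

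\textbf{Step 2 (kill multipliers outside $\mathcal{R}_2$).} Fix $\lambda$ with $\lambda_1\ge 0$ and $\lambda\notin\mathcal{R}_2$. There are two sub-cases. If $\lambda=0$, take $\kappa_c=c\eta$ with $c\to\infty$; then $\mathcal{L}(\kappa_c,0)=KL(\eta,c\eta)=-\log c \to -\infty$. Otherwise there exists $y_0\in\mathcal{D}$ with $h(y_0,\lambda)<0$. I would construct a test sequence $\{\kappa_c\}_{c\ge 1}\subseteq\mathcal{M}^+(\mathcal{D})$ so that the negative linear contribution at $y_0$ dominates any $KL$ penalty:
\[
\textbf{(a)}\ y_0\in\mathrm{supp}(\eta): \ \kappa_c \text{ agrees with }\eta \text{ except the atom }\eta(y_0) \text{ is replaced by } c\,\eta(y_0),
\]
\[
\textbf{(b)}\ y_0\notin\mathrm{supp}(\eta): \ \kappa_c=\eta+c\,\delta_{y_0}.
\]
In case (a), $KL(\eta,\kappa_c)=-\eta(y_0)\log c$ and $\int h\,d\kappa_c=\int h\,d\eta+(c-1)\eta(y_0)h(y_0,\lambda)$; in case (b), $KL(\eta,\kappa_c)=0$ (since $d\eta/d\kappa_c\equiv 1$ on $\mathrm{supp}(\eta)$) and $\int h\,d\kappa_c=\int h\,d\eta+c\,h(y_0,\lambda)$. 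In both cases the linear-in-$c$ negative term dominates, so $\mathcal{L}(\kappa_c,\lambda)\to-\infty$, whence $\mathcal{L}(\lambda)=-\infty$.

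\textbf{Step 3 (finiteness on $\mathcal{R}_2$, and conclude).} For $\lambda\in\mathcal{R}_2$ one has $h(y,\lambda)\ge 0$ on $\mathcal{D}$, so evaluating at $\kappa=\eta$ gives $\mathcal{L}(\eta,\lambda)=\lambda_1 x+\lambda_2+\int h\,d\eta$, a finite upper bound witness showing $\sup_{\lambda\in\mathcal{R}_2}\mathcal{L}(\lambda)>-\infty$. Combined with Step 2, every $\lambda$ outside $\mathcal{R}_2$ satisfies $\mathcal{L}(\lambda)=-\infty$ and hence cannot realize the outer maximum. Therefore the two maxima coincide.

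\textbf{Main obstacle.} The delicate point is Step 2: one has to choose the test measures $\kappa_c$ carefully so that, even though $\kappa_c$ is no longer a probability measure, $KL(\eta,\kappa_c)$ grows at most like $-\log c$ while the penalty $\int h\,d\kappa_c$ decays like $-c$. The domination of linear-in-$c$ by logarithmic-in-$c$ is what makes the dual value explode to $-\infty$, and splitting into whether $y_0$ lies in $\mathrm{supp}(\eta)$ or not is required because $KL$ is only well-behaved when $\eta\ll\kappa$. Once this scaling idea is correctly executed, the remainder of the lemma is immediate.
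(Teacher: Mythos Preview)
Your proof is correct and follows essentially the same approach as the paper: pick a point $y_0$ where $h(y_0,\lambda)<0$, place growing mass there while keeping $\eta$ elsewhere, and observe that the linear-in-$c$ negative term dominates the logarithmic $KL$ penalty. Your separate treatment of $\lambda=0$ is in fact more careful than the paper's, which tacitly assumes such a $y_0$ exists whenever $\lambda\notin\mathcal{R}_2$. One small slip in Step 3: evaluating at $\kappa=\eta$ gives an \emph{upper} bound on $\mathcal{L}(\lambda)=\inf_\kappa\mathcal{L}(\kappa,\lambda)$, not a lower bound, so it does not establish $\mathcal{L}(\lambda)>-\infty$; but this step is unnecessary anyway, since Step 2 alone already forces the two suprema to coincide.
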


\begin{proof}
Suppose $\lambda \notin \mathcal{R}_2$. Then, there is a $y_0 \in \mathcal{D}$ such that $h(y_0,\lambda)<0$. We know that for any $M>0$, we have a measure $\kappa_M \in \mathcal{M}^+(\mathcal{D})$ such that 
\begin{equation*}
    \kappa_M(y_0) = M, \ \frac{d\kappa_M}{d\eta}(y) = 1,  \forall y \in \textrm{supp}(\eta)\backslash \{y_0\}
\end{equation*}
%Clearly, $\eta(y_0) = 0$.
So, we must have that $\textrm{supp}(\kappa_M) = \{y_0\} \cup \textrm{supp}(\eta)$.
\begin{equation*}
\begin{aligned}
  \mathcal{L}(\kappa_M,\lambda) &= \int_{\mathcal{D}}\log\bigg(\frac{d \eta}{d \kappa_M}(y)\bigg)d \eta(y)+\int_{\mathcal{D}}h(y,\lambda)d\kappa_M(y)+\lambda_1 x+\lambda_2 \\
  &=\eta(y_0)\log\bigg(\frac{\eta(y_0)}{M}\bigg)+M h(y_0,\lambda)+ \int_{\mathrm{supp}(\eta)}h(y,\lambda)d \kappa_M(y)+\lambda_1x+\lambda_2.
  \end{aligned}
\end{equation*}
Now as $M \to \infty$ the first two terms tend to $-\infty$ while the other terms remain bounded and gives the result.
\end{proof}
The next lemma characterises the minimizer $\kappa^*$ in the dual objective $\mathcal{L}(\lambda)$. The support of $\kappa^*$ is contained in 
$\mathrm{supp}(\eta) \cup Z(\lambda)$ and its density wrt $\eta$ (wherever it is well-defined) is $1/h(y,\lambda)$.
\begin{lemma}{A.1.b}
For $\lambda \in \mathcal{R}_2$, $\kappa^{*} \in \mathcal{M}^+(\mathcal{D})$ that minimizes $\mathcal{L}(\kappa,\lambda)$ satisfies $\mathrm{supp}(\eta) \subseteq \mathrm{\kappa^*} \subseteq \mathrm{supp}(\eta) \cup Z(\lambda).$
\\
Also, for $y \in \mathrm{supp}(\eta), h(y,\lambda)>0,$ and 
\[\frac{d \kappa^*}{d \eta}=\frac{1}{-\lambda_1-\lambda_2y}.\]
\end{lemma}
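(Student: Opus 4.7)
The plan is to reduce the infinite-dimensional minimization of $\mathcal{L}(\kappa,\lambda)$ over $\kappa \in \mathcal{M}^+(\mathcal{D})$ to pointwise/atomwise optimizations by exploiting the linearity of every term except the KL term in $\kappa$. First, I rewrite the Lagrangian as
\[
\mathcal{L}(\kappa,\lambda) \;=\; \sum_{y \in \mathrm{supp}(\eta)}\!\bigl[\,\eta(y)\log\bigl(\eta(y)/\kappa(y)\bigr) + h(y,\lambda)\,\kappa(y)\,\bigr] \;+\; \int_{\mathcal{D}\setminus\mathrm{supp}(\eta)} h(y,\lambda)\,d\kappa(y) \;+\; \lambda_1 x + \lambda_2,
\]
which is legitimate because $\eta$ is discrete with finite support, so $KL(\eta,\kappa)$ only depends on the atoms of $\kappa$ at $\mathrm{supp}(\eta)$. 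This decomposition shows the objective splits into independent contributions indexed by $y$, so I can minimize each piece separately.

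Next, I would establish the two support inclusions. For $\mathrm{supp}(\eta) \subseteq \mathrm{supp}(\kappa^*)$: if $\kappa^*(y_0)=0$ for some $y_0 \in \mathrm{supp}(\eta)$, the KL term is $+\infty$, contradicting minimality. For $\mathrm{supp}(\kappa^*) \subseteq \mathrm{supp}(\eta) \cup Z(\lambda)$: because $\lambda \in \mathcal{R}_2$, we have $h(y,\lambda) \geq 0$ for all $y \in \mathcal{D}$; any mass placed at a point $y \notin \mathrm{supp}(\eta) \cup Z(\lambda)$ satisfies $h(y,\lambda) > 0$ strictly and contributes a strictly positive amount to $\int h\,d\kappa$ without affecting the KL term, so removing that mass strictly reduces $\mathcal{L}$.

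For the atoms at $y \in \mathrm{supp}(\eta)$, I minimize the scalar function $f_y(t) := \eta(y)\log(\eta(y)/t) + h(y,\lambda)\,t$ over $t>0$. If $h(y,\lambda)\le 0$, then $f_y(t)\to -\infty$ as $t\to\infty$, so no finite minimizer of $\mathcal{L}$ exists; hence the existence of $\kappa^*$ forces $h(y,\lambda) > 0$ for every $y \in \mathrm{supp}(\eta)$. Under this condition, $f_y$ is strictly convex on $(0,\infty)$ and setting $f_y'(t) = -\eta(y)/t + h(y,\lambda) = 0$ yields the unique minimizer
\[
\kappa^*(y) \;=\; \frac{\eta(y)}{h(y,\lambda)} \;=\; \frac{\eta(y)}{-\lambda_2 - \lambda_1 y},
\]
which is the claimed Radon--Nikodym derivative $d\kappa^*/d\eta = 1/h(y,\lambda)$.

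The only subtle point is handling possible extra mass that $\kappa^*$ may place on $Z(\lambda)\setminus\mathrm{supp}(\eta)$: such mass contributes $0$ to $\mathcal{L}$ and so is allowed but not forced, which is exactly why the conclusion states $\mathrm{supp}(\kappa^*) \subseteq \mathrm{supp}(\eta) \cup Z(\lambda)$ (inclusion, not equality). I expect no real obstacle; the main thing to watch is justifying that pointwise optimization is equivalent to minimization over $\mathcal{M}^+(\mathcal{D})$, which follows from the additive decomposition above together with the fact that $\mathcal{M}^+(\mathcal{D})$ permits arbitrary non-negative atoms at each point of $\mathcal{D}$ independently.
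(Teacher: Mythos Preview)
Your argument is correct and takes a genuinely different route from the paper. The paper proceeds by a variational first-order condition: it posits a minimizer $\kappa^*$ (appealing to strict convexity), forms the convex combination $\kappa_t = (1-t)\kappa^* + t\kappa_1$ for arbitrary $\kappa_1$, and differentiates $\mathcal{L}(\kappa_t,\lambda)$ at $t=0$ to obtain the optimality inequality; the support and density conclusions are then read off from that inequality. You instead exploit the additive separability of $\mathcal{L}$ across atoms of $\eta$ and the residual linear term over $\mathcal{D}\setminus\mathrm{supp}(\eta)$, reducing everything to independent one-dimensional minimizations of $f_y(t)=\eta(y)\log(\eta(y)/t)+h(y,\lambda)t$. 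Your approach is more elementary and constructive: it yields the density formula directly from calculus, and makes transparent both why $h(y,\lambda)>0$ is forced on $\mathrm{supp}(\eta)$ (otherwise $f_y$ is unbounded below) and why mass on $Z(\lambda)\setminus\mathrm{supp}(\eta)$ is permitted but not determined. The paper's variational route is the standard convex-analysis template and would extend to non-atomic $\eta$, but in the present finite-support setting your decomposition is cleaner and in fact more complete than the paper's rather terse derivation.
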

\begin{proof}
Given $\lambda \in \mathcal{R}_2$, the inner optimization problem is strictly convex in $\kappa$. This means that a unique minimizer $\kappa^*$ must exist. This $\kappa^*$ must satisfy for any arbitrary $\kappa_1, \kappa_t:= (1-t)\kappa^* + t\kappa_1$, $\frac{\partial \mathcal{L}(\kappa_t,\lambda)}{\partial t}\bigg|_{t=0}\geq 0$. \\
Let us define $\mathcal{L}(t) :=\mathcal{L}(\kappa_t,\lambda)$ which is
\begin{equation*}
    \int_{\mathrm{supp}(\eta)}\log\bigg(\frac{d \eta}{d \kappa_t}(y)\bigg)d \eta(y) + \int_{\mathcal{D}}h(y,\lambda)d\kappa_t(y)+\lambda_1x+\lambda_2.
\end{equation*}
Then, 
\begin{equation*}
  \frac{d \mathcal{L}(t)}{dt} = \int_{\mathrm{supp}(\eta)}\frac{d \eta}{d \kappa^*}(y)(d\kappa^*(y) - d\kappa_1(y))+ \int_{\mathcal{D}}h(y,\lambda)(d\kappa_1(y)-d\kappa^*(y)).  
\end{equation*}
So,
\begin{equation*}
    \frac{d \mathcal{L}(t)}{dt}\bigg|_{t=0} = -\int_{\mathcal{D} \backslash \mathrm{supp}(\eta)}h(y,\lambda) d\kappa^*(y))
    + \int_{\mathcal{D}\backslash \mathrm{supp}(\eta)}h(y,\lambda)(d\kappa_1(y)).
\end{equation*}
Now, $\lambda \in \mathcal{R}^2$ guarantees that $\mathcal{L}^{'}(0)\geq 0$. This completes our proof.
\end{proof}
\begin{remark}{A.1.1}
  If $y \in Z(\lambda)$, then $y$ can only be $-\frac{\lambda_2}{\lambda_1}$. Therefore, we get that $Z(\lambda) = \big\{-\frac{\lambda_2}{\lambda_1}\big\}$, if $\lambda_1 \geq 0, -\frac{\lambda_2}{\lambda_1} \in \mathcal{D}$ and $Z(\lambda) = \emptyset$, otherwise.
\end{remark}
It now remains to find $\underset{\lambda \in \mathcal{R}_2}{\max}\mathcal{L}(\lambda)$ in order to characterise the Lagrangian dual of $\mathcal{K}^{U}_{inf}(\eta,x)$.

%\begin{theorem}{1a}
%The dual representation of $\mathcal{K}^{U}_{inf}(\eta,x)$ is given by

%If $x < \mathbb{E}[\eta]$,
%\begin{equation*}
%    \mathcal{K}^{U}_{inf}(\eta,x) = 0
%\end{equation*}
%If $x > \mathbb{E}[\eta]$, and $\sum_{j=0}^{n}\eta_j\big(\frac{B-x}{B-y_j}\big) < 1$, then
%\begin{equation*}
%    \mathcal{K}^{U}_{inf}(\eta,x) = \sum_{j=0}^{n}\eta_j\log\bigg(\frac{B-y_j}{B-x}\bigg).
%\end{equation*}
%If $x > \mathbb{E}[\eta]$, and $\sum_{j=0}^{n}\eta_j\big(\frac{B-x}{B-y_j}\big) \geq 1$, then
%\begin{equation*}
%  \mathcal{K}^{U}_{inf}(\eta,x) = \max_{\lambda_U \in \big(0,\frac{1}{B-x}\big)}\sum_{j=0}^{n}\eta_j\log(1+\lambda_U(x-y_j))  
%\end{equation*}
%with the maximizing $\lambda_U$ satisfying $\sum_j \frac{\eta_j}{1+\lambda_U(x-y_j)} = 1$ and $\sum_j \frac{y_j\eta_j}{1+\lambda_U(x-y_j)} = x$.
%\end{theorem}
%\begin{proof}
\textbf{\textit{If $Z(\lambda) = \Phi$}}, $\textrm{supp}(\kappa^*) = \textrm{supp}(\eta)$. We can then say from the characterization of $\kappa^*$ that 
\[\mathcal{K}_{inf}^{U}(\eta,x) = \max_{\lambda \in \mathcal{R}_2}\sum_{j=0}^{n}\eta_j\log(-\lambda_2-\lambda_1y_j)\]
The first order conditions tell us that $\sum_j \frac{\eta_j}{\lambda_2-\lambda_1y_j} = 1$ and $\sum_j \frac{y_j\eta_j}{\lambda_2-\lambda_1y_j} = x$. Multiplying the first equation by $-\lambda_2$ and the second by $-\lambda_1$ and then adding the two would give us that $\lambda_2-\lambda_1x = 1$. And $\lambda_2\geq\lambda_1B \Rightarrow 1 + \lambda_1x \geq \lambda_1B \Rightarrow \lambda_1 \in \big[0,\frac{1}{B-x}\big]$. We can therefore conclude that
\[\mathcal{K}^{U}_{inf}(\eta,x) = \max_{\lambda_1 \in \big[0,\frac{1}{B-x}\big]}\sum_{j=0}^{n}\eta_j\log(1+\lambda_1(x-y_j))\]

\textbf{\textit{If $Z(\lambda) \neq \Phi$}}, then $-\frac{\lambda_2}{\lambda_1} \leq B$. But $\lambda \in \mathcal{R}_2$ implies that $-\frac{\lambda_2}{\lambda_1} \geq B$. Hence, $-\frac{\lambda_2}{\lambda_1} = B$. Then, we can say that
\[\mathcal{K}_{inf}^{U}(\eta,x) = \max_{\lambda_1 \geq 0}\sum_{j=0}^{n}\eta_j\log(\lambda_1(B-y_j)).\]
Let $\lambda_U^*$ denote the maximizing $\lambda_1$, $\kappa^*(B)$ denote the mass that $\kappa^*$ puts at $B$. Then, we get from the first order conditions that $\sum_j \frac{\eta_j}{\lambda_U^*(B-y_j)} + \kappa^*(B) = 1$ and $\sum_j \frac{y_j\eta_j}{\lambda_U^*(B-y_j)} + B\kappa^*(B)= x$. Multiplying the first equation by B and adding to the second gives us that $B-x = \frac{1}{\lambda_U^*} \Rightarrow \lambda_U^* = \frac{1}{B-x}$. Therefore, in this case,
\[\mathcal{K}^{U}_{inf}(\eta,x) = \sum_{j=0}^{n}\eta_j\log\bigg(\frac{B-y_j}{B-x}\bigg).\]
Note that this can happen iff $\sum_{j=0}^{n}\eta_j\log\bigg(\frac{B-x}{B-y_j}\bigg) \leq 1$.
\\
\\
Irrespective of whether or not $Z(\lambda) = \Phi$, we can say that
\[\mathcal{K}^{U}_{inf}(\eta,x) = \max_{\lambda_1 \in \big[0,\frac{1}{B-x}\big]}\sum_{j=0}^{n}\eta_j\log(1+\lambda_1(x-y_j))\].
Let us define $p(\lambda_1) := \sum_{j=0}^{n}\eta_j\log(1+\lambda_1(x-y_j))$, $\lambda_1 \in \big[0,\frac{1}{B-x}\big]$. Then, $p^{'}(\lambda_1) = \sum_{j=0}^{n}\frac{\eta_j(x-y_j)}{1+\lambda_1(x-y_j)}$ and $p^{''}(\lambda_1) = -\sum_{j=0}^{n}\frac{\eta_j(x-y_j)^2}{(1+\lambda_1(x-y_j))^2}$. The expression for $p^{''}$ leads us to conclude that $p$ is always concave in $\lambda_1$ and hence, must have a unique maximizer.
\\
\\
If $x\leq \mathbb{E}_\eta$, note that $p^{'}(0) = x - \sum_{j=0}^{n} \eta_j y_j \leq 0$, i.e., $p$ decreases in $\big[0,\frac{1}{B-x}\big]$. Hence, we must have $\mathcal{K}^{U}_{inf}(\eta,x) = \max_{\lambda_1 \in \big[0,\frac{1}{B-x}\big]} p(\lambda_1) = p(0) = 0$. Since the maximizer is $\lambda_U^* = 0$, we know from the definition of $Z(\lambda)$ that $Z(\lambda) = \Phi$, and therefore, $\textrm{supp}(\kappa^*) = \textrm{supp}(\eta)$.
\\
\\
If $x > \mathbb{E}_\eta$, then we have that $p^{'}(0) > 0$, meaning that $p$ is increasing at $\lambda_1 = 0$ and therefore, may take the maximum value at either $\lambda_U^* = \frac{1}{B-x}$ or $\lambda_U^* \in \big(0,\frac{1}{B-x}\big)$. Let us first compute $p^{'}\big(\frac{1}{B-x}\big)$.
\begin{equation*}
\begin{aligned}
&p^{'}\big(\frac{1}{B-x}\big) = \sum_{j=0}^{n} \eta_j \frac{(x-y_j)(B-x)}{(B-y_j)}\\
=&(B-x)\sum_{j=0}^{n}\frac{\eta_jx-\eta_jB+\eta_jB-\eta_jy_j}{B-y_j}\\
=&-(B-x)^2\sum_{j=0}^{n}\frac{\eta_j}{B-y_j} + (B-x)\\
=&(B-x)\bigg[1 - \sum_{j=0}^{n}\eta_j\big(\frac{B-x}{B-y_j}\big)\bigg]
\end{aligned}
\end{equation*}
If $p^{'}\big(\frac{1}{B-x}\big) \leq 0$, then $p$ must reach its maximum in $\big(0,\frac{1}{B-x}\big)$. This happens iff $\sum_{j=0}^{n}\eta_j\big(\frac{B-x}{B-y_j}\big) \geq 1$.
\\
\\
If $p^{'}\big(\frac{1}{B-x}\big) > 0$, then $p$ must reach its maximum at $\frac{1}{B-x}$. This happens iff $\sum_{j=0}^{n}\eta_j\big(\frac{B-x}{B-y_j}\big) < 1$.
%\end{proof}
\\
\\
\begin{remark}{A.1.2}\label{extra_point_condition}
    For the rare event setup, it is now easy to check that mass will be put at $B_i\gamma^{-\alpha_i}$ in $\mathcal{K}^{U}_{inf}(p_i,x)$ iff $x > F_0(\gamma)$, where $F_0(\gamma) := \frac{B_i}{\big(\sum_{j=1}^{n}\frac{a_{ij}p_{ij}}{B_i-a_{ij}}\big)^{-1}+\gamma^{\alpha_i}}$.
\end{remark}

%\begin{equation*}
%\begin{aligned}
    %&\sum_{j=1}^{n}p_{ij}\gamma^{\alpha_i}\bigg(\frac{B_i\gamma^{-\alpha_i}-x}{B_i\gamma^{-\alpha_i}-a_{ij}\gamma^{-\alpha_i}}\bigg) + (1-\sum_{j=1}^{n}p_{ij}\gamma^{\alpha_i})\bigg(\frac{B_i\gamma^{-\alpha_i}-x}{B_i\gamma^{-\alpha_i}}\bigg) < 1\\
    %&\Leftrightarrow \sum_{j=1}^{n}p_{ij}\gamma^{\alpha_i}\bigg(\frac{B_i-x\gamma^{\alpha_i}}{B_i-a_{ij}}\bigg)  + (1-\sum_{j=1}^{n}p_{ij}\gamma^{\alpha_i})\bigg(\frac{B_i-x\gamma^{\alpha_i}}{B_i}\bigg) < 1\\
    %&\Leftrightarrow \gamma^{\alpha_i}(B_i-x\gamma^{\alpha_i})\sum_{j=1}^{n}p_{ij}\bigg(\frac{1}{B_i-a_{ij}}-\frac{1}{B_i}\bigg) + \frac{B_i-x\gamma^{\alpha_i}}{B_i} < 1\\
    %&\Leftrightarrow \gamma^{\alpha_i}\sum_{j=1}^{n}\frac{a_{ij}p_{ij}}{B_i-a_{ij}} + 1 < \frac{B_i}{B_i-x\gamma^{\alpha_i}}\\
    %&\Leftrightarrow 
%\end{aligned}
%\end{equation*}

\subsection{Proof of Lemma \hyperref[k_inf_dual]{2b}}\label{k_inf_l}
We want to find
\begin{equation*}
    \mathcal{K}^{L}_{inf}(\eta,x)=\min_{\substack{\textrm{supp}(\kappa) \subseteq \mathcal{D} \\ \mathbb{E}[\kappa] \leq x}} KL(\eta,\kappa)
\end{equation*}
Just as in section \ref{k_inf_u}, we shall develop a Lagrangian dual for  $\mathcal{K}^{L}_{inf}(\eta,x)$. 
The Lagrangian with multiplier $\lambda= (\lambda_1,\lambda_2)$ is:
\begin{equation*}
  \mathcal{L}(\kappa,\lambda) := KL(\eta,\kappa) - \lambda_1(x - \int_{\mathcal{D}}yd\kappa(y)) - \lambda_2(1 - \int_{\mathcal{D}}d\kappa(y))  
\end{equation*}
Similar to section \ref{k_inf_u}, define the quantities
\begin{equation*}
  \mathcal{L}(\lambda) := \inf_{\kappa \in \mathcal{M}^+(\mathcal{D})}\mathcal{L}(\kappa,\lambda),  
\end{equation*}
\begin{equation*}
    \begin{aligned}
    & h(y,\lambda):= \lambda_2 + \lambda_1y,\\
    &Z(\lambda) := \{y \in \mathcal{D}:h(y,\lambda)=0\}
    \end{aligned}
\end{equation*}
and the set
\begin{equation*}
   \begin{aligned}
   \mathcal{R}_2 &:= \{\lambda \in \mathbb{R}^2: \lambda_1 \geq 0, \lambda_2 \in \mathbb{R},\lambda\neq 0, \inf_{y \in \mathcal{D}}h(y,\lambda) \geq 0\}\\
   & =\{\lambda \in \mathbb{R}^2: \lambda_1 \geq 0, \lambda_2 \geq 0, \lambda\neq 0\}.
   \end{aligned} 
\end{equation*}
As in section \ref{k_inf_u} we have the following lemmas:
\begin{lemma}{A.2.a}
\[\max_{\substack{\lambda_1\geq 0, \\ \lambda_2 \in \mathbb{R}}} \mathcal{L}(\lambda) = \max_{\lambda \in \mathcal{R}_2} \mathcal{L}(\lambda)\]
\end{lemma}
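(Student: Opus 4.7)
The plan is to follow the strategy of Lemma A.1.a almost verbatim. The containment $\max_{\lambda_1 \geq 0, \lambda_2 \in \mathbb{R}} \mathcal{L}(\lambda) \geq \max_{\lambda \in \mathcal{R}_2} \mathcal{L}(\lambda)$ is immediate since $\mathcal{R}_2 \subseteq \{(\lambda_1,\lambda_2) : \lambda_1 \geq 0\}$. For the reverse inequality it suffices to show that any $\lambda$ with $\lambda_1 \geq 0$ but $\lambda_2 < 0$ forces $\mathcal{L}(\lambda) = -\infty$, so that such $\lambda$ cannot achieve the outer supremum. (The excluded case $\lambda = 0$ can be absorbed by noting that the supremum over $\mathcal{R}_2$ can approach any value attained near the origin by continuity, so it need not be treated separately.)

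To exhibit a point where $h$ becomes negative, I would take $y_0 = 0$. Since $\mathcal{D} = \{0\} \cup [b,B]$ we have $0 \in \mathcal{D}$, and under the hypothesis $\lambda_2 < 0$ we get $h(0,\lambda) = \lambda_2 + \lambda_1 \cdot 0 = \lambda_2 < 0$. The restriction $\lambda_1 \geq 0$ means the only way to leave $\mathcal{R}_2$ is through $\lambda_2 < 0$, so this single choice of $y_0$ handles every case and no splitting on the sign of $\lambda_1$ is needed, in contrast to Lemma A.1.a where the upper bound $B$ on support also had to be exploited.

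Next, for each $M > 0$ I would construct a witness measure $\kappa_M \in \mathcal{M}^+(\mathcal{D})$ with $\kappa_M(\{0\}) = M$ and $d\kappa_M/d\eta \equiv 1$ on $\mathrm{supp}(\eta) \setminus \{0\}$, directly mimicking the construction in the proof of Lemma A.1.a. Using $\mathcal{L}(\kappa,\lambda) = KL(\eta,\kappa) + \int_\mathcal{D} h(y,\lambda)\, d\kappa(y) - \lambda_1 x - \lambda_2$, substitution gives
\begin{equation*}
\mathcal{L}(\kappa_M,\lambda) = KL(\eta,\kappa_M) + M\lambda_2 + \int_{\mathrm{supp}(\eta) \setminus \{0\}} h(y,\lambda)\, d\eta(y) - \lambda_1 x - \lambda_2.
\end{equation*}
If $0 \in \mathrm{supp}(\eta)$, the KL term contains the summand $\eta(\{0\})\log(\eta(\{0\})/M) \to -\infty$ as $M \to \infty$; if $0 \notin \mathrm{supp}(\eta)$, the KL term simply equals $0$ because $d\kappa_M/d\eta \equiv 1$ on $\mathrm{supp}(\eta)$. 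In both cases the dominant behavior comes from $M\lambda_2 \to -\infty$ (since $\lambda_2 < 0$), while the remaining terms are bounded in $M$. Hence $\mathcal{L}(\lambda) \leq \lim_{M \to \infty} \mathcal{L}(\kappa_M,\lambda) = -\infty$, completing the reduction.

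There is no real obstacle: the argument is structurally identical to Lemma A.1.a, and the mild simplification is that the inclusion of $0$ in $\mathcal{D}$ provides a universal witness point irrespective of $\lambda_1$, so the description of $\mathcal{R}_2$ collapses to the simpler set $\{\lambda_1 \geq 0, \lambda_2 \geq 0, \lambda \neq 0\}$ as claimed.
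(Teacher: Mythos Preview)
Your proposal is correct and follows essentially the same approach as the paper: show that for $\lambda \notin \mathcal{R}_2$ one can drive $\mathcal{L}(\kappa_M,\lambda) \to -\infty$ by piling mass $M$ at a point where $h(\cdot,\lambda)$ is negative. The only cosmetic difference is that the paper argues abstractly (``there is a $y_0 \in \mathcal{D}$ with $h(y_0,\lambda)<0$'') while you make the witness explicit by taking $y_0 = 0$ and then split on whether $0 \in \mathrm{supp}(\eta)$; both lead to the same computation and conclusion.
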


\begin{proof}
Suppose $\lambda \notin \mathcal{R}_2$. Then, there is a $y_0 \in \mathcal{D}$ such that $h(y_0,\lambda)<0$. We know that for any $M>0$, we have a measure $\kappa_M \in \mathcal{M}^+(\mathcal{D})$ such that 
\[\kappa_M(y_0) = M, \ \frac{d\kappa_M}{d\eta}(y) = 1,  \forall y \in \textrm{supp}(\eta)\backslash \{y_0\}\]
%Clearly, $\eta(y_0) = 0$.
So, we must have that $\textrm{supp}(\kappa_M) = \{y_0\} \cup \textrm{supp}(\eta)$.
\begin{equation*}
    \begin{aligned}
    \mathcal{L}(\kappa,\lambda) &= \int_{\mathcal{D}}\log\bigg(\frac{d \eta}{d \kappa_M}(y)\bigg)d \eta(y)
    +\int_{\mathcal{D}}h(y,\lambda)d\kappa_M(y)-\lambda_1 x-\lambda_2\\
    &=\eta(y_0)\log\bigg(\frac{\eta(y_0)}{M}\bigg)+M h(y_0,\lambda)
    + \int_{\mathrm{supp}(\eta)}h(y,\lambda)d \kappa_M(y)-\lambda_1x-\lambda_2
    \end{aligned}
\end{equation*}
Now as $M \to \infty$ the first two terms tend to $-\infty$ while the other terms remain bounded and we obtain the desired result.
\end{proof}
\begin{lemma}{A.2.b}
For $\lambda \in \mathcal{R}_2$, $\kappa^{*} \in \mathcal{M}^+(\mathcal{D})$ that minimizes $\mathcal{L}(\kappa,\lambda)$ satisfies $\mathrm{supp}(\eta) \subseteq \mathrm{\kappa^*} \subseteq \mathrm{supp}(\eta) \cup Z(\lambda).$
\\
Also, for $y \in \mathrm{supp}(\eta), h(y,\lambda)>0,$ and 
\[\frac{d \kappa^*}{d \eta}=\frac{1}{\lambda_1+\lambda_2y}.\]
\end{lemma}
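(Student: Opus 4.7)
The proof will mirror that of Lemma A.1.b almost line for line, since only the sign of the constraint Lagrangian has flipped; the sole ingredient that is genuinely different is the shape of $h(y,\lambda)=\lambda_{2}+\lambda_{1}y$ and the corresponding set $\mathcal{R}_{2}$. The plan is therefore to set up the same perturbation argument and then read off the density formula and support structure from first-order optimality.

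First I would argue existence and uniqueness of $\kappa^{*}$. For fixed $\lambda\in\mathcal{R}_{2}$, the map $\kappa\mapsto\mathcal{L}(\kappa,\lambda)$ is strictly convex on the subset of $\mathcal{M}^{+}(\mathcal{D})$ on which the KL term is finite (which forces $\mathrm{supp}(\eta)\subseteq\mathrm{supp}(\kappa)$), because $KL(\eta,\cdot)$ is strictly convex and the remaining terms are linear in $\kappa$. Coercivity follows from $h(y,\lambda)\geq 0$ on $\mathcal{D}$ for $\lambda\in\mathcal{R}_{2}$: inflating mass at any point can only increase the linear contribution, so a finite unique minimizer exists. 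Next, for any $\kappa_{1}\in\mathcal{M}^{+}(\mathcal{D})$, set $\kappa_{t}:=(1-t)\kappa^{*}+t\kappa_{1}$ and require $\frac{d}{dt}\mathcal{L}(\kappa_{t},\lambda)\big|_{t=0}\geq 0$. Differentiating exactly as in the proof of Lemma A.1.b gives
\[
\left.\frac{d\mathcal{L}(t)}{dt}\right|_{t=0}=\int_{\mathrm{supp}(\eta)}\frac{d\eta}{d\kappa^{*}}(y)\bigl(d\kappa^{*}(y)-d\kappa_{1}(y)\bigr)+\int_{\mathcal{D}}h(y,\lambda)\bigl(d\kappa_{1}(y)-d\kappa^{*}(y)\bigr).
\]

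The analysis then splits on whether points lie in $\mathrm{supp}(\eta)$. On $\mathrm{supp}(\eta)$, varying $\kappa_{1}$ in both directions forces stationarity, which yields $\frac{d\eta}{d\kappa^{*}}(y)=h(y,\lambda)$, equivalently the stated formula $\frac{d\kappa^{*}}{d\eta}(y)=\frac{1}{\lambda_{2}+\lambda_{1}y}$; this is consistent only if $h(y,\lambda)>0$ on $\mathrm{supp}(\eta)$, which I note below. On $\mathcal{D}\setminus\mathrm{supp}(\eta)$, only the linear integral contributes: putting extra $\kappa_{1}$ mass at a point $y$ requires $h(y,\lambda)\geq 0$ (automatic on $\mathcal{R}_{2}$), and for $\kappa^{*}$ itself to charge such a $y$ the variational inequality must hold with equality, giving $h(y,\lambda)=0$, i.e.\ $y\in Z(\lambda)$. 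Together this proves $\mathrm{supp}(\eta)\subseteq\mathrm{supp}(\kappa^{*})\subseteq\mathrm{supp}(\eta)\cup Z(\lambda)$.

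The one place requiring a short extra argument is the strict positivity $h(y,\lambda)>0$ on $\mathrm{supp}(\eta)$ at the optimizer, needed to make the density formula meaningful. If instead $h(y_{0},\lambda)=0$ for some $y_{0}\in\mathrm{supp}(\eta)$, then the linear term gives no penalty for charging $y_{0}$ while the KL term is strictly decreasing in $\kappa(\{y_{0}\})$, so $\kappa^{*}(\{y_{0}\})$ would have to be infinite—contradicting $\kappa^{*}\in\mathcal{M}^{+}(\mathcal{D})$. Given this, the rest of the proof is a direct transcription of the corresponding steps in Lemma A.1.b; I do not anticipate any conceptual obstacle beyond keeping track of which sign the Lagrangian multipliers now carry.
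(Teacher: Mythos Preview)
Your proposal is correct and follows essentially the same perturbation argument as the paper: strict convexity gives a unique minimizer, and the first-order condition $\left.\frac{d}{dt}\mathcal{L}(\kappa_t,\lambda)\right|_{t=0}\ge 0$ for all $\kappa_1$ yields both the density formula on $\mathrm{supp}(\eta)$ and the support restriction off it. If anything you are slightly more careful than the paper, which does not spell out the short argument you give for why $h(y,\lambda)>0$ on $\mathrm{supp}(\eta)$.
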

\begin{proof}
Given $\lambda \in \mathcal{R}_2$, the inner optimization problem is strictly convex in $\kappa$. This means that a unique minimizer $\kappa^*$ must exist. This $\kappa^*$ must satisfy for any arbitrary $\kappa_1, \kappa_t:= (1-t)\kappa^* + t\kappa_1$, $\frac{\partial \mathcal{L}(\kappa_t,\lambda)}{\partial t}\bigg|_{t=0}\geq 0$. \\
Let us define $\mathcal{L}(t) :=\mathcal{L}(\kappa_t,\lambda)$ which is
\begin{equation*}
    \mathcal{L}(t) =\int_{\mathrm{supp}(\eta)}\log\bigg(\frac{d \eta}{d \kappa_M}(y)\bigg)d \eta(y) + \int_{\mathcal{D}}h(y,\lambda)d\kappa_t(y)-\lambda_1x-\lambda_2.
\end{equation*}
Then, 
\begin{equation*}
\frac{d \mathcal{L}(t)}{dt} = \int_{\mathrm{supp}(\eta)}\frac{d \eta}{d \kappa^*}(y)(d\kappa^*(y) - d\kappa_1(y))+ \int_{\mathcal{D}}h(y,\lambda)(d\kappa_1(y)-d\kappa^*(y)).    
\end{equation*}
So,
\begin{equation*}
 \frac{d \mathcal{L}(t)}{dt}\bigg|_{t=0} = -\int_{\mathcal{D} \backslash \mathrm{supp}(\eta)}h(y,\lambda) d\kappa^*(y)) + \int_{\mathcal{D}\backslash \mathrm{supp}(\eta)}h(y,\lambda)(d\kappa_1(y)).   
\end{equation*}
Now, $\lambda \in \mathcal{R}^2$ guarantees that $\mathcal{L}^{'}(0)\geq 0$. This completes our proof.
\end{proof}
Note that if $y \in Z(\lambda)$ then $y=-\frac{\lambda_2}{\lambda_1}$ if $-\frac{\lambda_2}{\lambda_1} \in \mathcal{D}.$ But because $\lambda \in \mathcal{R}_2$ we have $-\frac{\lambda_2}{\lambda_1}<0$ and hence $Z(\lambda)=\phi.$ This implies $\mathrm{supp}(\kappa^*)=\mathrm{supp}(\eta)$ with the mean and probability conditions
\begin{equation*}
\begin{aligned}
&1=\sum_j \frac{\eta_j }{(\lambda_2+\lambda_1y_j)}\\
&x=\sum_j \frac{y_j\eta_j}{(\lambda_2+\lambda_1y_j)}
\end{aligned}
\end{equation*}
These imply $1=\lambda_2+\lambda_1 x$. As $\lambda_2 \geq 0$, we have $\lambda_1 \leq \frac{1}{x}.$ Thus, denoting the optinal $\lambda_1$ by $\lambda_L^*$, we get that
\[\mathcal{K}^{L}_{inf}(\eta,x)=\sum \eta_j \log(1-\lambda_L^*(x-y_j))\] with $0 \leq \lambda_L^* \leq 1/x$ and the mean equation
\[x=\sum_j \frac{y_j\eta_j}{(1-\lambda_L^*(x-y_j))}.\]

\subsection{Reformulation of the lower bound}\label{reformulation_LB_proof}
We can now use \hyperref[k_inf_dual]{Lemma 2} to simplify $\mathcal{P}_i$ (see \ref{P_i_def} of the main body) in the \hyperref[rare_event_setup]{rare event setting}. We observe that the objective in $\mathcal{P}_i$ is a smooth and strictly convex function. The optimizer, $x^{*}_{i,e}$, is therefore given by first-order stationarity conditions. Using the dual representation, we can write this as
 \begin{equation*}
 w_1 \lambda^*_{L_{1i}}(x^{*}_{i,e})-w_i\lambda^*_{U_i}(x^{*}_{i,e})=0
 \end{equation*}
 where $\lambda^*_{U_i},\lambda^*_{L_{1i}}$ are as in \hyperref[k_inf_dual]{Lemma 2} and are functions of $x^{*}_{i,e}$. Now let us define quantities that are useful in reformulating $\mathcal{P}$ to a form suitable for further analysis. Define
 \begin{equation*}
 \begin{aligned}
 &K_{1i}:= 1-x^{*}_{i,e}\lambda^*_{L_{1i}}(x^{*}_{i,e}),\\ 
 & C_{1i}:= \lambda^*_{L_{1i}}(x^{*}_{i,e})\gamma^{-\alpha_1},\\
 &K_{i}:= 1+x^{*}_{i,e}\lambda^*_{U_i}(x^{*}_{i,e}),\\
 &C_{i}:= \lambda^*_{U_i}(x^{*}_{i,e})\gamma^{-\alpha_i}.\\
 \end{aligned}
 \end{equation*}
These quantities will turn out to have bounded limits as $\gamma \to 0$. The stationarity condition may now be rewritten as 
\begin{equation}\label{C_1iC_iRelation}
    C_{1i}w_1\gamma^{\alpha_1} = C_iw_i\gamma^{\alpha_i}.
\end{equation}
In the rare event setup, the tightness of the constraint in \hyperref[k_inf_dual]{Lemma 2} gives us that 
\begin{equation}\label{twisted_mean_eq}
\begin{aligned}
    x^{*}_{i,e}&=\sum_{j=1}^{n_1}\frac{a_{1j}p_{1j}}{K_{1i}+C_{1i}a_{1j}} = \sum_{j=1}^{n_i}\frac{a_{ij}p_{ij}}{K_{i}-C_{i}a_{ij}} + B_i\gamma^{-\alpha_i}\bigg[1 - \sum_{j=1}^{n}\frac{p_{ij}}{K_i - C_ia_{ij}}\gamma^{\alpha_i} - \frac{1-\sum_{j=1}^{n}p_{ij}\gamma^{\alpha_i}}{K_i}\bigg].  
\end{aligned}
\end{equation}
Since  the primal optimizer has the same support as the underlying distribution in part (b) of \hyperref[k_inf_dual]{Lemma 2}, we must have
\begin{equation}\label{prob_eq}
    \sum_{j=1}^{n}\frac{p_{1j}}{K_{1i} + C_{1i}a_{1j}}\gamma^{\alpha_1} + \frac{1-\sum_{j=1}^{n}p_{1j}\gamma^{\alpha_1}}{K_{1i}} = 1.
\end{equation}
From their definitions and from the stationarity condition, we have the following relationship between $K_{1i}$ and $K_i$:
\begin{equation}\label{w_K_iEq}
    w_1(1-K_{1i})=w_i(K_i-1).
\end{equation}

%Some bounds are with $K_{1i} \in [1-\sum_{j=1}^{n}p_{1j}\gamma^{\alpha_1},1]$, $K_i \in \bigg[1,\frac{1}{1-\frac{\gamma^{\alpha_1}\mu_1}{\max_ja_{ij}(1-\sum_{j=1}^{n}p_{1j}\gamma^{\alpha_1})}}\bigg], C_{1i} \in \big[0,\frac{\sum_j p_{1j}}{\mu_i}\big]$.
Let $\mathcal{P}_i = \underset{x \in [\mu_i,\mu_1]}{\inf}\mathcal{K}_i(w_1,w_i,x)$ (see (\ref{P_i_def}) from the main body). We know from the Envelope Theorem that
\[\frac{d\mathcal{K}_i(w_1,w_i,x)}{dx} = -w_1\lambda_{L_{1i}^*} + w_i\lambda_{U_i^*}.\]
The first order stationarity condition $\frac{d\mathcal{K}_i(w_1,w_i,x)}{dx}=0$ implies that $w_1\lambda_{L_{1i}^*} = w_i\lambda_{U_i^*} \ = \phi_i, (\textrm{say})$. Let us define $x^*_i := \arg\min_{x \in [\mu_i,\mu_1]}\mathcal{K}_i(w_1,w_i,x)$. It is easy to infer from our derivations of the $\mathcal{K}_{inf}^L$ and $\mathcal{K}_{inf}^U$ expressions that
\begin{equation}
\begin{aligned}
    &\mathcal{K}_{inf}^L(p_1,x^*_i) = KL(p_1,\tilde{p}_1^{(i)}) \\
    &\mathcal{K}_{inf}^U(p_i,x^*_i) = KL(p_i,\tilde{p}_i)
\end{aligned}
\end{equation}
where
\begin{equation}
\begin{aligned}
    &\tilde{p}_{1j}^{(i)} = \frac{p_{1j}}{1-\lambda_{L_{1i}^*}(x^*_i-a_{1j}\gamma^{-\alpha_1})}=\frac{p_{1j}}{\big(1-\frac{\phi_i}{w_1}x^*_i)+\frac{\phi_ia_{1j}}{w_1\gamma^{\alpha_1}}}
    \\
    &\tilde{p}_{ij} = \frac{p_{ij}}{1+\lambda_{U_{i}^*}(x^*_i-a_{ij}\gamma^{-\alpha_i})}=\frac{p_{ij}}{\big(1+\frac{\phi_i}{w_i}x^*_i)-\frac{\phi_ia_{ij}}{w_i\gamma^{\alpha_i}}}
\end{aligned}
\end{equation}
We note that $\mathbb{E}_{\tilde{p}_1^{(i)}} = \mathbb{E}_{\tilde{p}_i} = x^*_i$.
\\
\\
We can now express $K_{1i}=1-\frac{\phi_i}{w_1}x^*_i
-i$, $K_i=1+\frac{\phi_i}{w_i}x^*_i$, $C_{1i} = \frac{\phi_i}{w_1\gamma^{\alpha_1}}$, $C_i = \frac{\phi_i}{w_i\gamma^{\alpha_i}}$. The following obvious equations will be helpful.
\begin{equation*}
\begin{aligned}
    &K_{1i} = \frac{1-\sum_{j=1}^{n}p_{1j}\gamma^{\alpha_1}}{1-\sum_{j=1}^{n}\tilde{p}_{1j}^{(i)}\gamma^{\alpha_1}}
    \\
    &K_{i} = \frac{1-\sum_{j=1}^{n}p_{ij}\gamma^{\alpha_i}}{1-\sum_{j=1}^{n}\tilde{p}_{ij}\gamma^{\alpha_i}}\\
    &w_1(1-K_{1i})=w_i(K_i-1)=\phi_ix^*_i
\end{aligned}
\end{equation*}

We also claim that 
\begin{equation}\label{k1i_ki_bound}
\begin{aligned}
   &1-\sum_{j=1}^{n}p_{1j}\gamma^{\alpha_1}\leq K_{1i} \leq 1,\\
   &1 \leq K_i \leq \bigg[\frac{1}{1-\frac{\gamma^{\alpha_1}\mu_1}{\max_ja_{ij}(1-\sum_{j=1}^{n}p_{1j}\gamma^{\alpha_1})}}\bigg].
\end{aligned}
\end{equation}
\\
\\
For the proof of the first claim, we see that $K_{1i} = 1-\lambda_{L_{1i}^*}x\leq 1$ because $0 \leq \lambda_{L_{1i}^*} \leq \frac{1}{x} \Rightarrow 0 \leq \lambda_{L_{1i}^*}x \leq 1$. The lower bound on $K_{1i}$ is trivial.
\\
\\
For the proof of the second claim, we see that $K_i=1+\frac{\phi_i}{w_i}x^* \geq 1$. We also have that $w_i(K_i-1) = \phi_ix^* \leq \frac{\phi_ix^*}{K_{1i}} \leq \frac{\phi_ix^*_i}{1-\sum_{j=1}^{n}p_{1j}\gamma^{\alpha_1}}$. This implies that $K_i -1 \leq \frac{\phi_i}{w_i\gamma^{\alpha_i}}.\frac{\gamma^{\alpha_i \mu_1}}{1-\sum_{j=1}^{n}p_{1j}\gamma^{\alpha_1}} \leq \frac{K_i}{max_{j}a_{ij}}.\frac{\gamma^{\alpha_i \mu_1}}{1-\sum_{j=1}^{n}p_{1j}\gamma^{\alpha_1}}$. As the final step, we can conclude from the above chain of inequalities that $K_i\bigg(1 - \frac{1}{max_{j}a_{ij}}.\frac{\gamma^{\alpha_i \mu_1}}{1-\sum_{j=1}^{n}p_{1j}\gamma^{\alpha_1}}\bigg) \leq 1$
\\
\\
These bounds tell us that $K_{1i},K_i \to 1$ as $\gamma \to 0$. Now, we can write $\mathcal{P}_i$ in terms of $K_{1i},K_{i},C_{1i},C_{i}$ as 
\begin{equation}\label{exact_rform_eq}
\begin{aligned}
   \mathcal{P}_i = &w_1\gamma^{\alpha_1}\bigg[\sum_j p_{1j}\log(K_{1i}+C_{1i}a_{1j}) + \frac{(1-\sum_{j=1}^{n}p_{1j}\gamma^{\alpha_1})}{\gamma^{\alpha_1}}\log(K_{1i})\bigg] \\ + &w_i \gamma^{\alpha_i}\bigg[\sum_j p_{ij}\log(K_i-C_ia_{ij}) + \frac{(1-\sum_{j=1}^{n}p_{ij}\gamma^{\alpha_i})}{\gamma^{\alpha_i}}\log(K_{i})\bigg].
\end{aligned}
\end{equation}
The advantage of re-writing $\mathcal{P}_i$ in terms of $K_{1i},K_{i},C_{1i},C_{i}$ is that these quantities have bounded well-defined limits and using equations (\ref{C_1iC_iRelation}),(\ref{twisted_mean_eq}),(\ref{prob_eq}),(\ref{w_K_iEq}), we can eliminate the dependence on $x^*_i$ (whose behaviour is not as easy to analyze when $\gamma \to 0$). The bounds on $K_{1i} \textrm{ and } K_{i}$ will also help us to define the approximate version $\mathcal{P}_{i,a}$ of $\mathcal{P}_i$ (see \ref{approx_min_rform} of main body).
%%%%%%%%%%%%%%%%%%%%%%%%%%%%%%%%%%%%%%%%%%%%%%%%%%%%%%%%

\subsection{Proof of Proposition 1}
Consider i.i.d. draws of the $i$th arm. Define
\begin{equation*}
\begin{aligned}
    &\tau_{ij}^{(1)} \coloneqq \textrm{the first time } a_{ij}\gamma^{-\alpha_i} \textrm{ is seen in arm } i.\\
    &\tau_{ij}^{(k)} \coloneqq \textrm{the } k \textrm{th inter-arrival time of } a_{ij}\gamma^{-\alpha_i} \textrm{ in arm } i.
\end{aligned}
\end{equation*}
Then, we have that 
\begin{equation*}
    \mathbb{P}(\tau_{ij}^{(1)}>n) = (1-\gamma^{\alpha_i}p_{ij})^n
\end{equation*}
Clearly, the $k$th inter-arrival time is independent of all the previous inter-arrival times. Hence
\begin{equation*}
 \mathbb{P}(\tau_{ij}^{(k)}>n_k) = (1-\gamma^{\alpha_i}p_{ij})^{n_k}
\end{equation*}
Now setting $n_k=t\gamma^{-\alpha_i}$ and taking the limit $\gamma \to 0$ we have
\begin{equation*}
\begin{aligned}
    \lim_{\gamma \to 0} \mathbb{P}(\tau_{ij}^{(k)}>t\gamma^{-\alpha_i}) &= \lim_{\gamma \to 0}(1-\gamma^{\alpha_i}p_{ij})^{t\gamma^{-\alpha_i}} \\
    &=e^{p_{ij} t}
\end{aligned}
\end{equation*}
Now as the inter-arrival times are asymptotically independent exponentially distributed, it follows by the standard argument that $N_{ij}(t)$ is asymptotically distributed as Poisson$(p_{ij}t)$. Note that the same argument could have been repeated while assuming two or more support points as a set. We would then get that the count process for the set are asymptotically distributed as sum of the individual Poisson distributions. From computing the Poisson mgf this implies asymptotic independence of these Poisson variables. We omit the arguments as they are standard.

%%%%%%%%%%%%%%%%%%%%%%%%%%%%%%%%%%%%%%%%%%%%%%%%%%%%%%%%%
\section{Proof of Theorem 1}\label{approxMaxMinTheoremProof}
In this section alone, we add the superscript $e$ to $C_i,C_{1i}$ to prevent any confusion, since exact and approximate versions are used simultaneously. Let $C_{1i}^{e}, C_i^{e}, x^*_{i,e}$ denote solutions inner minimization problem $\mathcal{P}_i(w)$, and $C_{1i}^{a}, C_{i}^{a}, x^*_{i,a}$ denote solutions to the approximate inner minimization problem $\mathcal{P}_{i,a}(w)$. We have already established bounds on $K_{1i}$ and $K_i$ in A.4. It is straightforward to see from equation \ref{twisted_mean_eq} of the supplementary material and equations \ref{approx_prob_cond} of the main body, that $0 \leq C_{1i}^{e},C_{1i}^{a} \leq \frac{\sum_j p_{1j}}{\mu_i}$,  $0 \leq C_{i}^{e} \leq \frac{K_i}{B_i},C_{i}^{a} \leq \frac{1}{B_i}.$ Using these bounds, one can easily use the definitions of $mathcal{P}_i$, $\mathcal{P}_{i,a}$ to conclude that $\mathcal{P}_i$, $\mathcal{P}_{i,a} = \mathcal{O}(\min(w_1\gamma^{\alpha_1},w_i\gamma^{\alpha_i}))$. $\underset{\gamma \to 0}{\lim}\frac{\mathcal{P}_i}{\mathcal{P}_{i,a}}=1.$ becomes an immediate conclusion.
\\
\\
To establish the bound on $|\mathcal{P}_i - \mathcal{P}_{i,a}|$, we'll follow three broad steps: showing that the solutions to $\mathcal{P}_i$ also approximately solve $\mathcal{P}_{i,a}$; showing that solutions to $\mathcal{P}_i$ and solutions to $\mathcal{P}_{i,a}$ are close; using the Lipschitz property of $\tilde{\mathcal{K}}_{inf}^{L}$ and $\tilde{\mathcal{K}}_{inf}^{U}$ along with the triangle inequality to connect the bounds derived in the earlier steps and arrive at the proof. $\tilde{\mathcal{K}}_{inf}^{L}$ and $\tilde{\mathcal{K}}_{inf}^{U}$ are defined as follows:
\begin{equation*}
\begin{aligned}
    &\tilde{\mathcal{K}}_{inf}^{L}(z)=\gamma^{\alpha_1}\bigg(\sum_j p_{1j}\log(1+za_{1j}) - z\sum_j\frac{a_{1j}p_{1j}}{1-za_{1j}}\bigg)\\
    &\tilde{\mathcal{K}}_{inf}^{U}(m,z)=\gamma^{\alpha_i}\bigg(\sum_jp_{ij}\log(1-za_{ij})+zm\bigg)
\end{aligned}
\end{equation*}
\\
\textbf{Step 1}: \underline{Solutions to exact problem approximately solve approximate problem}
\\
\\
Bounds on $K_{1i}$ (see \ref{k1i_ki_bound}) imply that given any $\epsilon>0$, we have $\gamma$ small enough that $K_{1i} \geq 1-\epsilon$. Then 

\begin{equation*}
    \log\bigg(\frac{1-\epsilon+C_{1i}^{e}a_{1j}}{1+C_{1i}^{e}a_{1j}}\bigg) \leq \log\bigg(\frac{K_{1i}+C_{1i}^{e}a_{1j}}{1+C_{1i}^{e}a_{1j}}\bigg)\leq 0.
\end{equation*}
By Mean Value Theorem (MVT), we have that
\begin{equation*}
   \log\Big(\frac{1-\epsilon+C_{1i}^{e}a_{1j}}{1+C_{1i}^{e}a_{1j}}\Big) \geq -\frac{\epsilon}{1-\epsilon} 
\end{equation*}
and hence,
\begin{equation*}
    -\frac{\epsilon}{1-\epsilon} \leq \log(K_{1i}+C_{1i}^{e}a_{1j})-\log(1+C_{1i}^{e}a_{1j})\leq 0.
\end{equation*}
Thus, for small enough $\gamma$, $\log(1+C_{1i}^{e}a_{1j}) \approx \log(K_{1i}+C_{1i}^{e}a_{1j})$.\\ 
\\
Using the fact that $K_{1i}=1-C_{1i}^{e}x^*_{i,e}\gamma^{\alpha_1}$, we get
\begin{equation*}
    (1-\gamma^{\alpha_1}\sum_j p_{1j})\frac{\log(K_{1i})}{\gamma^{\alpha_1}}\leq -(1-\epsilon)C_{1i}^{e}x^*_{i,e}
\end{equation*}
when $\gamma^{\alpha_1}\sum_jp_{ij} \leq \epsilon.$ Similarly, we have
\begin{equation*}
    (1-\gamma^{\alpha_1}\sum_j p_{1j})\frac{\log(K_{1i})}{\gamma^{\alpha_1}} \geq \frac{-C_{1i}^{e}x^*_{i,e}}{1-C_{1i}^{e}x^*_{i,e}\gamma^{\alpha_1}} = -C_{1i}^{e}x^*_{i,e}+\frac{-(C_{1i}^{e}x^*_{i,e})^2\gamma^{\alpha_1}}{1-C_{1i}^{e}x^*_{i,e}\gamma^{\alpha_1}}
\end{equation*}

Thus, for $\gamma$ small enough, we have 
$(1-\gamma^{\alpha_1}\sum_j p_{1j})\frac{\log(K_{1i})}{\gamma^{\alpha_1}} \approx -C_{1i}^{e}x^*_{i,e}$. In $\mathcal{K}^{L}_{inf}$ (from Lemma \hyperref[k_inf_dual]{2b}), $\tilde{p}$ has no probability mass on the upper bound $B_i$ and hence 
\begin{equation*}
   x^*_{i,e}=\sum_j\frac{a_{1j}p_{1j}}{1-C_{1i}^{e}a_{1j}}. 
\end{equation*}
 This gives us
\begin{equation*}
|\tilde{\mathcal{K}}_{inf}^{L}(C_{1i}^{e})-\mathcal{K}_{inf}^L(K_{1i},C_{1i}^{e})| \leq 2\gamma^{2\alpha_1}\frac{(\sum_j p_{1j})^2}{1-\sum_j p_{1j}\gamma^{\alpha_1}}
\end{equation*}
\\
\\
Bounds on $K_i$, imply that for any $\epsilon>0$, we can choose $\gamma$ (again independently of $w$) so that $K_{i} \leq 1+
\epsilon.$ Then,
\begin{equation*}
    0 \leq \log(K_i+C_i^{e}a_{ij})-\log(1+C_i^{e}a_{ij}) \leq \log\bigg(\frac{1+\epsilon+C_i^{e}a_{ij}}{1+C_i^{e}a_{ij}}\bigg).
\end{equation*}
Now, from MVT we have
\begin{equation*}
    \log(1+\epsilon+C_i^{e}a_{ij})-\log(1+C_i^{e}a_{ij})\leq \frac{\epsilon}{1+C_i^{e}a_{ij}} \leq \epsilon.
\end{equation*}
Thus, $\log(K_i+C_i^{e}a_{ij}) \approx \log(1+C_i^{e}a_{ij})$ when $\gamma$ is small. From $K_i=1+C_i^{e}x^*_{i,e} \gamma^{\alpha_i}$, we have 
\begin{equation*}
    (1-\epsilon) \frac{C_i^{e}x^*_{i,e}}{1+C_i^{e}x^*_{i,e} \gamma^{\alpha_i}} \leq (1-\gamma^{\alpha_i}\sum_j p_{ij})\frac{\log(K_i)}{\gamma^{\alpha_i}} \leq C_i^{e}x^*_{i,e}
\end{equation*}
when $\gamma^{\alpha_i} \leq \epsilon.$ Thus when $\gamma$ small, $(1-\gamma^{\alpha_i}\sum_j p_{ij})\frac{\log(K_i)}{\gamma^{\alpha_i}} \approx C_i^{e}x^*_{i,e}$.\\
\\
We thus have the following bound:
\begin{equation*}
    |\mathcal{K}_{inf}^{U}(K_{i},C_{i}^{e})-\tilde{\mathcal{K}}_{inf}^{U} (x^*_{i,e},C_{i}^{(e)})| \leq \frac{\frac{\mu_1}{\underset{j}{\max}a_{ij}}\gamma^{2\alpha_i}}{1-\frac{\mu_1}{\underset{j}{\max}a_{ij}}\gamma^{\alpha_i}}
    \bigg(\sum_j p_{ij} + \frac{\mu_1}{\underset{j}{\max}a_{ij}}\bigg)
\end{equation*}
It may be noted that the bound does not depend on $w$, which give uniform bounds independent of $w$.
\\
\\
\textbf{Step 2}: \underline{Solutions to exact problem are close to solutions of approximate problem}

So far, we have shown that the $C_{1i}^{e},C_i^{e}$ and $x^*_{i,e}$ that solve the exact problem are also good solutions for the approximate problem. However, the solution to our new approximate problem will be $C_{1i}^{a}, C_{i}^{a}$  and $x^*_{i,a}$. We'll now show that this set of solutions to the approximate problem indeed approaches the set of solutions to the actual problem at the rate of $\gamma^{\min(2 \alpha_i,\alpha_i+\alpha_1)}$ as $\gamma \to 0$.
\\
\\
We have that
\begin{equation*}
\begin{aligned}
    &x^*_{i,e} = \sum_{j=1}^{n}\frac{a_{1j}p_{1j}}{1-C_{1i}^{e}x^*_{i,e}\gamma^{\alpha_1}+C_{1i}^{e}a_{1j}}, \\
    &x^*_{i,a} = \sum_{j=1}^{n}\frac{a_{1j}p_{1j}}{1+C_{1i}^{a}a_{1j}},
\end{aligned}    
\end{equation*}
Note that the above two statements imply that $C_{1i}^{e}$ and $C_{1i}^{a}$ are bounded above by $\frac{\sum_j p_{1j}}{\mu_i}$. We collect the following established results:
\begin{equation*}
\begin{aligned}
&\frac{C_{1i}^{e}}{C_i^{e}}=\frac{C_{1i}^{a}}{C_{i}^{a}}=\frac{w_i\gamma^{\alpha_i}}{w_1\gamma^{\alpha_1}},\\
&x^*_{i,e}> F_0(\gamma) \Rightarrow C_i^{e} = \frac{1}{B_i-x^*_{i,e}\gamma^{\alpha_i}},\\
&x^*_{i,a}> F_0(0) \Rightarrow C_{i}^{a} = \frac{1}{B_i},\\
&x^*_{i,e}\leq F_0(\gamma) \Rightarrow x^*_{i,e} = \sum_{j=1}^{n}\frac{a_{ij}p_{ij}}{1+C_i^{e}x^*_{i,e}\gamma^{\alpha_i}-C_i^{e}a_{1j}}\\
&x^*_{i,a}\leq F_0(0) \Rightarrow x^*_{i,a} = \sum_{j=1}^{n}\frac{a_{ij}p_{ij}}{1-C_i^{e}a_{1j}}
\end{aligned}
\end{equation*}
where $F_0(\gamma)$ is defined in Remark\hyperref[extra_point_condition]{A.1.2}.
In what follows, we shall  let $b_i=\underset{j}{\min} a_{ij}$.
We shall now establish that, for all $w$, the solution to the exact and approximate inner optimisations are close when $\gamma$ is small. We break the analysis into the following four cases. 
\\
\\
\textbf{\textit{Case 1. $x^*_{i,e} \leq F_0(\gamma),  x^*_{i,a} \leq F_0(0)$.}} 

We have that
\begin{equation*}
\begin{aligned}
    x^*_{i,e} - x^*_{i,a} = &\sum_{j=1}^{n} \frac{a_{1j}p_{1j}(1-K_{1i}+a_{1j}(C_{1i}^{a}-C_{1i}^{e}))}{(1+C_{1i}^{a}a_{1j})(K_{1i}+C_{1i}^{e}a_{1j})} \\
    = &\sum_{j=1}^{n} \frac{a_{ij}p_{ij}(1-K_{i}-a_{ij}(C_{i}^{a}-C_{1i}^{e}))}{(1-C_{i}^{a}a_{ij})(K_{i}-C_{i}^{(e)}a_{1j})}
\end{aligned}
\end{equation*}

Splitting terms from the numerator and using $\frac{C_{1i}^{e}}{C_i^{e}}=\frac{C_{1i}^{a}}{C_{i}^{a}}=\frac{w_i\gamma^{\alpha_i}}{w_1\gamma^{\alpha_1}}$, we get the following:
\[A(1-K_{1i}) + B(1-K_{i})=\tilde{A}(C_{1i}^{e}-C_{1i}^{a}) + \tilde{B}\frac{w_1\gamma^{\alpha_1}}{w_i\gamma^{\alpha_i}}(C_{1i}^{e}-C_{1i}^{a})\]
where

\begin{equation*}
\begin{aligned}
    &A := \sum_{j=1}^{n} \frac{a_{1j}p_{1j}}{(1+C_{1i}^{a}a_{1j})(K_{1i}+C_{1i}^{e}a_{1j})} \\
    &\tilde{A} := \sum_{j=1}^{n} \frac{a_{1j}^2p_{1j}}{(1+C_{1i}^{a}a_{1j})(K_{1i}+C_{1i}^{e}a_{1j})} 
    \geq b_1 A\\
    &B := \sum_{j=1}^{n} \frac{a_{ij}p_{ij}}{(1-C_{i}^{a}a_{ij})(K_{i}-C_{i}^{(e)}a_{1j})} \\
    &\tilde{B} := \sum_{j=1}^{n} \frac{a_{ij^2}p_{ij}}{(1-C_{i}^{a}a_{ij})(K_{i}-C_{i}^{(e)}a_{1j})}\geq b_iB  
\end{aligned}
\end{equation*}

Therefore,

\begin{equation*}
\begin{aligned}
    C_{1i}^{e}-C_{1i}^{a} = \gamma^{\alpha_i}\frac{Aw_i(1-K_{1i})+Bw_i(K_i-1)}{\tilde{A}w_i\gamma^{\alpha_i}+\tilde{B}w_1\gamma^{\alpha_1}}
\end{aligned}
\end{equation*}
Using equation (\ref{w_K_iEq}), we can write that 

\begin{equation*}
\begin{aligned}
    C_{1i}^{e}-C_{1i}^{a} = \bigg(\frac{Aw_i+Bw_1}{\tilde{A}w_i\gamma^{\alpha_i}+\tilde{B}w_1\gamma^{\alpha_1}}\bigg)\gamma^{\alpha_i}(1-K_{1i}).
\end{aligned}
\end{equation*}
 Following this, we can use the lower bounds on $\tilde{A}$, $\tilde{B}$ and $K_{1i}$ to conclude that
\begin{equation*}
\begin{aligned}
    |C_{1i}^{e}-C_{1i}^{a}| \leq \bigg(\frac{\sum_{j}p_{1j}}{\min(b_1, b_i)} \bigg)\gamma^{\min(\alpha_1,\alpha_i)}.
\end{aligned}
\end{equation*}
This also tells us that 
\begin{equation*}
\begin{aligned}
    |x^*_{i,e} - x^*_{i,a}| \leq \mu_1\bigg(\sum_{j=1}^{n}p_{1j}\gamma^{\alpha_1} + \frac{B_1\sum_jp_{1j}}{b_1 \land b_i}\gamma^{\alpha_1 \land \alpha_i}\bigg).
\end{aligned}
\end{equation*}
And using a similar computation, we can also prove that
\begin{equation*}
\begin{aligned}
    |C_{i}^{e}-C_{i}^{a}| \leq \frac{\mu_1 \gamma^{\min{\alpha_1,\alpha_i}}}{min(b_1,b_i)(b_i-\mu_1 \gamma^{\alpha_i})}.
\end{aligned}
\end{equation*}

\textbf{\textit{Case 2. $x^*_{i,e} \geq F_0(\gamma),  x^*_{i,a} \geq F_0(0)$.}}
\\
\\
In this case, we can say that 
\begin{equation*}
\begin{aligned}
    |C_{i}^{(e)}-C_{i}^{a}| = \frac{x^*_{i,e}}{B_i(B_i-x^*_{i,e}\gamma^{\alpha_i})}\gamma^{\alpha_i}
\end{aligned}
\end{equation*}
We also have that 
\begin{equation*}
\begin{aligned}
&x^*_{i,e} = \sum_{j=1}^{n}\frac{a_{1j}p_{1j}}{1+\frac{w_i\gamma^{\alpha_i}}{w_1\gamma^{\alpha_1}}C_{i}^{(e)}(a_{1j}-x^*_{i,e}\gamma^{\alpha_1})}\\
&x^*_{i,a} = \sum_{j=1}^{n}\frac{a_{1j}p_{1j}}{1+\frac{w_i\gamma^{\alpha_I}}{w_1\gamma^{\alpha_1}}C_{i}^{a}a_{1j}}.
\end{aligned}
\end{equation*}
Subtracting the two gives us that
\begin{equation*}
\begin{aligned}
    |x^*_{i,e}-x^*_{i,a}| \leq \sum_{j=1}^{n}\frac{a_{1j}p_{1j}\mu_i}{a_{1j}-\mu_1 \gamma^{\alpha_i}}\gamma^{\alpha_1} + \sum_{j=1}^{n}\frac{a_{1j}^2 p_{1j}\mu_i}{B_i(a_{1j}-\mu_1 \gamma^{\alpha_i})}\gamma^{\alpha_i}.
\end{aligned}
\end{equation*}
%\[= \bigg|\sum_{j=1}^{n}\frac{a_{1j}p_{1j}(x^*_{i,e}\gamma^{\alpha_1}+a_{1j}(C_{i}^{a}-C_{i}^{(e)}))}{\big(1+\frac{w_i\gamma^{\alpha_i}}{w_1\gamma^{\alpha_1}}C_{i}^{a}(a_{1j}-x^*_{i,e}\gamma^{\alpha_1})\big)\big(1+\frac{w_i\gamma^{\alpha_i}}{w_1\gamma^{\alpha_1}}C_{i}^{a}a_{1j}\big)}\bigg|\frac{w_i\gamma^{\alpha_i}}{w_1\gamma^{\alpha_1}}\]
The above relation, along with the relation between $|C_{1i}^{e}-C_{1i}^{a}|$ and $|x^*_{i,e}-x^*_{i,a}|$ as outlined under Case I, may be used to prove that
\begin{equation*}
\begin{aligned}
    |C_{1i}^{e}-C_{1i}^{a}| \leq D_i\gamma^{\min(\alpha_1,\alpha_i)}
\end{aligned}
\end{equation*}
where $D_i$ is constant depending on arm $p_i$.
\\
\\
\textbf{\textit{Case 3. $F_0(\gamma) \leq x^*_{i,e}, x^*_{i,a} \leq F_0(0)$.}}
\\
\\
A direct conclusion here would be
\begin{equation*}
\begin{aligned}
    |x^*_{i,e}-x^*_{i,a}| \leq |F_0(0)-F_0(\gamma)| \leq  \frac{B_i}{{1+\gamma^{\alpha_i}\sum_j\frac{a_{ij}p_{ij}}{B_i - a_{ij}}}}\big(\sum_{j=1}^{n}\frac{a_{ij}p_{ij}}{B_i - a_{ij}}\big)^2\gamma^{\alpha_i}
\end{aligned}
\end{equation*}
We have that
\begin{equation*}
\begin{aligned}
    x^*_{i,e} - x^*_{i,a} = \sum_{j=1}^{n} \frac{a_{1j}p_{1j}(1-K_{1i}+a_{1j}(C_{1i}^{a}-C_{1i}^{e}))}{(1+C_{1i}^{a}a_{1j})(K_{1i}+C_{1i}^{e}a_{1j})}
\end{aligned}
\end{equation*}
whence we can conclude that
\begin{equation*}
\begin{aligned}
    &|C_{1i}^{e}-C_{1i}^{a}| \leq \frac{(|x^*_{i,e} - x^*_{i,a}| + C^{(e)}x^*_{i,e}\sum_{j=1}^{n}a_{1j}p_{1j}\gamma^{\alpha_1})}{\frac{b_1\mu_i}{1+B_1C^{(a)}}}\\
    \Rightarrow &|C_{1i}^{e}-C_{1i}^{a}| \leq D_i\gamma^{\min(\alpha_1,\alpha_i)}
\end{aligned}
\end{equation*}
where $D_i$ is again a constant depending on arm $p_i$. Lastly, we can show that 
\begin{equation*}
\begin{aligned}
& |C_{i}^{e}-\frac{1}{B_i}| \leq \frac{(1-b_i/B_i)}{b_i \mu_i}B_i\bigg(\sum_j\frac{a_{ij} p_{ij}}{B_i-a_{ij}}\bigg)^2\gamma^{\alpha_i} \\
&|C_{i}^{a}-\frac{1}{B_i}| \leq \frac{\mu_1}{B_i(B_i-\mu_1\gamma^{\alpha_i})}.\gamma^{\alpha_i}
\end{aligned} 
\end{equation*}
to conclude that
\begin{equation*}
\begin{aligned}
    |C_{i}^{e} - C_{i}^{a}| \leq \frac{(1-b_i/B_i)}{b_i \mu_i}B_i\bigg(\sum_j\frac{a_{ij} p_{ij}}{B_i-a_{ij}}\bigg)^2\gamma^{\alpha_i} + \frac{\mu_1}{B_i(B_i-\mu_1\gamma^{\alpha_i})}.\gamma^{\alpha_i}
\end{aligned}
\end{equation*}
\\
%%%%%%%%%%%%%%%%%%%%%%%%%%%%%%%%%%%%%%%%%%%
\textbf{\textit{Case 4. $x^*_{i,e} \leq F_0(\gamma) < F_0(0) \leq x^*_{i,a}$.}}
\\
\\
We first show that $1/B_i<C_i^{e}$. Suppose this is false. Then, $C_{i}^{a}=1/B_i \geq C_i^{e}$. From equation (\ref{C_1iC_iRelation}) for fixed $w_1,w_i$ and $\gamma$, we have:
\begin{equation*}
\begin{aligned}
 C_{1i}^{a} \geq C_{1i}^{e} 
\Rightarrow x^*_{i,e} > \sum_j \frac{a_{1j}p_{1j}}{1+C_{1i}^{e} a_{1j}} & >\sum_j \frac{a_{1j}p_{1j}}{1+C_{1i}^{a} a_{1j}} =x^*_{i,a}
\end{aligned}
\end{equation*}
But this contradicts the hypothesis of this case. Hence we must have have:
\[\frac{1}{B_i} < C_i^{e} <\frac{1}{B_i-x^*_{i,e}\gamma^{\alpha_i}}
\]
As $C_{i}^{a}=\frac{1}{B_i}$, from above we have 
\[ 1 < \frac{C_i^{e}}{C_{i}^{a}}=\frac{C_{1i}^{e}}{C_{1i}^{a}} \leq 1+\frac{x^*_{i,e}\gamma^{\alpha_i}}{B_i-x^*_{i,e}\gamma^{\alpha_i}}\]

And we can conclude that 
\begin{equation*}
\begin{aligned}
   &|C_{i}^{a}-C_{1i}^{a}| \leq \frac{\mu_1}{B_i-\mu_1 \gamma^{\alpha_1}}\gamma^{\alpha_i}\\
   &|C_{1i}^{a}-C_{1i}^{e}| \leq \frac{(\sum_j p_{1j}) \mu_1 }{\mu_i (B_i -\mu_1 \gamma^{\alpha_i})}\gamma^{\alpha_i}\\
   &|x^*_{i,a}-x^*_{i,e}| \leq \frac{\mu_i^2 B_i^2}{B_i-\mu_i}\gamma^{\min \{\alpha_1,\alpha_i\}}
\end{aligned}
\end{equation*}
This completes the analysis of the four cases and shows that $C_{1i}^{a},C_i^{a},x^*_{i,a}$ are close to $C_{1i}^{e},C_i^{e},x^*_{i,e}$ when $\gamma$ is small.\\
\\
\textbf{Step 3}: \underline{Connecting solutions to exact problem and solutions to approximate problem}
\\
\\
We concluded in Step 1 that
\begin{equation*}
  |\tilde{\mathcal{K}}_{inf}^{L}(C_{1i}^{e})-\mathcal{K}_{inf}^L(K_{1i},C_{1i}^{e})| \leq 2\gamma^{2\alpha_1}\frac{(\sum_j p_{1j})^2}{1-\sum_j p_{1j}\gamma^{\alpha_1}}  
\end{equation*}
and in Step 2 that $|C_{1i}^{e}-C_{1i}^{a}|$ is related to $|x^*_{i,e}-x^*_{i,a}|$ by the equation 

\begin{equation*}
    |C_{1i}^{e}-C_{1i}^{a}| \leq \frac{|x^*_{i,e}-x^*_{i,a}| + \sum_j a_{1j}p_{1j}C_{1i}^{e}x^*_{i,e}\gamma^{\alpha_1}}{\sum_j \frac{a_{1j}^2p_{1j}}{(1+C_{1i}^{a}a_{1j})(1+C_{1i}^{e}(a_{1j}-x^*_{i,e}\gamma^{\alpha_1}))}} \leq \frac{|x^*_{i,e}-x^*_{i,a}| + \mu_1\sum_jp_{1j}\gamma^{\alpha_1}}{\mu_2\Big(\frac{b_1}{1+ B_1\sum_jp_{1j}/\mu_2}\Big)}
\end{equation*}

%where $\tilde{\mathcal{K}}_{inf}^{L}(C_{1i}^{e})$ is the approximate expression for $\mathcal{K}_{inf}^{L}$ that we have developed in section \ref{approx_prob}.
%The latter statement can be used to conclude that
%$|\tilde{\mathcal{K}}_{inf}^{L}(C_{1i}^{e})-\tilde{\mathcal{K}}_{inf}^{L}(C_{1i}^{a})|$ is the same as $|\tilde{\mathcal{K}}_{inf}^{L}(C_{1i}^{e})-\tilde{\mathcal{K}}_{inf}^{L}(C_{1i}^{e}+\mathcal{O}(\gamma^{\alpha_1 \land \alpha_i\}}))|$.
%Let us now separately examine the following function
%\[\tilde{\mathcal{K}}_{inf}^{L}(x) := \gamma^{\alpha_1}\bigg[\sum_{j=1}^{n}\log(1+xa_{1j})-x\sum_{j=1}^{n}\frac{p_{1j}a_{1j}}{1+xa_{1j}}\bigg]\]
%for $x \in [0,C^{(e)}]$. 
We have:
\begin{equation*}
    \frac{d}{dz}\tilde{\mathcal{K}}_{inf}^{L}(z)=\gamma^{\alpha_i}\bigg(\sum_j \frac{a_{1j}p_{1j}}{1+za_{1j}}-\sum_j \frac{a_{1j}p_{1j}}{1-za_{1j}}-z\sum_j\frac{a_{1j}^2p_{1j}}{1-za_{1j}}\bigg)
\end{equation*}
Now, the derivative of $\tilde{\mathcal{K}}_{inf}^{L}$ can easily be bounded above by $\mu_1\gamma^{\alpha_1}$. This leads us to the following conclusion.
\begin{equation*}
    |\tilde{\mathcal{K}}_{inf}^{L}(C_{1i}^{e})-\tilde{\mathcal{K}}_{inf}^{L}(C_{1i}^{a})| \leq \frac{\mu_1^2B_1}{\mu_ib_1} \Bigg[\frac{\frac{\mu_1^3}{\mu_ib_1}\gamma^{\alpha_1}+\mu_1^2(1+\frac{B_1 \lor B_i}{b_1 \land b_i})\frac{1}{(b_i - \mu_1\gamma^{\alpha_i})}\gamma^{\alpha_1 \land \alpha_i}}{\mu_i\Big(\frac{b_i}{1+\frac{\mu_1B_1}{\mu_ib_1}}\Big)}\Bigg]\gamma^{\alpha_1} = \mathcal{O}(\gamma^{(2\alpha_1)\land(\alpha_1 + \alpha_i)})
\end{equation*}

where we have used the inequalities $C^{e}_{1i},C^{a}_{1i}\leq \frac{\sum_j p_{1j}}{\mu_i}$ and $b_1\sum_j p_{1j}\leq \mu_1$.\\
\\
We thus have,
\begin{equation*}
\begin{split}
    |\mathcal{K}_{inf}^{L}(K_{1i},C_{1i}^{e})-\tilde{\mathcal{K}}_{inf}^{L}(C_{1i}^{a})| \leq |\mathcal{K}_{inf}^{L}(K_{1i},C_{1i}^{e})-\tilde{\mathcal{K}}_{inf}^{L}(C_{1i}^{e})| +  |\tilde{\mathcal{K}}_{inf}^{L}(C_{1i}^{e})-\tilde{\mathcal{K}}_{inf}^{L}(C_{1i}^{a})| \leq L_{1i} \gamma^{(2\alpha_1)\land(\alpha_1 + \alpha_i)}
\end{split}
\end{equation*}
where $L_{1i}$ is a computable constant, and $L_{1i} \gamma^{(2\alpha_1)\land(\alpha_1 + \alpha_i)}$ can be computed by adding the bounds on $|\mathcal{K}_{inf}^{L}(K_{1i},C_{1i}^{e})-\tilde{\mathcal{K}}_{inf}^{L}(C_{1i}^{e})|$ and $|\tilde{\mathcal{K}}_{inf}^{L}(C_{1i}^{e})-\tilde{\mathcal{K}}_{inf}^{L}(C_{1i}^{a})|$.
\\
\\
Similarly from Step 1 we have:
\begin{equation*}
\begin{aligned}
&|\mathcal{K}_{inf}^{U}(K_{i},C_{i}^{e})-\tilde{\mathcal{K}}_{inf}^{U} (x^*_{i,e},C_{i}^{e})| \leq \frac{\frac{\mu_1}{\underset{j}{\max}a_{ij}}\gamma^{2\alpha_i}}{1-\frac{\mu_1}{\underset{j}{\max}a_{ij}}\gamma^{\alpha_i}}
    \bigg(\sum_j p_{ij} + \frac{\mu_1}{\underset{j}{\max}a_{ij}}\bigg)
\end{aligned}
\end{equation*}
To upper bound $|\mathcal{K}_{inf}^{U}(K_{i},C_{i}^{e})-\tilde{\mathcal{K}}_{inf}^{U}(x^*_{i,a},C_{i}^{a})|$, we can follow a procedure similar to how $|\mathcal{K}_{inf}^{L}(K_{1i},C_{1i}^{e})-\tilde{\mathcal{K}}_{inf}^{L}(C_{1i}^{a})|$ was bounded. We first use the triangle inequality to make the following split.
\begin{equation*}
\begin{aligned}
    |\mathcal{K}_{inf}^{U}(K_{i},C_{i}^{e})-\tilde{\mathcal{K}}_{inf}^{U}(x^*_{i,a},C_{i}^{a})|
    &\leq |\mathcal{K}_{inf}^{U}(K_{i},C_{i}^{e})-\tilde{\mathcal{K}}_{inf}^{U}(x^*_{i,e},C_{i}^{e})|
    + |\tilde{\mathcal{K}}_{inf}^{U}(x^*_{i,e},C_{i}^{e}) - \tilde{\mathcal{K}}_{inf}^{U}(x^*_{i,e},C_{i}^{a})| \\
    &+ |\tilde{\mathcal{K}}_{inf}^{U}(x^*_{i,e},C_{i}^{a}) - \tilde{\mathcal{K}}_{inf}^{U}(x^*_{i,a},C_{i}^{a})|
\end{aligned}
\end{equation*}
In the right hand side of the above inequality, the bound to the first summand was already obtained. The second and third summands can be bounded above by showing that $\tilde{\mathcal{K}}_{inf}^{U}$ is Lipschitz in both its arguments, the Lipschitz constants being computable ones. Thus, we have
\begin{equation*}
\begin{aligned}
|\tilde{\mathcal{K}}_{inf}^{U}(x^*_{i,e},C_{i}^{e}) - \tilde{\mathcal{K}}_{inf}^{U}(x^*_{i,e},C_{i}^{a})| &\leq
\gamma^{\alpha_i}(\mu_1-\mu_i) |C_i^{e}-C_{i}^{a}|\\
&\leq\frac{\mu_1(\mu_1-\mu_2)}{(b_1 \land b_i)(b_i-\mu_1\gamma^{\alpha_i})}\gamma^{(\alpha_1+\alpha_i)\land(2\alpha_i)} \\
&+ \frac{(B_i-b_i)(\mu_1-\mu_2)}{b_i\mu_i}\bigg(\sum_{j=1}^{n} \frac{a_{1j}p_{1j}}{B_i - a_{ij}}\bigg)^2 \gamma^{2\alpha_i}.
.\end{aligned}
\end{equation*}
The bound in the first step was derived by bounding the partial derivative wrt $z$ of $\tilde{\mathcal{K}}_{inf}^{U}(m,z)$. Similarly bounding the partial derivative wrt $m$ gives
\begin{equation*}
\begin{aligned}
 |\tilde{\mathcal{K}}_{inf}^{U}(x^*_{i,e},C_{i}^{a}) - \tilde{\mathcal{K}}_{inf}^{U}(x^*_{i,a},C_{i}^{a})| \leq \gamma^{\alpha_i}\frac{|x^*_{i,e}-x^*_{i,a}|}{b_i}
\end{aligned}
\end{equation*}
$|x^*_{i,e}-x^*_{i,a}|$ is bounded above by the maximum of the upper bounds derived in the four cases of Step 2.
%\begin{itemize}
%    \item $\mu_1\bigg(\sum_{j=1}^{n}p_{1j}\gamma^{\alpha_1} + \frac{B_1\sum_jp_{1j}}{b_1 \land b_i}\gamma^{\alpha_1 \land \alpha_i}\bigg)$
%   \item $\sum_{j=1}^{n}\frac{a_{1j}p_{1j}\mu_i}{a_{1j}-\mu_1 \gamma^{\alpha_i}}\gamma^{\alpha_1} + \sum_{j=1}^{n}\frac{a_{1j}^2 p_{1j}\mu_i}{B_i(a_{1j}-\mu_1 \gamma^{\alpha_i})}\gamma^{\alpha_i}$
%    \item $ \frac{B_i}{{1+\gamma^{\alpha_i}\sum_j\frac{a_{ij}p_{ij}}{B_i - a_{ij}}}}\big(\sum_{j=1}^{n}\frac{a_{ij}p_{ij}}{B_i - a_{ij}}\big)^2\gamma^{\alpha_i}$
%    \item $\frac{B_i\mu_1\sum_jp_{1j}\gamma^{\alpha_1}+\mu_1^2\gamma^{\alpha_i}}{B_i-\mu_1\gamma^{\alpha_i}}$
%\end{itemize}
We can therefore conclude that, 
\begin{equation*}
    |\mathcal{K}_{inf}^{U}(K_{i},C_{i}^{e})-\tilde{\mathcal{K}}_{inf}^{U}(x^*_{i,a},C_{i}^{a})| \leq L_i \gamma^{(\alpha_1+\alpha_i)\land(2\alpha_i)}
\end{equation*}
%where $L_i \gamma^{\min\{\alpha_1+\alpha_i,2\alpha_i\}$ can be computed as
%\[ \]
where $L_i$ can be computed as described above. The upper bounds on $|\mathcal{K}_{inf}^{L}(K_{1i},C_{1i}^{e})-\tilde{\mathcal{K}}_{inf}^{L}(C_{1i}^{a})|$ and $|\mathcal{K}_{inf}^{U}(K_{i},C_{i}^{e})-\tilde{\mathcal{K}}_{inf}^{U}(x^*_{i,a},C_{i}^{a})|$ give us the proof of Theorem \ref{approxMaxMinTheorem}. The upper bound on $|V^{*}(p)-V^{*}_a(p)|$ can be inferred immediately.
%\begin{equation*}
%\begin{aligned}
%L_i &= \frac{\mu_1^2(1+\frac{B_1 \land B_i}{b_1 \lor b_i})}{b_i (b_i-\mu_1 \gamma^{\alpha_i})}+
%\frac{2 \mu_1^2(\mu_1-\mu_i)}{\mu_i(b_i-\mu_1\gamma^{\alpha_i})(b_1 \lor b_i)} +\frac{\frac{\mu_1}{\underset{j}{\max}a_{ij}}}{1-\frac{\mu_1}{\underset{j}{\max}a_{ij}}\gamma^{\alpha_i}}
%    \bigg(\sum_j p_{ij} + \frac{\mu_1}{\underset{j}{\max}a_{ij}}\bigg)
%\end{aligned}
%\end{equation*}
%In the $\textbf{AL}_1$ algorithm as described in \cite{shubhada2019}, it is required to compute the quantity $Z_{k^*}(n)$ defined as
%\[\underset{b \neq k^*}{\min} \underset{x \leq y}{\inf} N_{k^*}(n)\mathcal{K}_{inf}^{L}(\hat{p}_{k^*}(n),x) + N_{b}(n)\mathcal{K}_{inf}^{U}(\hat{p}_b(n),y)\]
%which may be rewritten, by transforming the variables, as
%\[\underset{b \neq k^*}{\min} N_{k^*}(n)\mathcal{K}_{inf}^{L}(K_{k^*b},C_{k^*b}^{(e)}) + N_{b}(n)\mathcal{K}_{inf}^{U}(K_{b},C_{b}^{(e)})\]
%However, solving the approximate problem will only allow us to compute
%\[\tilde{Z}_{k^*}(n):=\underset{b \neq k^*}{\min} N_{k^*}(n)\mathcal{K}_{inf}^{L}(C_{k^*b}^{(a)}) + N_{b}(n)\mathcal{K}_{inf}^{U}(\tilde{\mu}_b^{(a)},C_{b}^{(a)})\]

%As we have just observed, $|Z_{k^*}(n)-\tilde{Z}_{k^*}(n)|$ can be bounded above by a computable expression of the form
%\[A(\hat{\mu},\gamma)=\underset{b \neq k^*}{\max} N_{k^*}(n)L_{k^*b} \gamma^{\min\{2\alpha_k^*,\alpha_k^*+\alpha_b\}} + N_{b}(n)L_b \gamma^{\min\{\alpha_k^*+\alpha_b,2\alpha_b\}}\]

\section{Proof of Theorem 2}

The proof goes through the following steps: first we analyse the behavior of equation (\ref{wt_kl_eq}) and derive some constraints it imposes on the asymptotic behavior of $C_{1i}^{a},C_{i}^{a}$; utilising this, we then analyse the behaviour of equation (\ref{KL_ratio_sum_eq}) and finally get the five asymptotic regimes noted in the Theorem.
\\
\textbf{Step 1}: \underline{Constraint imposed by equation (\ref{wt_kl_eq}) in the asymptotic behaviours of $C_{1i}^{a},C_{i}^{a}$.}
\\
We first observe that $C_{1i}^{a} \to 0, C_i^{a} \to 0$ as $\gamma \to 0$ cannot happen for any $i \in [K] \backslash \{1\}$, because then equation \ref{approx_prob_cond} would imply that $\mu_1=\sum_{j=1}^{n}a_{1j}p_{1j} = \sum_{j=1}^{n}a_{ij}p_{ij}=\mu_i$. \\
\\
Equation (\ref{wt_kl_eq}) from the main body can be re-written (using envelope theorem) as 
\begin{equation*}
\begin{aligned}
 &w_1 \gamma^{\alpha_1}\bigg(\sum_j p_{1j} \log(1+C_{1i}^{a} a_{1j})- C_{1i}^{a}x^*_{i,a} \bigg)+w_i \gamma^{\alpha_i}\bigg(\sum_j p_{ij} \log(1-C_{i}^{a} a_{ij})+C_{i}^{a}x^*_{i,a} \bigg)\\
 =&w_1 \gamma^{\alpha_1}\bigg(\sum_j p_{1j} \log(1+C_{1k}^{a} a_{1j})+ C_{1i}^{a}x^*_{k,a} \bigg)+w_k \gamma^{\alpha_i}\bigg(\sum_j p_{kj} \log(1-C_{i}^{a} a_{kj})-C_{i}^{a}x^*_{k,a} \bigg)
 \end{aligned}
\end{equation*}
for all $i \neq k$, $i,k \neq 1$. Using equation $w_1C_{1i}^{a}\gamma^{\alpha_1}=w_iC_{i}^{a}\gamma^{\alpha_i}$, we can simplify this equation to
\begin{equation}\label{wKLratio}
 \frac{\sum_j p_{1j} \log(1+C_{1i}^{a} a_{1j})+ \frac{C_{1i}^{a}}{C_{i}^{a}}\sum_j p_{ij} \log(1-C_{i}^{a} a_{ij}) }{\sum_j p_{1j} \log(1+C_{1k}^{a} a_{1j})+ \frac{C_{1k}^{a}}{C_{k}^{a}}\sum_j p_{kj} \log(1-C_{k}^{a} a_{kj})} =1
\end{equation}
for all $i \neq k$. We also re-write (\ref{approx_prob_cond}) from the main body as 
\begin{equation}\label{approx_twisted_mean_eq}
    \sum_j \frac{a_{1j}p_{1j}}{1+C_{1i}^{a}a_{1j}}=\sum_j\frac{a_{ij}p_{ij}}{1-C_{i}^{a}a_{ij}}.
\end{equation}
Now, we analyze the asymptotic behavior of equation (\ref{wKLratio}) as $\gamma \to 0$ on a case-by-case basis.
\\
\\
\textbf{Case 1:} $C_{1i}^{a} \to A_{1}^{a}(> 0), C_i \to 0; C_{1k}^{a} \to A_{1k}^{a}(> 0), C_k^{a} \to 0.$\\
\\
Taking the limit in equation (\ref{wKLratio}) we get
\begin{equation*}
\begin{aligned}
1=&\lim_{\gamma \to 0} \frac{\sum_j p_{1j} \log(1+C_{1i}^{a} a_{1j})+ \frac{C_{1i}^{a}}{C_{i}^{a}}\sum_j p_{ij} \log(1-C_{i}^{a} a_{ij}) }{\sum_j p_{1j} \log(1+C_{1k}^{a} a_{1j})+ \frac{C_{1k}^{a}}{C_{k}^{a}}\sum_j p_{kj} \log(1-C_{k}^{a} a_{kj})}\\
=&\frac{\sum_j p_{1j} \log(1+A_{1i}^{a} a_{1j})- A_{1i}^{a}\sum_j a_{ij}p_{ij}}{\sum_j p_{1j} \log(1+A_{1k}^{a} a_{1j})- A_{1k}^{a}\sum_j a_{kj}p_{kj}} \\   
\end{aligned}
\end{equation*}
Taking $\gamma \to 0$ in (\ref{twisted_mean_eq}), we have that 
\begin{equation*}
\begin{aligned}
  &\sum_j\frac{a_{1j}p_{1j}}{1+A_{1i}a_{1j}} = \sum_j a_{ij}p_{ij}\\
  &\sum_j\frac{a_{1j}p_{1j}}{1+A_{1k}a_{1j}} = \sum_j a_{kj}p_{kj} 
\end{aligned}
\end{equation*}
 Hence,
 \begin{equation*}
   \frac{\sum_j f_j(A_{1i})}{\sum_j f_j(A_{1k})} = 1  
 \end{equation*}
where $f_j(x) := p_{1j}[\log(1+a_{1j}x) - \frac{xa_{1j}}{1+xa_{1j}}]$.
It is easy to check that $f$ is a monotonically increasing function, and therefore the above equation must imply $A_{1i}=A_{1k}$. But this also means that $\mu_i = \mu_k$, which is against our assumption of all means being distinct.
%\[\lim_{\gamma \to 0}\frac{w_1\gamma^{\alpha_1}p_{1j}\log(1 + A_{1s}a_1) - \frac{\sum_{j=1}^{n}a_{1j}p_{1j}}{1+C_{1s}^{a}a1}}{w_1\gamma^{\alpha_1}p_{1j}\log(1 + A_{1t}a_1) - \frac{\sum_{j=1}^{n}a_{1j}p_{1j}}{1+C_{1t}^{a}a1}} = 1\]
\\
\\
\textbf{Case 2:} $C_{1i}^{a} \to A_{1i}(> 0), C_i^{a} \to 0, C_{1k}^{a} \to 0, C_k^{a} \to A_k(> 0)$
\\
\\
As in Case 1 we take the asymptotic limit on \ref{wKLratio} to get
\begin{equation*}
\begin{aligned}
1=&\lim_{\gamma \to 0} \frac{\sum_j p_{1j} \log(1+C_{1i}^{a} a_{1j})+ \frac{C_{1i}^{a}}{C_{i}^{a}}\sum_j p_{ij} \log(1-C_{i}^{a} a_{ij}) }{\sum_j p_{1j} \log(1+C_{1k}^{a} a_{1j})+ \frac{C_{1k}^{a}}{C_{k}^{a}}\sum_j p_{kj} \log(1-C_{k}^{a} a_{kj})}\\
=&\lim_{\gamma \to 0} \frac{\sum_j p_{1j} \log(1+A_{1i}^{a} a_{1j})- A_{1i}^{a}\sum_j a_{ij}p_{ij}}{\sum_j p_{1j} \log(1+C_{1k}^{a} a_{1j})- \frac{C_{1k}^{a}}{A_{k}}\sum_j p_{kj}\log(1-A_{k}^{a} a_{kj})} \\   
\end{aligned} 
\end{equation*}
which is impossible, because the denominator of the right hand side approaches $0$ as $\gamma \to 0$.
\\
\\
\textbf{Case 3:} $C_{1i}^{a} \to A_{1i}(> 0), C_i^{a} \to A_i(> 0), C_{1k}^{a} \to 0, C_k^{a} \to A_k(> 0)$
\\
\\
We have that
\begin{equation*}
\begin{aligned}
1=&\lim_{\gamma \to 0} \frac{\sum_j p_{1j} \log(1+C_{1i}^{a} a_{1j})+ \frac{C_{1i}^{a}}{C_{i}^{a}}\sum_j p_{ij} \log(1-C_{i}^{a} a_{ij}) }{\sum_j p_{1j} \log(1+C_{1k}^{a} a_{1j})+ \frac{C_{1k}^{a}}{C_{k}^{a}}\sum_j p_{kj} \log(1-C_{k}^{a} a_{kj})}\\
=&\lim_{\gamma \to 0} \frac{\sum_j p_{1j} \log(1+A_{1i}^{a} a_{1j})+ \frac{A_{1i}^{a}}{A_i^{a}}\sum_j p_{ij}\log(1-A_i^{a}a_{ij})}{\sum_j p_{1j} \log(1+C_{1k}^{a} a_{1j})- \frac{C_{1k}^{a}}{A_{k}}\sum_j p_{kj}\log(1-A_{k}^{a} a_{kj})} \\
\end{aligned}
\end{equation*}
which is impossible, because the denominator of the left hand side approaches $0$ as $\gamma \to 0$. That only leaves us with only the following three possibilities.
\\
\\
\textbf{Case 4:} $C_{1i}^{a} \to A_{1i}(\neq 0), C_i^{a} \to A_i(\neq 0), C_{1k}^{a} \to A_{1k}(\neq 0), C_k^{a} \to A_k(\neq 0)$
\\
\\
From \ref{wKLratio}, we know
\begin{equation*}
    \lim_{\gamma \to 0}\frac{\sum_j p_{1j}\log(1 + C_{1i}^{a}a_{1j}) + \frac{w_i\gamma^{\alpha_i}}{w_1\gamma^{\alpha_1}}\sum_j p_{ij}\log(1 - C_i^{a}a_{ij})}{\sum_j{p_{1j}\log(1 + C_{1k}^{a}a_{1j}) + \frac{w_k\gamma^{\alpha_k}}{w_1\gamma^{\alpha_1}}\sum_jp_{kj}\log(1 - C_k^{a}a_{kj})}}
\end{equation*}
which cannot be ruled out as an impossibility.
\\
\\
\textbf{Case 5:} $C_{1i}^{a} \to 0, C_i{a} \to A_i(\neq 0), C_{1k}^{a} \to 0, C_k^{a} \to A_k(\neq 0)$
\\
\\
Using $C_{1i}^{a}w_1 \gamma^{\alpha_1} = C_i^{a} w_i \gamma^{\alpha_i} = \lambda_i \ \forall i \neq 1$  on \ref{wKLratio} gives us that
\begin{equation*}
\begin{aligned}
&\lim_{\gamma \to 0} \frac{C_{1i}^{a}}{C_{1k}^{a}}\frac{\sum_jp_{1j}\frac{\log(1 + C_{1i}^{a}a_{1j})}{C_{1i}^{a}} + \sum_jp_{ij}\frac{\log(1 - C_i{a}a_{ij})}{C_{i}^{a}}}{\sum_jp_{1j}\frac{\log(1 + C_{1k}^{a}a_{1j})}{C_{1k}^{a}} + \sum_jp_{kj}\frac{\log(1 - C_k^{a}a_{kj})}{C_k^{a}}} \\
= &\lim_{\gamma \to 0} \frac{C_{1i}^{a}}{C_{1k}^{a}} \bigg(\frac{\sum_{j}a_{1j}p_{1j} + \sum_j\frac{p_ij}{A_i}\log(1 - A_ia_{ij})}{\sum_{j}a_{1j}p_{1j} + \sum_j\frac{p_{kj}}{A_k}\log(1 - A_ka_{kj})}\bigg) = 1\\
\Rightarrow &\lim_{\gamma \to 0} \frac{C_{1i}^{a}}{C_{1k}^{a}} = \frac{\sum_ja_{1j}p_{1j} + \sum_j\frac{p_{kj}}{A_k}\log(1 - A_ka_{kj})}{\sum_ja_{1j}p_{1j} + \frac{p_{ij}}{A_i}\log(1 - A_ia_{ij})}\\
\Rightarrow &\lim_{\gamma \to 0} \frac{C_{i}^{a}w_i\gamma^{\alpha_i}}{C_{k}^{a}w_k\gamma^{\alpha_k}} = \bigg(\frac{\sum_ja_{1j}p_{1j} + \sum_j\frac{p_{kj}}{A_k}\log(1 - A_ka_{kj})}{\sum_ja_{1j}p_{1j} + \sum_j\frac{p_{ij}}{A_i}\log(1 - A_ia_{ij})}\bigg)
\end{aligned}
\end{equation*}
\\
\\
\textbf{Case 6:} $C_{1i}^{a} \to A_{1i}(\neq 0), C_i^a \to 0, C_{1k}^{a} \to A_{1k}(\neq 0), C_k^{a} \to A_k(\neq 0)$
\\
\\
Using $C_{1i}^{a}w_1 \gamma^{\alpha_1} = C_i^{a} w_i \gamma^{\alpha_i} = \lambda_i \ \forall i \neq 1$  on \ref{wKLratio} gives us that
\begin{equation*}
\begin{aligned}
&\lim_{\gamma \to 0} \frac{C_{1i}^{a}}{C_{1k}^{a}}\frac{\sum_j p_{1j}\frac{\log(1 + C_{1i}^{a}a_{1j})}{C_{1s}^{a}} + \sum_j p_{ij}\frac{\log(1 - C_i^{a}a_{ij})}{C_{i}^{a}}}{\sum_j p_{1j}\frac{\log(1 + C_{1k}^{a}a_{1j})}{C_{1k}^{a}} + \sum_j p_{kj}\frac{\log(1 - C_k^{a}a_{kj})}{C_k^{a}}}\\
= &\frac{\sum_j p_{1j}\log(1+A_{1i}a_{1j}) -A_{1i}\mu_i}{\sum_j p_{1j}\log(1 + A_{1k}a_{1j}) + \frac{A_{1k}}{A_k}\sum_j p_{kj}\log(1 - A_ka_{kj})} = 1
\end{aligned}
\end{equation*}

\textbf{Step 2}: \underline{Analysis of equation \ref{KL_ratio_sum_eq} of the main body.}
\\
\\
The Envelope Theorem guarantees that equation \ref{KL_ratio_sum_eq} of the main body can be rewritten as 
\begin{equation}\label{KL_ratio_sum_eq_raw}
    \sum_{i=2}^{K}\frac{KL(p_1,\tilde{p}_1^{(i)})}{KL(p_i,\tilde{p}_i)} = \sum_{i=2}^{K}\frac{\gamma^{\alpha_1}(\sum_jp_{1j}\log(1 + C_{1i}^{a}a_{1j}) - C_{1i}^{a}\sum_ja_{1j}\tilde{p}_{1j}^{(i)})}{\gamma^{\alpha_i}(\sum_jp_{ij}\log(1 - C_{i}^{a}a_{ij}) + C_i^{a}\sum_j a_{ij}\tilde{p}_{ij})} = 1
\end{equation}

because $\frac{\partial \mathcal{P}_{i,a}(w^*)}{\partial w_1} = KL(p_1,\tilde{p}_1^{i})$ and $\frac{\partial \mathcal{P}_{i,a}(w^*)}{\partial w_i} = KL(p_i,\tilde{p}_i)$. We shall use this form of equation \ref{KL_ratio_sum_eq} to derive expressions for $w_i, \ i \in [K] \backslash \{1\}$ under the following cases:

\textbf{Case 1: $\alpha_1 \neq \alpha_{max},$} \\
\textbf{Case 2: $\alpha_1 = \alpha_{max} > \alpha_i, \ \forall i \neq 1,$} \\
\textbf{Case 3: $\alpha_1 = \alpha_2 = \alpha_{max} > \alpha_i, \ \forall i \neq 1,2,$}\\
\textbf{Case 4: $\alpha_1 = \alpha_k = \alpha_{max} \geq \alpha_i, \ i \notin \{1,2,k\}$, $\alpha_{max}>\alpha_2$ and $\zeta > 1$}\\
\textbf{Case 5: $\alpha_1 = \alpha_k = \alpha_{max} \geq \alpha_i, \ i \notin \{1,2,k\}$, $\alpha_{max}>\alpha_2$ and $\zeta \leq 1$}
\\
\\
where $\alpha_{max} := \max_i \alpha_i$. We shall first show that \textbf{\textit{Case 1}} is equivalent to $C_{1i}^{a} \to 0, C_i^{a} \to A_i(\neq 0) \forall i \neq 1$
\\
\\
For the ``if" direction, let us assume that $\alpha_1 \geq \alpha_i$ for all $i \in [K] \backslash \{1\}$. In the limit as $\gamma \to 0$, we then get that
\begin{equation*}
     \sum_{i=2}^{K}\frac{KL(p_1,\tilde{p}_1^{(i)})}{KL(p_i,\tilde{p}_i)} = \sum_{i=2}^{K}\frac{\gamma^{\alpha_1}(\sum_jp_{1j}\log(1 + C_{1i}^{a}a_{1j}) - C_{1i}^{a}\sum_ja_{1j}\tilde{p}_{1j}^{(i)})}{\gamma^{\alpha_i}(\sum_jp_{ij}\log(1 - C_{i}^{a}a_{ij}) + C_i^{a}\sum_j a_{ij}\tilde{p}_{ij})} = 1 \Rightarrow 0 = 1
\end{equation*}
which is an absurdity.\\
\\
For the ``only if" direction, let us suppose that for some $k \in [K] \backslash \{1\}$, $\alpha_1<\alpha_k$.
If $C_k^{a} \to 0$, from our analysis in Step 1, we can conclude that $C_{1k}^{a} \to A_{1k}(\neq 0)$. Therefore,
\[\gamma^{\alpha_1-\alpha_k}\frac{(\sum_j p_{1j}\log(1 + C_{1k}^{a}a_{1j}) - C_{1k}^{a}\sum_j a_{1j}\tilde{p}_{1j}^{(k)})}{(\sum_j p_{kj}\log(1 - C_{k}^{a}a_{kj}) + C_k^{a}\sum_j a_{kj}\tilde{p}_{kj})} \to \infty \  \textrm{as} \ \gamma \to 0\]
contradicting $\sum_{i=2}^{K}\frac{\gamma^{\alpha_1}(\sum_j p_{1j}\log(1 + C_{1i}^{a}a_{1j}) - C_{1i}^{a}\sum_j a_{1j}\tilde{p}_{1j}^{(i)})}{\gamma^{\alpha_i}(\sum_j p_{ij}\log(1 + C_{i}^{a}a_{ij}) + C_i^{a}\sum_j a_{ij}\tilde{p}_{ij})} =1$.
\\
\\
From our analysis in Step 1, we can conclude that $C_k^{a} \to A_k(\neq 0)$ implies that $C_{1k}^{a} \to 0$ and consequently, $C_{1i}^{a} \to 0, C_i^{a} \to A_i(\neq 0) \ \forall i \neq 1$.
\\
\\
Let $\alpha_{max} = \alpha_k$. Since $C_{1i}^{a} \to 0, C_i^{a} \to A_i(\neq 0) \ \forall i \neq 1$, we can use Taylor series expansions to write 
\begin{equation*}
\begin{aligned}
&\lim_{\gamma \to 0}\sum_{i=2}^{K}\frac{\gamma^{\alpha_1}(\sum_j p_{1j}\log(1 + C_{1i}^{a}a_{1j}) - C_{1i}^{a}\sum_j a_{1j}\tilde{p}_{1j}^{(i)})}{\gamma^{\alpha_i}(\sum_j p_{ij}\log(1 + C_{i}^{a}a_{ij}) + C_i^{a}\sum_j a_{ij}\tilde{p}_{ij})} =1\\
\Rightarrow &\lim_{\gamma \to 0}\sum_{i=2}^{K} \frac{\frac{(C^a_{1i})^2 \sum_j a_{1j}^2p_{1j}}{2}\gamma^{\alpha_1-\alpha_i}}{(\sum_j p_{ij}\log(1 + C_{i}^{a}a_{ij}) + C_i^{a}\sum_j a_{ij}\tilde{p}_{ij})} = 1\\
\end{aligned}
\end{equation*}
%We know from equation (3) that 
%\[C_{i}^{a} = \frac{w_1\gamma^{\alpha_1}(\sum_{j=1}^{n}a_{1j}p_{1j}-\sum_{j=1}^{n}a_{ij}p_{ij})}{a_1a_i(p_1w_1\gamma^{\alpha_1}+p_iw_i\gamma^{\alpha_i})}\]
%\[ =  \frac{(\sum_{j=1}^{n}a_{1j}p_{1j}-\sum_{j=1}^{n}a_{ij}p_{ij})}{a_1a_i(p_1+p_i\frac{w_i\gamma^{\alpha_i}}{w_1\gamma^{\alpha_1}})}\]

%Since, $C_{1i}^{a} \to 0$ as $\gamma \to 0$, we can say that
%\[\lim_{\gamma \to 0} \frac{\frac{w_i\gamma^{\alpha_i}}{w_1\gamma^{\alpha_1}}}{p_1 + p_i\frac{w_i\gamma^{\alpha_i}}{w_1\gamma^{\alpha_1}}}=0 \ \forall i \neq 1\]
We know that $C_{1i}^{a} = C_i^{a} \frac{w_i\gamma^{\alpha_i}}{w_1\gamma^{\alpha_1}}$. This substitution will give us

%\[C_{1i}^{a} = \frac{w_i \gamma^{\alpha_i}(\sum_{j=1}^{n}a_{1j}p_{1j}-\sum_{j=1}^{n}a_{ij}p_{ij})}{a_1a_i(p_1w_1\gamma^{\alpha_1}+p_iw_i\gamma^{\alpha_i})}\] 
%\[= \frac{\frac{w_i\gamma^{\alpha_i}}{w_1\gamma^{\alpha_1}}(\sum_{j=1}^{n}a_{1j}p_{1j}-\sum_{j=1}^{n}a_{ij}p_{ij})}{\big(p_1 + p_i\frac{w_i\gamma^{\alpha_i}}{w_1\gamma^{\alpha_1}}\big)a_1a_i} = C_i^{a} \frac{w_i\gamma^{\alpha_i}}{w_1\gamma^{\alpha_1}}\]

\begin{equation*}
\begin{aligned}
    &\lim_{\gamma \to 0} \sum_{i=2}^{K} \frac{\frac{(C^a_{i})^2 \sum_j a_{1j}^2p_{1j}}{2}}{(\sum_j p_{ij}\log(1 + C_{i}^{a}a_{ij}) + C_i^{a}\sum_j a_{ij}\tilde{p}_{ij})}\bigg(\frac{w_i}{w_1}\bigg)^2\gamma^{\alpha_i - \alpha_1} = 1 \\
    \Rightarrow &\sum_{i=2}^{K}\lim_{\gamma \to 0} M_i \bigg(\frac{w_i}{w_1}\bigg)^2\gamma^{\alpha_i - \alpha_1} = 1; \textrm{where } M_i := \frac{\frac{(C^a_{i})^2 \sum_j a_{1j}^2p_{1j}}{2}}{(\sum_j p_{ij}\log(1 + C_{i}^{a}a_{ij}) + C_i^{a}\sum_j a_{ij}\tilde{p}_{ij})}
\end{aligned}
\end{equation*}

If $\alpha_i < \alpha_1$, then $\gamma^{\alpha_i - \alpha_1}$ must go to $\infty$ as $\gamma \to 0$. But $M_i$ being bounded and $M_i \big(\frac{w_i}{w_1}\big)^2\gamma^{\alpha_i - \alpha_1} \leq 1$ implies that $\frac{w_i}{w_1} \leq \frac{1}{M_i}\gamma^{\frac{\alpha_1-\alpha_i}{2}}$. Therefore, $M_i \big(\frac{w_i}{w_1}\big)^2\gamma^{\alpha_i - \alpha_1} = M_i(\frac{C_{1i}^{a}}{C_i^{a}})(\frac{w_i}{w_1}) \to 0$ as $\gamma \to 0$.
\\
\\
If $\alpha_1<\alpha_i<\alpha_{max}$, let us suppose $M_i \big(\frac{w_i}{w_1}\big)^2\gamma^{\alpha_i - \alpha_1} = M_i. \frac{C_k^{a}}{C_i^{a}}.\frac{w_k\gamma^{\alpha_k}}{w_i\gamma^{\alpha_i}}.\frac{w_i}{w_1} \to L_i \neq 0$ as $\gamma \to 0$. Let us choose an $\epsilon>0$ such that $L_i-\epsilon>0$. Then for sufficiently small $\gamma$, we get $w_k\gamma^{\alpha_k}>(L_i - \epsilon)\frac{w_1\gamma^{\alpha_1}}{w_i}$. But due to $M_k \big(\frac{w_i}{w_1}\big)^2\gamma^{\alpha_k - \alpha_1} \leq 1$, we must have $(L_i - \epsilon)^2\frac{M_k}{w_i^2}\gamma^{\alpha_1-\alpha_k}<M_k \big(\frac{w_i}{w_1}\big)^2\gamma^{\alpha_k - \alpha_1} \leq 1$. This implies that $w_i > (L_i - \epsilon) \sqrt{M_k} \gamma^{\frac{\alpha_1-\alpha_k}{2}}$. But we cannot have $w_i \to \infty$ as $\gamma \to 0$.
\\
\\
We are thus forced to conclude that only those values of $i$ for which $\alpha_i = \alpha_{max}$ will contribute positively to the sum $\sum_{i=2}^{K}\lim_{\gamma \to 0} M_i \big(\frac{w_i}{w_1}\big)^2\gamma^{\alpha_i - \alpha_1}$.
\\
\\
For $i$ such that $\alpha_i = \alpha_{max}$, as $\gamma \to 0$, let $M_i \big(\frac{w_i}{w_1}\big)^2\gamma^{\alpha_i - \alpha_1} \to L_i \neq 0$. Therefore, in the limit, $w_1 = \sqrt{\frac{M_i}{L_i}}\gamma^{\frac{\alpha_{max}-\alpha_1}{2}}w_i$. This also gives us that as $\gamma \to 0$, for all $s,t$ such that $\alpha_s = \alpha_t = \alpha_{max}$, $\frac{w_s}{w_t} = \sqrt{\frac{M_tL_s}{M_sL_t}}=\sqrt{\frac{L_s}{L_t}}\sqrt{\frac{\sum_jp_{sj}\log(1 + A_{s}a_{sj}) + A_s\sum_j a_{sj}\tilde{p}_{sj}}{\sum_j p_{tj}\log(1 + A_{t}a_{tj}) + A_t\sum_j a_{tj}\tilde{p}_{tj}}}$.
\\
\\
To approximately solve our maxmin problem, we do the following:
\\
\\
Let us fix a $k$ with $\alpha_k = \alpha_{max}$ and set $w_k = 1$. Then, $w_1 = \sqrt{\frac{M_k}{L_k}}\gamma^{\frac{\alpha_{max}-\alpha_1}{2}}$. For the other $i$ such that $\alpha_i<\alpha_{max}$, using $C_i^{a} w_i \gamma^{\alpha_i}=\frac{\sum_j a_{1j}p_{1j} + \sum_j \frac{p_kj}{A_k}\log(1 - A_ka_{kj})}{\sum_j a_{1j}p_{1j} + \sum_j \frac{p_ij}{A_i}\log(1 - A_ia_{ij})}C_k^{a} w_k \gamma^{\alpha_k}$, we get that $w_i=\frac{A_k \sum_j a_{1j}p_{1j} + \sum_j {p_{kj}}\log(1 - A_ka_{kj})}{A_i\sum_j a_{1j}p_{1j} + \sum_j {p_{ij}}\log(1 - A_ia_{ij})}\gamma^{\alpha_k - \alpha_i}$. Note that $A_i$ may be obtained by solving $\mu_1 = \sum_j \frac{a_{ij}p_{ij}}{1-A_ia_{ij}}$. For any other $s$ with $\alpha_s = \alpha_{max}$, we have $w_s = \sqrt{\frac{L_s}{L_k}}\sqrt{\frac{\sum_j p_{sj}\log(1 + A_{s}a_{sj}) + A_s\sum_ja_{sj}\tilde{p}_{sj}}{\sum_j p_{kj}\log(1 + A_{k}a_{kj}) + A_k\sum_ja_{kj}\tilde{p}_{kj}}}$. We use this to evaluate $L_k$ for each ``rarest arm" and finally normalize the weights obtained to lie within [0,1].
\\
\\
\textit{\textbf{Special case:} If there is a unique $k$ with $\alpha_k = \alpha_{max}$, then our analysis tells us that $L_k = 1$. Our approximate solution then becomes the normalized form of $w_1 = \sqrt{M_k}\gamma^{\frac{\alpha_{max}-\alpha_1}{2}}$, $w_i=\frac{A_k \sum_j a_{1j}p_{1j} + \sum_j {p_{kj}}\log(1 - A_ka_{kj})}{A_i\sum_j a_{1j}p_{1j} + \sum_j {p_{ij}}\log(1 - A_ia_{ij})}\gamma^{\alpha_k - \alpha_i}$ for $i \neq k,1$, and $w_k = 1$.}
%\[\Rightarrow \frac{w_i\gamma^{\alpha_i}}{w_1\gamma^{\alpha_1}} = \frac{(\sum_{j=1}^{n}a_{1j}p_{1j}-\sum_{j=1}^{n}a_{ij}p_{ij})}{a_1\sum_{j=1}^{n}a_{ij}p_{ij}C_i^{a}} - \frac{p_1}{p_i}\]
\\
\\
Before starting on rest of the cases, we'll introduce some additional notation that will be of importance. Let us revisit the following function introduced in section \ref{approx_lb_section}.
\[g_i(x) = \bigg\{y : \sum_j \frac{a_{1j}p_{1j}}{1+ya_{1j}} = \sum_j\frac{a_{ij}p_{ij}}{1-xa_{ij}}\bigg\}\]
Clearly, $g_i$ is decreasing in $x$, and $g_k(A_k) = A_{1k}$. We now define $f_i(x)$ as 
\[f_i(x):= \sum_j p_{1j}\log(1+g_i(x)a_{1j}) + \frac{g_i(x)}{x}\sum_j p_{ij}\log(1-xa_{ij})\]
\[f_i(0) := \lim_{x \to 0^{+}} f_i(x)\]
$f_i$ can also be shown to be decreasing in $x$ and increasing in $g_i(x)$. Further, we define $h_i$ as follows.
\[h_i(x) := \frac{\sum_j p_{1j}\log(1 + g_i(x)a_{1j}) - g_i(x)\sum_j a_{1j}\tilde{p}_{1j}^{(i)}}{\sum_j p_{ij}\log(1 - xa_{ij}) + xa_{ij}\tilde{p}_{ij}}\]
It can be showed that $h_i$ is a decreasing function of $x$.
\\
\\
We can now turn our attention to \textbf{\textit{Case 2}}.
\\
\\
Since $\alpha_1=\alpha_{max}$ uniquely, in the sum
\begin{equation*}
    \sum_{i=2}^{K}\lim_{\gamma \to 0}\frac{\gamma^{\alpha_1}(\sum_j p_{1j}\log(1 + C_{1i}^{a}a_{1j}) - C_{1i}^{a}\sum_j a_{1j}\tilde{p}_{1j}^{(i)})}{\gamma^{\alpha_i}(\sum_j p_{ij}\log(1 - C_{i}^{a}a_{ij}) + C_i^{a}\sum_j a_{ij}\tilde{p}_{ij})} = 1,
\end{equation*}

if we do not have $C_k^{a} \to 0$ as $\gamma \to 0$ for some $k$, then the sum on the left becomes equal to 0, which would be a contradiction. We also note that there will be exactly one arm $k$ where $C_k^{a} \to 0$ as $\gamma \to 0$. Let us separately examine this $k^{\textrm{th}}$ summand.
\begin{equation*}
    \lim_{\gamma \to 0}\frac{(\sum_j p_{1j}\log(1 + C_{1k}^{a}a_{1j}) - C_{1k}^{a}\sum_j a_{1j}\tilde{p}_{1j}^{(i)})}{(\sum_j p_{kj}\log(1 - C_{k}^{a}a_{kj}) + C_k^{a}\sum_j a_{kj}\tilde{p}_{kj})}\gamma^{\alpha_1 - \alpha_k} = \lim_{\gamma \to 0}\frac{2(\sum_j p_{1j}\log(1 + C_{1k}^{a}a_{1j}) - C_{1k}^{a}\sum_j a_1\tilde{p}_{1j}^{(k)})}{(C^{a}_k)^2\sum_j a_{kj}^2p_{kj}}\gamma^{\alpha_1 - \alpha_k}
\end{equation*}
Since this term needs to be equal to 1, we must have
\begin{equation*}
    \lim_{\gamma \to 0} \frac{(C^{a}_k)^2}{\gamma^{\alpha_k - \alpha_1}} = \lim_{\gamma \to 0} \frac{(C^{a}_{1k})^2w_k^2\gamma^{\alpha_k - \alpha_1}}{w_1^2} = \frac{\sum_j a_{kj}^2p_{kj}}{2(\sum_j p_{1j}\log(1 + A_{1k}a_{1j}) - A_{1k}\sum_ja_{1j}\tilde{p}_{1j}^{(k)})}
\end{equation*}
This suggests the following form for $w_k$.
\begin{equation*}
    w_k = \frac{1}{A_{1k}}\sqrt{\frac{\sum_j a_{kj}^2p_{kj}}{2(\sum_j p_{1j}\log(1 + A_{1k}a_{1j}) - A_{1k}\sum_ja_{1j}\tilde{p}_{1j}^{(k)})}}w_1\gamma^{\frac{\alpha_1-\alpha_k}{2}} (=: M_k w_1\gamma^{\frac{\alpha_1-\alpha_k}{2}})
\end{equation*}
We shall now establish that $k=2$.
\\
\\
It can be understood that $g_i(x)$ is the factor by which the mean of arm 1 is reduced to $\frac{a_{ij}p_{i}}{1 - xa_i}$. Hence, we conclude that $g_2(0)<...<g_K(0)$, implying that $f_2(0)<...<f_K(0)$.
\\
\\
Observe that (8) can be expressed as (as $A_k=0$)
\begin{equation*}
    f_i(A_i) = f_k(A_k)=f_k(0)
\end{equation*}
If $k>2$, we have $f_2(A_2)<f_2(0)<f_k(0)$, giving us a contradiction. Hence, $k=2$.
\\
\\
Since for every other arm $i$, $C_{1i}^{a} \to A_{1i}(\neq 0)$ and $C_i^{a} \to A_i (\neq 0)$ as $\gamma \to 0$,
\[w_i = \frac{A_{1i}}{A_i}w_1\gamma^{\alpha_1-\alpha_i}\]
where $A_{1i}$ and $A_i$ can be obtained by finding the unique solution to
\begin{equation*}
    \frac{\sum_j p_{1j}\log(1+A_{12}a_{1j}) - A_{12}\sum_ja_{2j}p_{2j}}{\sum_j p_{1j}\log(1 + A_{1i}a_{1j}) + \frac{A_{1i}}{A_i}\sum_j p_{ij}\log(1 - A_ia_{ij})} = 1
\end{equation*}
and
\begin{equation*}
    \sum_j \frac{a_{1j}p_{1j}}{1+A_{1i}a_{1j}} = \sum_j\frac{a_{ij}p_{ij}}{1-A_{i}a_{ij}}
\end{equation*}
the latter equality following from the limit form of the mean equation. We can then use the same normalization technique as in case 1 to find the optimal weights.
\\
\\
For \textbf{\textit{Case 3}}, if $C^{a}_{12} \to A_{12} (\neq 0), C_2^{a} \to 0$ as $\gamma \to 0$, we have 
\begin{equation*}
    \lim_{\gamma \to 0}\frac{(\sum_j p_{1j}\log(1 + C^{a}_{12}a_{1j}) - C^{a}_{12}\sum_j a_{1j}\tilde{p}_{1j}^{(i)})}{(\sum_j p_{2j}\log(1 - C_{2}^{a}a_{2j}) + C_2^{a}\sum_j a_{2j}\tilde{p}_{2j})}\gamma^{\alpha_1 - \alpha_2} = \lim_{\gamma \to 0}\frac{2(\sum_j p_{1j}\log(1 + C^{a}_{12}a_{1j}) - C^{a}_{12}\sum_ja_{1j}\tilde{p}_{1j}^{(2)})}{(C^{a}_2)^2\sum_ja_{2j}^2p_2} = \infty
\end{equation*}
which is impossible, thereby guaranteeing $C^{a}_{12} \to A_{12} (\neq 0), C_2^{a} \to A_2 (\neq 0)$ as $\gamma \to 0$, and $w_2 = \frac{A_{12}}{A_2}w_1$. This will enable us to find $w_2$ as described under case 2.
\\
\\
As already argued in case 2, $C_2^{a} \to A_2 (\neq 0)$ as $\gamma \to 0$ means that $C_i^{a} \to A_i (\neq 0)$ as $\gamma \to 0$ for all $i \neq 2$. Therefore, we must have 
\begin{equation*}
    \lim_{\gamma \to 0}\frac{\sum_j p_{1j}\log(1 + C^{a}_{12}a_{1j}) - C^{a}_{12}\sum_j a_{1j}\tilde{p}_{1j}^{(i)}}{\sum_j p_{2j}\log(1 - C_{2}^{a}a_{2j}) + C_2^{a}\sum_j a_{2j}\tilde{p}_{2j}} = 1
\end{equation*}

where $A_{1i}$ and $A_i$ can be related by 
\begin{equation}\label{A_1i_A_i_relation}
    \frac{\sum_jp_{1j}\log(1 + A_{12}a_{1j}) + \frac{A_{12}}{A_2}\sum_j p_{2j}\log(1 - A_2a_{2j})}{\sum_jp_{1j}\log(1 + A_{1i}a_{1j}) + \frac{A_{1i}}{A_i}\sum_j p_{ij}\log(1 - A_ia_{ij})} = 1
\end{equation}
and using the mean equation,
\begin{equation*}
 \begin{aligned}
      \sum_j \frac{a_{1j}p_{1j}}{1+A_{1i}a_{1j}} = \sum_j\frac{a_{ij}p_{ij}}{1-A_{i}a_{ij}} \ \forall i
 \end{aligned}   
\end{equation*}

Let us denote these by $A_2(A_{12})$ and $A_i(A_{1i})$. Substituting them in \ref{A_1i_A_i_relation} and using the defintions of $f_i$, we have $f_2(A_{12})=f_i(A_{1i})$.
\\
\\
Each of these $f_i$'s is increasing in $A_{1i}$. Thus we have $A_{1i}=f_i^{-1}\circ (f_2(A_{12}))$.
\\
\\
Using this, we can solve for $A_{12}$ from equation \ref{KL_ratio_sum_eq}. We observe that each summand in \ref{KL_ratio_sum_eq} is an increasing function of $A_{1i}$ and hence $A_{12}$. So a simple efficient scheme to find the solution is to first guess an $A_{12}$ and then use a simple bisection method to numerically get $A_{1i}$'s for this guess. The mean equations can be used to get the $A_{i}$'s. Finally, we check if \ref{KL_ratio_sum_eq} is satisfied (upto tolerance). If LHS of \ref{KL_ratio_sum_eq} is greater than 1, then we halve our initial guess, and double the guess if lesser than 1. And repeat the earlier procedure till error tolerance is breached.
\\
\\
It only remains to consider \textbf{\textit{Cases 4 and 5}}. We have already argued under case 3 that $C_j^{a} \to A_j (\neq 0) \ \textrm{as} \ \gamma \to 0$ whenever $\alpha_j = \alpha_{max}$. Corresponding to any such $A_j$, we can write all other $A_i$'s in terms of $A_j$. Let us define $\xi_{ij}(x)$ as follows.
\begin{equation*}
    \xi_{ij}(x):= \bigg\{y : \frac{p_{1j}\log(1 + g_i(y)a_1) + p_{i}\frac{g_i(y)}{y}\log(1 - ya_i)}{p_{1j}\log(1 + g_j(x)a_1) + p_{j}\frac{g_j(x)}{x}\log(1 - ya_i)} = 1 \bigg\}
\end{equation*}
Let us now define $\zeta$ as
\[\zeta := \sum_{\substack{\{k: k \neq 1,\\ \alpha_k = \alpha_{max}\}}} h_k(\xi_{k2}(0)).\]
Equation \ref{KL_ratio_sum_eq} can now be re-written after taking the limit $\gamma \to 0$ as
\begin{equation*}
    \sum_{\substack{\{k: k \neq 1,\\ \alpha_k = \alpha_{max}\}}}h_k(A_k)+\lim_{\gamma \to 0}(\gamma^{\alpha_1-\alpha_2}h_2(C_2^{a}))=1
\end{equation*}

The issue now is to determine if $C_2^{a} \to 0$ as $\gamma \to 0$. We have observed earlier that $h_i(A_i)$ is a decreasing function of $A_i$ and the bijective map $\xi_{i2}$  implies $h_i(A_i)$ is also a decreasing function of $A_2$. Thus, we have 
\[\zeta \geq \sum_{\substack{\{k: k \neq 1,\\ \alpha_k = \alpha_{max}\}}}h_k(A_k).\]

If $\zeta > 1$, then equation \ref{KL_ratio_sum_eq} can be satisfied only when $C_2^{a} \to A_2 \ (>0).$ Because otherwise, the first term itself would contribute more than 1 and we'd have a contradiction. Similarly, when  $\zeta \leq 1$, we must necessarily have $C_2^{a} \to 0.$
\\
\\
In the case when $\zeta >1$, the $A_i,A_{1i}$'s are determined exactly as in 3. If $\zeta \leq 1 $ then $A_i,A_{1i}$'s are determined exactly as in Case 2. This completes our proof.

\section{The meeting point of the means in the approximate problem}\label{approx_mean_meet}

Equation (\ref{wt_kl_eq}) in the main body and the Mean Value Theorem together give us the following chain of equalities/inequalities.

\begin{equation*}
\begin{aligned}
    &\sum_{j=1}^{n}p_{1j}\log(1+C_{1s}a_{1j}) - C_{1s}\tilde{\mu}_s \\
    \leq &\sum_{j=1}^{n}p_{1j}\log(1+C_{1s}a_{1j}) - C_{1s}\sum_{j=1}^{n}\frac{a_{sj}p_{sj}}{1-C_{s}a_{sj}}\\
    \leq &\sum_{j=1}^{n}p_{1j}\log(1+C_{1s}a_{1j})+\frac{C_{1s}}{C_s}\sum_{j=1}^{n}p_{sj}\log(1-C_{s}a_{sj})\\
    = &\sum_{j=1}^{n}p_{1j}\log(1+C_{1t}a_{1j})+\frac{C_{1t}}{C_t}\sum_{j=1}^{n}p_{tj}\log(1-C_{t}a_{tj})\\
    \leq &\sum_{j=1}^{n}p_{1j}\log(1+C_{1t}a_{1j}) - C_{1t}\mu_t \\
\end{aligned}
\end{equation*}
Regrouping terms among the first and last quantities of the above chain gives us that
\[\frac{C_{1t}}{C_{1s}}\mu_t \leq \frac{1}{C_{1s}}\sum_{j=1}^{n}p_{1j}\log\bigg(\frac{1+C_{1t}a_{1j}}{1+C_{1s}a_{1j}}\bigg) + \tilde{\mu}_s\]
Note that $\log\big(\frac{1+C_{1t}a_{1j}}{1+C_{1s}a_{1j}}\big) = \log\big(1 + \frac{(C_{1t}-C_{1s})a_{1j}}{1+C_{1s}a_{1j}}\big) \leq (C_{1t}-C_{1s})\tilde{\mu}_s$, and hence, $\frac{C_{1t}}{C_{1s}}\mu_t \leq \frac{C_{1t}}{C_{1s}}\tilde{\mu}_s$, i.e., $\mu_t \leq \tilde{\mu}_s$.
\\
\\
We conclude from the above analysis that $\forall s,t \neq 2$, $\tilde{\mu}_s \geq \mu_t \Rightarrow \forall s \neq 2$, $\tilde{\mu}_s \geq \mu_2$.

\section{Proof of $\delta$-Correctness of TS(A).}
Let the set of all possible bandit hypotheses be $\mathcal{H}$. We have $\mathcal{H}=\cup_i \mathcal{H}_i$, where $\mathcal{H}_i$ denotes all bandit instances with arm $i$ having the highest mean. Let $\hat{i}(\tau_{\delta})$ denote the recommendation of TS(A) at the stopping time. The error probability for a bandit instance $p$ with arm 1 having the highest mean is given by:
\begin{equation*}
\begin{aligned}
     \mathbb{P}_{p}(\tau_{\delta}<\infty,\hat{i}(\tau_{\delta}) \neq 1) &\leq \mathbb{P}_{p}(\exists t \in \mathbb{N}: \hat{i}(t) \neq 1, Z_{\hat{i}(t)}(t)>\beta(t,\delta))\\
     &=\mathbb{P}_{p}(\exists t \in \mathbb{N}: \exists i \neq 1 A(\hat{p}) \subseteq \mathcal{H}_i)
\end{aligned}
\end{equation*}
where $A(\hat{p}):= \{ p' \in \mathcal{H} | \underset{b \neq \hat{i}(t)}{\min} N_{\hat{i}(t)}(t)\mathcal{K}_{inf}^{L}(\hat{p}_{\hat{i}(t)}(t),\mu_{\hat{i}(t)}')+N_{b}(t)\mathcal{K}_{inf}^{U}(\hat{p}_{b}(t),\mu_{b}') \leq \beta(t,\delta)\}$. This implies:
\begin{equation}\label{delta_correct_proof}
\begin{aligned}
  \mathbb{P}_{p}(\tau_{\delta}<\infty,\hat{i}(\tau_{\delta}) \neq 1) &\leq \mathbb{P}_{p}(\exists t \in \mathbb{N} : p \notin A(\hat{p}))\\ 
  &=\mathbb{P}_{p}(\exists t \in \mathbb{N}: \underset{b \neq \hat{i}(t)}{min} N_{\hat{i}(t)}(t)\mathcal{K}_{inf}^{L}(\hat{p}_{\hat{i}(t)}(t),\mu_{\hat{i}(t)})+N_{b}(t)\mathcal{K}_{inf}^{U}(\hat{p}_{b}(t),\mu_{b}) \geq \beta(t,\delta))\\
  & \leq \sum_{b \neq 1}\mathbb{P}_{p}(\exists t \in \mathbb{N}: N_{\hat{i}(t)}(t)\mathcal{K}_{inf}^{L}(\hat{p}_{\hat{i}(t)}(t),\mu_{\hat{i}(t)})+N_{b}(t)\mathcal{K}_{inf}^{U}(\hat{p}_{b}(t),\mu_{b}) \geq \beta(t,\delta))
\end{aligned}
\end{equation}
Now a concentration inequality for the above quantity was shown in \cite{agrawal2021optimal}.
\begin{proposition}{4.2 in \cite{agrawal2021optimal}} 
\begin{equation*}
    \mathbb{P}\bigg(\exists n \in \mathbb{N}: N_{i}(n)\mathcal{K}_{inf}^{U}(\hat{p}_{i}(t),\mu_{i})+\mathcal{K}_{inf}^{L}(\hat{p}_{j}(t),\mu_{j})\geq x+5 \log(n+1)+2\bigg) \leq e^{-x}.
\end{equation*}
\end{proposition}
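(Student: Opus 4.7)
The plan is to derive this time-uniform deviation bound from the dual representations of $\mathcal{K}^U_{inf}$ and $\mathcal{K}^L_{inf}$ supplied by Lemma 2, combined with a method-of-mixtures argument on the resulting supremum over dual variables. I read the second summand on the left as $N_j(n)\mathcal{K}^L_{inf}(\hat{p}_j(n),\mu_j)$, since this matches the log-likelihood-ratio form used in (\ref{delta_correct_proof}). The first move is to rewrite
\begin{equation*}
N_i(n)\mathcal{K}^U_{inf}(\hat{p}_i(n),\mu_i)=\sup_{\lambda_U \in [0,1/(B-\mu_i)]}\sum_{s=1}^{N_i(n)}\log\bigl(1+\lambda_U(\mu_i-X_{i,s})\bigr),
\end{equation*}
and analogously for the $\mathcal{K}^L_{inf}$ term with $\lambda_L\in[0,1/\mu_j]$, where $X_{i,s}$ denotes the $s$-th sample from arm $i$.

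For any fixed pair $(\lambda_U,\lambda_L)$ in the admissible rectangle, define
\begin{equation*}
M_n(\lambda_U,\lambda_L):=\prod_{s=1}^{N_i(n)}\bigl(1+\lambda_U(\mu_i-X_{i,s})\bigr)\prod_{s=1}^{N_j(n)}\bigl(1-\lambda_L(\mu_j-X_{j,s})\bigr).
\end{equation*}
Since each factor has conditional mean one under the true law, and since the algorithm's choice of which arm to pull at each step is predictable, standard optional-sampling shows $M_n$ is a nonnegative martingale with $\mathbb{E}[M_n]=1$. To control the double supremum uniformly, I would introduce a probability density $f$ on the admissible rectangle and form $\bar M_n:=\int M_n(\lambda_U,\lambda_L)f(\lambda_U,\lambda_L)\,d\lambda_U\,d\lambda_L$. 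By Fubini, $\bar M_n$ remains a nonnegative martingale of mean one, so Ville's maximal inequality gives $\mathbb{P}(\exists n:\log\bar M_n\ge y)\le e^{-y}$, and it suffices to lower-bound $\log\bar M_n$ in terms of the sup.

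For this I would use a Laplace-type expansion of $\log M_n$ around its maximizer $(\lambda_U^*,\lambda_L^*)$. The Hessian of $\log M_n$ at the optimum has entries of order $N_i(n)$ and $N_j(n)$ respectively, so integrating the resulting Gaussian approximation against $f$ and taking the logarithm yields
\begin{equation*}
\log\bar M_n \;\ge\; \sup_{\lambda_U,\lambda_L}\log M_n(\lambda_U,\lambda_L) \;-\; \tfrac{1}{2}\log(N_i(n)+1) \;-\; \tfrac{1}{2}\log(N_j(n)+1) \;-\; C_f,
\end{equation*}
with $C_f$ depending only on $f$ and the rectangle. Bounding $N_i(n),N_j(n)\le n$ and setting $y=x$ then produces a bound of the form $\sup \le x + c_1\log(n+1)+c_2$, from which the stated inequality follows.

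The main obstacle is pinning down the specific constants $5$ and $2$. The naive Laplace step above yields only $\log(n+1)$, and several additional $\log(n+1)$ factors arise from boundary effects and uniformity in the location of the peak. In particular, Lemma 2(a) shows that $\lambda_U^*$ can coincide with the right endpoint $1/(B-\mu_i)$ precisely when the primal optimizer places mass at $B$; there the Laplace Gaussian is one-sided, and $f$ has to be tapered near the boundary with width of order $1/n$, contributing another $\log(n+1)$. Similar care is required for the $\lambda_L$ corner of the parameter space and for absorbing the normalization of $f$ into the additive constant. Carrying out this bookkeeping in the spirit of the Kaufmann--Koolen mixture-martingale arguments, and verifying that the accumulated logarithmic factors collapse to exactly $5\log(n+1)+2$, is where the real technical work sits; the rest of the proof is structurally a routine application of Ville's inequality.
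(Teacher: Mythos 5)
First, a point of reference: the paper does not prove this statement at all --- it is imported verbatim as Proposition 4.2 of \cite{agrawal2021optimal} and used as a black box in the $\delta$-correctness argument (and, as you correctly noticed, the version transcribed in the paper has typos: the factor $N_j(n)$ is missing and $t$ should be $n$). So the comparison here is between your sketch and the proof in the cited reference, which does follow the same broad template you propose: dualize $\mathcal{K}_{inf}$ via Lemma 2, observe that for fixed dual variables the product $\prod_s(1+\lambda_U(\mu_i-X_{i,s}))\cdot\prod_s(1-\lambda_L(\mu_j-X_{j,s}))$ is a nonnegative mean-one martingale (nonnegativity coming from $\lambda_U\le 1/(B-\mu_i)$, $\lambda_L\le 1/\mu_j$ and the support bounds), mix over the dual rectangle, and apply Ville. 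Up to that point your argument is sound.

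The genuine gap is the step you label as Laplace approximation, and it is not a bookkeeping issue --- it is the entire content of the proposition. The asserted inequality $\log\bar M_n\ge\sup_{\lambda}\log M_n-\tfrac12\log(N_i(n)+1)-\tfrac12\log(N_j(n)+1)-C_f$ is false as stated, because the premise that the Hessian of $\log M_n$ at the maximizer is of order $N_i(n)$ (resp.\ $N_j(n)$) fails. The curvature is $\sum_s(\mu_i-X_{i,s})^2/(1+\lambda^*(\mu_i-X_{i,s}))^2$, and when a sample sits at (or near) the support boundary $B$ the corresponding denominator $1-\lambda^*(B-\mu_i)$ can be as small as order $1/N_i(n)$ (this is the best one can extract from the first-order condition), so the peak of $M_n(\cdot)$ can be far narrower than $1/\sqrt{N_i(n)}$ and the mixture integral loses more than $\tfrac12\log N_i(n)$ against the supremum. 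Quantifying exactly how much is lost --- either by bounding the distance of $\lambda^*$ from the positivity boundary via the stationarity conditions, or more cleanly by a chord argument exploiting the concavity of $\lambda\mapsto\log M_n(\lambda)$ (lower-bounding the integral by linear interpolation between $\lambda=0$, where $\log M_n=0$, and $\lambda^*$) --- is precisely where the $5\log(n+1)+2$ threshold comes from. Your sketch defers this entirely, so as written it establishes no explicit threshold of the form $x+c_1\log(n+1)+c_2$, and hence does not prove the stated inequality. To complete the proof you would need to carry out the concavity/chord estimate (or a boundary-aware Laplace bound) and track the resulting logarithmic losses for both dual variables.
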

Substituting this in (\ref{delta_correct_proof}) finishes the proof.
\section{Sample complexity guarantee for TS(A).}
We follow closely the section C.6.2 in \cite{agrawal2020optimal}.
Let $\hat{w}^*(p)$ denote the optimal weights obtained as solutions to the approximate problem described at the beginning of section \ref{approx_lb_section} in the main paper. Lemma 14 in \cite{agrawal2020optimal} then tells us that TS(A) ensures that for all arms $i \in [K]$, $\frac{N_i(lm)}{lm} \overset{a.s.}{\to}\hat{w}^*(p)$ as $l \to \infty$. Recall from section \ref{approx_algo_section} of the main paper that $l$ is the batch index and $m$ is the batch size.
%The results of Appendix C.6.2 in the aforementioned paper prove that 
%\begin{equation}
%    \underset{\delta \to 0}{\lim \sup} \frac{\mathbb{E}_{p}(\tau_{\delta})}{\log(1/\delta)} \leq \frac{1}{V^*(p)-A(\hat{p}(\tau_{\delta}),\gamma)} +  \underset{\delta \to 0}{\lim \sup} \frac{m}{\log(1/\delta)}
%\end{equation}

%Choosing $m = o(\log(1/\delta))$ guarantees us that $\underset{\delta \to 0}{\lim \sup} \frac{m}{\log(1/\delta)} = 0$ in the above inequality. This gives us the upper bound on the sample complexity of our algorithm.
%It is to be noted that the following definitions from Appendix C.6.2 will have to be altered to account for our approximation.\\
Define the following set 
\begin{equation*}
\mathcal{I}_{\epsilon}(p) := B_{\zeta}(p_1) \times ... \times B_{\zeta}(p_K)
\end{equation*}
where
\begin{equation*}
B_{\zeta}(p_i) := \{\tilde{p}_i : d_W(p_i,\tilde{p}_i) \leq \zeta, |\tilde{\mu}_i - \mu_i| \leq \zeta\}.
\end{equation*}
Here, $d_W$ is the Wasserstein-1 metric on probability measures and $\tilde{\mu}_i$ is the mean of $\tilde{p}_i$.\\
%%%%%%%%%%%%%%%%%%%%%%%%%%%%%%%%%%%%%%
Whenever the empirical bandit $\hat{p}(lm) \in \mathcal{I}_{\epsilon}(p)$, arm1 becomes empirically best. For $\epsilon >0$, choose $\zeta : = \zeta(\epsilon)(<\frac{\mu_1-\mu_2}{4})$ such that 
\begin{equation*}
    \underset{i \in [K]}{\max}|\hat{w}^*_i(p')-\hat{w}_i^*(p)| \leq \epsilon
\end{equation*}
for all $p' \in \mathcal{I}_{\epsilon}(p)$. For $ T \in \mathbb{N}$, $T \geq m$, define $\ell_0(T):= \max \{1,\frac{T^{1/4}}{m} \}$, $\ell_1(T):= \max \{1,\frac{T^{3/4}}{m} \}$ and $\ell_2(T):= \lfloor \frac{T}{m} \rfloor$. Define the following set 
\begin{equation*}
    \mathcal{G}_T(\epsilon):= \bigcap^{\ell_2(T)}_{l=\ell_0(T)}\{\hat{p}(lm) \in \mathcal{I}_{\epsilon}(p)\}\bigcap^{\ell_2(T)}_{l=\ell_1(T)}\bigg\{\underset{i \in [K]}{\max}\bigg|\frac{N_i(lm)}{lm}-\hat{w}_i^*(p)\bigg| \leq \epsilon \bigg\}
\end{equation*}
Define the quantities:
\begin{equation*}
\begin{aligned}
    \tilde{g}(p,w) &:= \underset{b \neq 1}{\min} \mathcal{P}_{b}(w)\\
    \tilde{C}_\epsilon (p) &:= \underset{\underset{\{w^{'} : ||w^{'}-\hat{w}^*(p)|| \leq \epsilon \}}{p^{'} \in \mathcal{I}_{\epsilon}(p)}}{inf }\tilde{g}(p^{'},w^{'}).
\end{aligned}
\end{equation*}
where $\mathcal{P}_{b}$ was defined in equation \ref{P_i_def} of the main paper. Now the stopping rule (see section \ref{approx_algo_section} in the main paper) is given by:
\begin{equation*}
    Z_{k^*}(l)>\beta(lm,\delta)
\end{equation*}
where
\begin{equation*}
\begin{aligned}
    Z_{k^*}(l) := &\underset{b \neq k^*}{\min}\underset{x \leq y}{\inf} N_{k^*}(lm)\mathcal{K}_{inf}^{L}(\hat{p}_{k^*}(lm),x) \\
    &+ N_{b}(lm)\mathcal{K}_{inf}^{U}(\hat{p}_b(lm),y).
\end{aligned}
\end{equation*}
where $k^*$ is the empirical best arm and $\beta(t,\delta)$ is the stopping threshold defined as
\begin{equation*}
    \beta(t,\delta) := \log\bigg(\frac{K-1}{\delta}\bigg)+5\log(t+1)+2.
\end{equation*}
%We make two observations:
%\begin{itemize}
%\item $\tilde{Z}_1(n)=n.\tilde{g}(p^{'}(n),\frac{N^(n)}{n}) \geq n.\tilde{C}_\epsilon (p)$ in the set $\mathcal{G}_T(\epsilon)$.
%\item $\tau_{\delta} \leq m \times \inf \{ l \geq \ell_1(T)|\tilde{Z}_1(lm) \geq \beta(lm,\delta)+A(\hat{p}(lm),\gamma) \}$.
%\end{itemize}
%%%%%%%%%%%%%%%%%%%%%%%%%%%%%%%%%%%%%%%%%
Note that in $\mathcal{G}_T(\epsilon)$ we have $Z_{k^*}(l)> lm \times \tilde{C}_\epsilon (p)$. Hence, in $\mathcal{G}_T(\epsilon)$,
\begin{equation*}
\begin{aligned}
    \min\{\tau_\delta, T\} & \leq m.l_1(T) + m\sum_{l=l_1(T)+1}^{l_2(T)}\mathbb{I}\{lm<\tau_\delta\}\\
    & \leq m.l_1(T) + m\sum_{l=l_1(T)+1}^{l_2(T)}\mathbb{I}\{Z_{k^*}(l)<\beta(lm,\delta)\}\\
    & = m.l_1(T) + m\sum_{l=l_1(T)+1}^{l_2(T)}\mathbb{I}\bigg\{l<\frac{\beta(lm,\delta)}{m\tilde{C}_{\epsilon}(p)}\bigg\}\\
    & = m.l_1(T) + \frac{\beta(T,\delta)}{\tilde{C}_{\epsilon}(p)} 
\end{aligned}
\end{equation*}
Define $T_0(\delta,\epsilon) := \inf\bigg\{t: m.l_1(T) + \frac{\beta(t,\delta)}{\tilde{C}_{\epsilon}(p)} \leq t\bigg\}$.
\\
On $\mathcal{G}_T(\epsilon)$, for $T\geq \max\{m,T_0(\delta,\epsilon)\}$, $\min\{\tau_\delta,T\} \leq T$, meaning that for such $T$, $\tau_\delta \leq T$. Hence, choosing $T_1(\delta,\epsilon) := \max\{m,T_0(\delta,\epsilon) + 1$, we get that $\mathcal{G}_{T_1(\delta,\epsilon)}(\epsilon) \subseteq \{\tau_\delta \leq T_1(\delta,\epsilon)\}$. Then, $\min\{\tau_\delta,T_1(\delta,\epsilon)\} \leq T_1(\delta,\epsilon) \Rightarrow \tau_\delta \leq T_1(\delta,\epsilon)$. This allows us to conclude that
\begin{equation*}
\begin{aligned}
    \mathbb{E}(\tau_\delta) & = \sum_{t=1}^{\infty}\mathbb{P}(\tau_\delta\geq t)\\
    & = \sum_{t=1}^{T_1(\delta,\epsilon)}\mathbb{P}(\tau_\delta \geq t) + \sum_{t=T_1(\delta,\epsilon) + 1}^{\infty}\mathbb{P}(\tau_\delta \geq t)\\
    & \leq T_0(\delta,\epsilon) + m + \sum_{t=m+1}^{\infty}\mathbb{P}(\mathcal{G}_T^{C}(\epsilon))
\end{aligned}
\end{equation*}
Now in the same manner as in \cite{agrawal2020optimal} we can show that $\frac{T_0(\delta,\epsilon)}{\log(1/\delta)} \to \frac{1}{\tilde{C}_{\epsilon}(p)}$ as $\delta \to 0$.
We invoke Lemma 32 in \cite{agrawal2020optimal} to observe that $\frac{\sum_{t=m+1}^{\infty}\mathbb{P}(\mathcal{G}_T^{C}(\epsilon))}{\log(1/\delta)} \to 0$. Thus we have for small enough $\epsilon>0$
\begin{equation*}
  \limsup_{\delta \to 0}\frac{\mathbb{E}(\tau_\delta)}{\log(1/\delta)}\leq \frac{1}{\tilde{C}_{\epsilon}(p)}
\end{equation*}
But we observe that by continuity in $\epsilon$, when $\epsilon \to 0$
\begin{equation*}
  \tilde{C}_{\epsilon}(p) \to  \underset{b \neq 1}{\min} \mathcal{P}_{b}(\hat{w}^*). 
\end{equation*}
Note by definition $\underset{b \neq 1}{\min} \mathcal{P}_{b}(\hat{w}^*) \leq V^*(p)$. This inequality shows that TS(A) suffers an increase in sample complexity but this is expected to be small when $\gamma$ is close to zero since then $\hat{w}^*(p) \approx w^*(p).$
\section{Algorithms in Literature}\label{existing_algos}
The following algorithm as per \href{https://jmlr.csail.mit.edu/papers/volume7/evendar06a/evendar06a.pdf }{Even-Dar, Mannor \& Mansour (2006)} provides a simplistic approach towards solving our problem, despite being highly expensive in terms of sampling complexity.
\begin{algorithm}
   \caption{Succesive elimination ($\delta$)}
   %\hspace*{\algorithmicindent} \textbf{Input: Batch size $m$, Arm sampler, Sampler for $\Sigma_K$.}
   \begin{algorithmic}
   \STATE Set $t=1, S=[K]$.\\
   \STATE For all $i \in [K]$, set the empirical means $\hat{\mu}_i^t = 0$.

   \WHILE{|S|>1}
        \STATE Sample every arm once, update $\hat{\mu}_i^t$.
        \STATE Define $\hat{\mu}_{max}^t := \underset{i \in S}{\max}\hat{\mu}_i^t$, $\xi_t := \sqrt{\frac{\log(4Kt^2/\delta)}{t}}$.
        \STATE For all $i \in S$ such that $\hat{\mu}_{max}^t - \hat{\mu}_i^t \geq 2\xi_t$, set $S = S \backslash {i}$.
        \STATE $t = t + 1$
   \ENDWHILE
   \STATE Declare the surviving arm as the best arm.
   \end{algorithmic}
\end{algorithm}
\FloatBarrier

The successive elimination algorithm performs poorly in the rare event setting because a less rare arm which does not have the largest mean becomes likely to survive the elimination and be declared the winner. This is because the less rare arm is likely to produce a nonzero sample, thereby raising its empirical mean, while the more rare arms are yet to turn out any non-zero samples.
\\
\\
\cite{shubhada2019} describes the following algorithm to meet the lower bound on sampling complexity.

\begin{algorithm}
   \caption{Track and Stop}
   %\hspace*{\algorithmicindent} \textbf{Input: Batch size $m$, Arm sampler, Sampler for $\Sigma_K$.}
   \begin{algorithmic}
   \STATE Generate $\lfloor \frac{m}{k} \rfloor$ samples for each arm.\\
   \STATE Set $l=1$. $lm$ denotes the number of samples.
   \STATE Compute the empirical bandit $\hat{\mu}=(\hat{\mu})_{\{a \in [K]}$.
   \STATE Compute the approximate weights $\hat{w}(\hat{\mu})$.
   \STATE Let $k^{*}= \underset{a \in [K]}{\arg \max } \mathbb{E}[\hat{\mu}_a]$. 
   \STATE Compute ${Z}(k^{*},l,\hat{\mu})$, $\beta(lm,\delta)$.
   \WHILE{ $l \leq 2$ or ${Z}(k^{*},l,\hat{\mu}) \geq \beta(lm,\delta)$}
        \STATE Compute $s_a=(\sqrt{(l+1)m}-N_a(lm))^+$.
       \IF {$m \geq \sum_{a}s_a$}
           \STATE Generate $s_a$ many samples for each arm $a$.
           \STATE Generate $(m-\sum_a s_a)^+$ independent samples from $\hat{w}(\hat{\mu})$. Let $Count(a)$ be occurrence of $a$ in these samples. 
           \STATE Generate $Count(a)$ samples from each arm $a$.
           \ELSE
           \STATE Solve the load balancing problem $\text{minimize } \max_a(s_a-\hat{s}_a)$, where $s_a \geq \hat{s}_a \geq 0$. 
           \STATE Generate $\hat{s}_a$ samples from each arm $a$.
       \ENDIF
           \STATE $l = l+1$
           \STATE Update empirical bandit $\hat{\mu}$ with new samples.
           \STATE Update ${Z}(k^{*},l,\hat{\mu})$, $\beta(lm,\delta)$ and $\hat{w}(\hat{\mu})$ .
   \ENDWHILE
   \STATE Declare $k^{*}$ arm as the best arm.
   \end{algorithmic}
\end{algorithm}
\FloatBarrier
\end{document}